\documentclass{article}
\RequirePackage{etex} %
\usepackage[letterpaper,margin=1in]{geometry}
\usepackage{amsmath,amssymb,amsthm,amsfonts,latexsym,bbm,xspace,graphicx,float,mathtools,mathdots}
\usepackage{braket,caption,ellipsis,xcolor,textcomp}
\usepackage{combelow} %
\usepackage{indentfirst}
\usepackage{booktabs}
\usepackage[hidelinks]{hyperref}
\usepackage[capitalize,noabbrev]{cleveref}
\crefname{ineq}{inequality}{inequalities}
\creflabelformat{ineq}{#2{\upshape(#1)}#3}

\usepackage{enumitem}

\newtheorem{theorem}{Theorem}

\usepackage{chngcntr}
\counterwithin{theorem}{section}

\newtheorem*{namedtheorem}{\theoremname}
\newcommand{\theoremname}{testing}

\newtheorem{lemma}[theorem]{Lemma}

\newtheorem{proposition}[theorem]{Proposition}

\theoremstyle{definition}

\newtheorem{remark}[theorem]{Remark}

\newcommand{\ignore}[1]{}

\usepackage{autonum}

\usepackage{algpseudocode}
\usepackage{placeins}
\usepackage{pgfplots}
\usepackage{graphicx}
\usepackage{subcaption}
\pgfplotsset{compat=newest}
\pgfplotsset{scaled y ticks=false}
\usepgfplotslibrary{groupplots}
\usepgfplotslibrary{dateplot}
\tikzstyle{every node}=[font=\small]
\pgfplotsset{
    yticklabel style={/pgf/number format/fixed},  
}

\pgfplotsset{compat=1.11,
 /pgfplots/ybar legend/.style={
 /pgfplots/legend image code/.code={
 \draw[##1,/tikz/.cd,yshift=-0.25em]
 (0cm,0cm) rectangle (3pt,0.8em);},
 },
}

\title{Efficient Sequential Evaluation of Large Language Models}
\usepackage{authblk}
\author{Chia-Yu Hsu}
\author{Shubhanshu Shekhar}
\affil{EECS Department, University of Michigan}
\affil{\texttt{\{chiayuh, shubhan\}@umich.edu}}

\begin{document}
\date{}
\maketitle

\begin{abstract}
We study the problem of sequentially evaluating a new large language model (LLM) on a fixed question set using historical performance data from prior LLMs. Our goal is to construct a confidence sequence (CS) for the model's capability on this question set and to design active querying rules that shrink the CS width as quickly as possible. For CS construction, we invert a family of test supermartingales and focus on two representative approaches: a reverse information projection (RIPr)-based approach and a testing-by-betting-based approach. We first study these approaches under an oracle setting, and demonstrate the oracle optimality of the RIPr-based construction. We then propose a growth-oriented querying rule that aims to maximize the worst-case one-step expected log-increment over the endpoints of the current CS.
In practice, we build these test supermartingales and the querying rule on predictions of question-level correctness learned from historical data. We then analyze the shrinkage behavior of the resulting CSs and identify two key factors that slow the shrinkage rate of CSs: accumulated prediction mismatch and the spikiness of the querying distribution. Finally, motivated by this analysis, we propose several mixture querying rules that combine growth-oriented querying, prediction refinement, and uniform exploration, trying to mitigate the effects that slow the shrinkage rate. We provide experiments comparing different querying rules for the RIPr-based and testing-by-betting-based CSs across several synthetic testing datasets. Interestingly, we observe that the simplest querying rule, uniform sampling, can sometimes outperform more adaptive querying rules for both methods.

\begin{center}
{\small\textit{This is a preliminary version; feedback is welcome.}}
\end{center}

\end{abstract}

\section{Introduction}\label{sec:intro}
Evaluating a newly developed large language model (LLM) on a benchmark consisting of a set of questions is a standard way to report the model's capability \cite{guo2023}. In this setting, the capability is often summarized by the average correctness rate of the LLM over the benchmark. However, modern benchmarks may contain thousands of questions or more (see \cite[Section 1]{wuCandes2026efficientevaluationllmperformance} for a review), making exhaustive or model-based evaluation time-consuming and costly. This naturally raises the following question: how can we query as few questions as possible while still obtaining an estimate of the average correctness with acceptable uncertainty?%

A recent work by \cite{wuCandes2026efficientevaluationllmperformance} studies this setting by leveraging historical data, namely the correctness records of previous LLMs on the same benchmark question set. These records are used to predict how likely the new model is to answer each question correctly, which in turn guides the evaluator to \emph{actively and adaptively} query informative questions. Specifically, they construct an AIPW-type mean accuracy estimator \cite{zrnic2026activestatisticalinference} with a querying rule that aims to both reduce the estimator variance and refine the prediction model. They analyze the resulting confidence interval (CI) and show two key properties. First, the CI is model-free in the sense that the validity of the CI does not rely on the accuracy of the prediction model. Second, the CI width decays asymptotically at the rate $O(1/\sqrt n)$, where $n$ is the total number of queried questions. They also empirically demonstrate that their proposed estimator outperforms several baselines, including uniform sampling.

However, in cost-sensitive LLM evaluation, the evaluator may wish to monitor the uncertainty after each query and stop once the estimate is sufficiently accurate, while still maintaining a valid uncertainty guarantee. Such sequential monitoring allows the number of queries to adapt to the strength of the evidence revealed by the data. A fixed-time confidence interval is not designed for this type of data-dependent stopping: its coverage guarantee holds only at a pre-specified number of queries and may fail when the stopping time is chosen adaptively based on the observed data. A more suitable tool is a \emph{confidence sequence} (CS) \cite{Darling1967ConfidenceSequences,WaudbySmith2024BoundedBetting,Howard2021ConfidenceSequences}, namely a sequence of confidence sets that maintains coverage uniformly over time, often referred to as anytime-valid coverage. Therefore, it is natural to ask the confidence-sequence counterpart of \cite{wuCandes2026efficientevaluationllmperformance}: Leveraging predictions learned from historical data, can we construct a model-free CS with an efficient querying rule? How fast does the CS width shrink?

\paragraph{Problem description.}
Let $N$ denote the total number of benchmark questions and write $[N]:=\{1,\ldots,N\}$ and $\Delta_N$ be the probability simplex over $[N]$. For each question $i\in[N]$, let $Z_i\in\{0,1\}$ indicate whether the new LLM answers question $i$ correctly. For analysis, we assume that each $Z_i\sim \mathrm{Ber}(p_i^\star)$ independently across question $i$, where $p_i^\star\in(0,1)$ for all $i\in[N]$, and denote the true parameter vector by $p^\star=(p_1^\star,\ldots,p_N^\star)$, and the target parameter $\theta^\star := \frac1N\sum_{i=1}^N p_i^\star$. We further assume access to historical correctness data from previous LLMs evaluated on the same question bank. %
In this paper, we use the Bayesian factor model proposed by \cite{wuCandes2026efficientevaluationllmperformance} to extract question-level predictions of the new model's correctness probabilities from historical data. The historical data consist of the responses of $M$ existing models on the same $N$-question benchmark. For simplicity, we assume that these historical responses are fully observed. We then focus on how to use the resulting predictions to jointly design the querying rule and an anytime-valid confidence sequence.

We aim to jointly design an active and adaptive querying rule $\{q_t\}_{t\ge1}$, where $q_t\in\Delta_N$, and a CS $\{C_t\}_{t\ge1}$, where $C_t\subseteq[0,1]$, for estimating $\theta^\star$. The CS is required to satisfy the time-uniform coverage guarantee
$\mathbb{P}\left(\exists\,t\ge1:\theta^\star\notin C_t\right)\leq \alpha$,
where $\alpha\in(0,1)$ is a pre-specified error tolerance. We refer to this as a level-$\alpha$ time-uniform coverage guarantee. Equivalently, for any stopping time $\tau\in\mathbb{N}$, the stopped confidence set remains valid: $\mathbb{P}\left(\theta^\star\notin C_\tau\right)\leq\alpha$.
Moreover, we seek querying rules for which the confidence sets shrink rapidly over time. Since $C_t$ need not be an interval, we define its width by
$|C_t|:=\sup C_t-\inf C_t$. %

\paragraph{Contributions.}
Our contributions are summarized as follows. First, in an oracle setting where the true question-level correctness probabilities are known, we study the general CS construction based on test supermartingales in \cite[Theorem~1]{WaudbySmith2024BoundedBetting} under active querying. We observe that there is generally no single querying rule that is optimal for the growth of all test supermartingales simultaneously. This motivates a natural surrogate: a growth-oriented maxmin criterion for choosing the querying distribution. We then instantiate this framework using two well-known e-value constructions, reverse information projection (RIPr) and testing-by-betting, to construct the corresponding test supermartingales and CSs under active sampling. The RIPr-based construction enjoys oracle growth optimality. Moreover, for the testing-by-betting-based approach, we propose a computationally lighter variant based on a quadratic lower bound, for which the resulting growth-oriented sampling rule coincides with the variance-reduction sampling rule in \cite{wuCandes2026efficientevaluationllmperformance}.   %

Second, we incorporate the predictions of question-level correctness learned from the historical data in the constructed RIPr-based and testing-by-betting-based approaches to replace the true correctness, resulting in fully data-dependent CSs and querying rules. Then, by analyzing the shrinkage rate of the plug-in CS width in this practical setting, we identify two major factors that can slow down CS shrinkage: accumulated prediction mismatch, quantified by a KL-divergence-type term for the RIPr-based approach and an MSE-type term for the testing-by-betting-based approach, as well as the spikiness of the querying distribution. Our analysis further suggests that prediction mismatch has a more pronounced effect on the RIPr-based approach, whereas the spikiness of the querying distribution affects the testing-by-betting-based approach more directly. 

Finally, we design several mixture querying rules that aim to balance growth-oriented querying, prediction refinement, and the reduction of spikiness in the querying distribution. We empirically compare these rules for both the RIPr-based and testing-by-betting-based CSs. Interestingly, we find that the pure uniform querying rule, despite its simplicity, can be a strong and competitive baseline on some datasets.

\paragraph{Paper organization.} The remainder of this paper is organized as follows. The literature review is provided in Section~\ref{sec:related_work}. Section~\ref{sec:prelim} introduces the test-supermartingale construction of CSs under active querying and the maxmin criterion for active selection. Section~\ref{sec:oracle} instantiates this framework for the RIPr-based and testing-by-betting-based approaches under the oracle setting. Section~\ref{sec:practical} develops the practical plug-in versions and analyzes the shrinkage behavior of the resulting CSs. Section~\ref{sec:experiment} empirically examines the oracle optimality of the RIPr-based approach, proposes several mixture querying rules for the practical setting, and compares their performance. Section~\ref{sec:conclusion} concludes the paper.

\section{Related work}\label{sec:related_work}
We review three lines of related work. First, we discuss standard confidence-sequence constructions and the e-process tools that motivate our RIPr-based and testing-by-betting methods. Second, we review anytime-valid inference under adaptive sampling. %
Third, we discuss prediction-assisted active statistical inference.

\paragraph{Confidence sequences and test supermartingale constructions.}
Standard confidence sequences for non-adaptive data, such as i.i.d. observations, have been widely studied \cite{Darling1967ConfidenceSequences,Howard2021ConfidenceSequences,WaudbySmith2024BoundedBetting,chugg2025timeuniformconfidencespheresmeans}. Classical concentration-based CSs provide explicit width guarantees but can be conservative. Testing-by-betting CSs offer a more adaptive alternative and have been shown to improve upon concentration-based constructions both theoretically and empirically by \cite[Section 4]{WaudbySmith2024BoundedBetting} and \cite{shekhar2023}, although their widths are typically non-explicit. From the perspective of maximizing the one-step growth rate, RIPr \cite{grunwald2023,ramdas2023gametheoreticstatisticssafeanytimevalid} provides a growth-rate-optimal factor when the relevant projection can be computed, and RIPr-based CSs have also been studied \cite{turner2022exactanytimevalidconfidenceintervals}. These works motivate our use of RIPr and testing by betting as complementary CS constructions: RIPr provides an optimal growth-rate (in some sense) benchmark, while testing by betting offers a practically used alternative. However, they do not directly address our problem, where we also need to design an efficient adaptive querying procedure. %

\paragraph{Anytime-valid inference under adaptive sampling.}
Beyond the standard i.i.d. setting, confidence sequences have also been studied under adaptive data collection. The closest work to ours is \cite{shekhar23aRiskLimitingFinancialAudits}, which studies weighted mean estimation under adaptive, data-dependent sampling and explicitly designs without-replacement sampling distributions to reduce the width of valid betting-based CSs. Their method, however, relies on pre-specified weights for the data points, whereas in our setting, we consider estimating the average correctness without any prior weight information. Therefore, their proposed sampling approach cannot be applied to our problem.

Other related work on online and adaptive experiments studies anytime-valid inference under bandit-style treatment allocation \cite{ham2023designbased}, adaptive average-treatment-effect inference \cite{cook2024semiparametricefficientinferenceadaptive}, contextual-bandit off-policy inference \cite{waudbysmith2024anytimevalidoffpolicyinferencecontextual}, and best-model identification for LLMs \cite{tolochinsky2026validbestmodelidentificationllm}. These works ensure validity under adaptive sampling, but do not focus on designing active queries to shrink a CS for the average correctness of a single LLM over a fixed benchmark.

\paragraph{Prediction-assisted active statistical inference.}
\cite{zrnic2026activestatisticalinference} and related work on prediction-powered inference \cite{angelopoulos2023predictionpoweredinference} (see \cite[Section 3]{zrnic2026activestatisticalinference} for more references) consider a stream of test samples with their unobserved labels, and the goal is to estimate ``a parameter of the expected loss" of a supervised learning model by using model predictions together with a limited number of queried labels and provide a CI for their estimator with an asymptotic width. Moreover, their estimator can also be translated to a CS by the betting approach \cite[Theorem 3]{waudbysmith2024anytimevalidoffpolicyinferencecontextual}. They also utilize the AIPW estimator, but the key difference with our paper is in the notion of activeness. In their setting, samples arrive in a fixed natural order, and the active decision is whether to query the true label of the current sample. In our setting, the evaluator actively chooses which question to query from a fixed benchmark. This distinction matters because the natural order of the stream may be statistically inefficient for estimating the average correctness over the whole question bank; for example, if the stream begins with many easy questions, the CS may shrink slowly for the target population mean. \cite{sfyraki2026} revisits this problem, providing a non-asymptotic analysis of an anytime-valid width for the estimator proposed by \cite{zrnic2026activestatisticalinference} and argues that uniform sampling may be competitive. We also discover a similar result in our active querying setting. 

\begin{remark}
    Due to space limitations, we do not provide a separate review of the broader LLM evaluation literature. Our setting is closely related to \cite{wuCandes2026efficientevaluationllmperformance}, but focuses on providing anytime-valid uncertainty guarantees for sequential evaluation. Thus, our position within the LLM evaluation literature is similar to theirs. Readers interested in a broader discussion of LLM evaluation are referred to \cite[Section 2]{wuCandes2026efficientevaluationllmperformance}.
\end{remark}

\section{Test-supermartingale-based CSs with active selection}\label{sec:prelim}

We begin by introducing the general test-supermartingale-based construction of CSs under our active querying setting.
\paragraph{Setting.} We consider a predictable querying rule $\{q_t\}_{t\ge1}$, where $q_t\in\Delta_N$ is $\mathcal F_{t-1}$-measurable. Here, $\mathcal F_t:=\sigma(\{(I_s,Z_{I_s})\}_{s=1}^t)$ is the filtration generated by the queried questions and their observed correctness outcomes up to time $t$, $I_s\in[N]$ denotes the question queried at round $s$, and $\mathcal F_0:=\{\emptyset,\Omega\}$. Conditioned on $\mathcal F_{t-1}$, we let $Q_{p^\star,q_t}$ denote the joint distribution of one active-query observation $(I_t,Z_{I_t})$, defined by $Q_{p^\star,q_t}(i,z):=q_t(i)(p_i^\star)^z(1-p_i^\star)^{1-z}$ for all $i\in[N]$ and $z\in\{0,1\}$.

We mainly follow the construction proposed by \cite[Theorem 1]{WaudbySmith2024BoundedBetting} to build a valid CS through test supermartingales: A level-$\alpha$ CS can be universally constructed by a family of \emph{test supermartingales}, denoted as $\{W_t(m)\}_{t\ge0}$ for all $m\in[0,1]$. Intuitively, the test supermartingale for the candidate value $m$ aims to accumulate evidence against this value if $m\neq \theta^\star$, while remaining low if $m=\theta^\star$. The CS is then obtained by retaining the candidate values whose test supermartingales have not crossed the rejection threshold $1/\alpha$. Technically, for each candidate value $m\in[0,1]$, define the composite null class $\mathcal P_m
:= \left\{
p\in[0,1]^N:
\frac1N\sum_{i=1}^N p_i=m
\right\}$. 
For each $m$, if there are one-step factors $\{E_t(m)\}_{t\ge1}$ satisfying the \emph{conditional e-value} property: $\forall\,t\ge1$, $E_t(m)\ge0$, $\mathbb{E}_{(I,Z_I)\sim Q_{p,q_t}}\left[
E_t(m)\mid\mathcal F_{t-1}
\right]\le1$, $\forall p\in\mathcal P_m$.
Then, we construct the process $\{W_t(m)\}_{t\ge0}$, $W_t(m):=\prod_{s=1}^t E_s(m)$ with $W_0(m)=1$, which satisfies $W_t(m)\ge0$ and $ \mathbb{E}_{(I,Z_I)\sim Q_{p,q_t}}
\left[
W_t(m)\mid\mathcal F_{t-1}
\right]\leq W_{t-1}(m)$ for each $t\ge1$ and $m\in[0,1]$ for all $p\in\mathcal{P}_m$ (see Lemma~\ref{lem:product_of_e-val_is_e-process} for the proof). This process is defined as a \emph{test supermartingale} for the composite null $\mathcal P_m$. Consequently, by \cite[Theorem 1]{WaudbySmith2024BoundedBetting} (based on Ville's inequality), a level-$\alpha$ confidence sequence is given by $C_t
:=\{
m\in[0,1]:
W_t(m)<1/\alpha\}$, $\forall\,t\ge1$.
Therefore, under this construction, the major goal is to design
\begin{enumerate}
    \item The conditional e-value $\{E_t(m)\}_{t\ge1}$ under the null class $\mathcal{P}_m$ for each $m\in[0,1]$.
    \item The (adaptive) predictable querying policy $\{q_t\}_{t\ge1}$ for making $E_t(m)$ larger for false candidates $m\neq\theta^\star$. 
\end{enumerate}
Specifically, we consider maximizing the expected log-increment
\[
G_t(q_t,E_t(m),p^\star)
:=
\mathbb{E}_{(I,Z_I)\sim Q_{p^\star,q_t}}
\left[
\log E_t(m)\mid \mathcal F_{t-1}
\right],
\qquad m\neq \theta^\star,
\]
which is also known as the Kelly criterion \cite{kelly1956new,BREIMAN11}, as a principle for designing the querying rule. Under this criterion, we observe that the growth-optimal querying distribution generally depends on the candidate value $m$. Since our goal is to shrink the width of the confidence sequence, a natural surrogate is to focus on the two boundary points of the current CS. Therefore, our design is to choose $q_t$ to maximize the lower evidence-growth rate between these two endpoints:
\begin{equation}
q_t\in
\arg\max_{q\in\Delta_N}
\min\left\{
G_t(q,E_t(U_{t-1}),p^\star),
G_t(q,E_t(L_{t-1}),p^\star)
\right\},\label{eq:oracle_maxmin_sampling}
\end{equation}
where $L_t:=\inf C_t$ and $U_t:=\sup C_t$. We refer to this maxmin selection criterion as \emph{growth-oriented} querying. We remark that, in Section~\ref{sec:exp_mixture}, we adapt this growth-oriented rule to the practical setting by replacing $p^\star$ with its prediction and mixing the resulting rule with other querying strategies.

In the next section, we first assume $p^\star$ is known and design the conditional e-values for the RIPr-based and testing-by-betting-based approaches. This oracle setting simplifies the discussion and clarifies how the question-level correctness vector is used in the construction, which will later be replaced by its prediction in the practical setting in Section~\ref{sec:prac_e-values}.

\section{Oracle conditional e-value construction}\label{sec:oracle}

In this section, we follow the setting defined in Section~\ref{sec:prelim} and assume $p^\star$ is known. %
\subsection{RIPr-based approach}\label{sec:ripr_based_oracle_design}

The basic idea of RIPr is to identify the reverse information projection (or RIPr point) of the alternative distribution onto the composite null class and use the resulting density ratio to construct a valid e-value. RIPr point is the least favorable distribution in this null class, namely the one closest to the alternative in KL divergence. Generally, the likelihood ratio between the reference alternative and this RIPr point is then used as the e-value. This choice is principled because, among all valid e-values for the composite null, it maximizes the expected log-increment under the chosen alternative (see \cite[Chapter 6]{Ramdas_2025} for more details).

Therefore, in the oracle setting, it is natural to use the true Bernoulli parameter vector $p^\star$ as the reference alternative, since it represents the ideal distribution against which evidence for rejecting false candidates should be accumulated.
 Then, we define the conditional version of the RIPr point under a sampling rule $\{q_t\}_{t\ge1}$ for $m\in[0,1]$ as, for each $t\ge1$, given $\mathcal{F}_{t-1}$,
\[
p_{q_t}^{\dagger}(m;p^\star)
\in
\arg\min_{p\in\mathcal P_m}
D_{\mathrm{KL}}\left(Q_{p^\star,q_t}\,\Vert\,Q_{p,q_t}\right)=
\arg\min_{p\in\mathcal P_m}
\sum_{i=1}^N q_t(i)d_{\mathrm{kl}}(p_i^\star,p_i),
\]
where $d_{\mathrm{kl}}(\cdot)$ denotes the binary KL divergence. %
Through some mild conditions, $p^\dagger_{q_t}(m;p^\star)$ can be solved by a 1-dimensional root-finding algorithm such as the bisection method (see Proposition~\ref{prop:solve_ripr_point}). Then, we have the following proposition, which is proved in Appendix~\ref{proof:proof_of_exp_log-wealth_opt_fix_query_strategy} for completeness (It has been proved at a general level by \cite{larsson2025}).
\begin{proposition}[RIPr conditional e-value under a fixed active querying strategy]\label{prop:ripr_validity}
Fix any candidate value $m\in(0,1)$ and any predictable querying rule $q_t(i)>0$ for all $t\ge1$ and $i\in[N]$. Suppose $p^\star_i\in(0,1)$ for all $i\in[N]$. 
The oracle RIPr one-step factor $E_t^{\mathrm{RIPr}}(m, p^\star, q_t)
:=
\frac{
(p_{I_t}^\star)^{Z_{I_t}}(1-p_{I_t}^\star)^{1-Z_{I_t}}
}{
(p_{q_t}^\dagger(m;p^\star)_{I_t})^{Z_{I_t}}(1-p_{q_t}^\dagger(m;p^\star)_{I_t})^{1-Z_{I_t}}
}$ is a conditional e-value for $\mathcal{P}_m$ for each $m\in(0,1)$, where $p_{q_t}^\dagger(m;p^\star)_{I_t}$ is the $I_t$-th element of $p_{q_t}^\dagger(m;p^\star)$. For $m=0$ or $m=1$; see Appendix~\ref{proof:proof_of_exp_log-wealth_opt_fix_query_strategy}.
\end{proposition}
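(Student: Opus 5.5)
Fix $t$ and condition on $\mathcal F_{t-1}$, so that $q:=q_t$ is a fixed vector with $q(i)>0$ for all $i$. Write $r:=p^\dagger_q(m;p^\star)$. Since $q$ is predictable, $r$ is also $\mathcal F_{t-1}$-measurable. The factor is clearly nonnegative. For any $p\in\mathcal P_m$ we need
\[
\sum_{i=1}^N q(i)\left[p_i\frac{p_i^\star}{r_i}+(1-p_i)\frac{1-p_i^\star}{1-r_i}\right]\le 1 .
\]
The left side is affine in $p$, so the task reduces to reading off the KKT conditions of the convex program that defines $r$. First I must check that $r_i\in(0,1)$, so that the ratio is finite. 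If $r_i\in\{0,1\}$ for some $i$, then $d_{\mathrm{kl}}(p_i^\star,r_i)=\infty$ because $p_i^\star\in(0,1)$ and $q(i)>0$. For $m\in(0,1)$, however, the constant vector $p\equiv m$ lies in $\mathcal P_m$ and has finite objective. Hence the minimiser has all coordinates in $(0,1)$, and it is unique because the objective is strictly convex in $p$.

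Next come the KKT conditions. The objective $f(p)=\sum_i q(i)d_{\mathrm{kl}}(p_i^\star,p_i)$ is convex and differentiable on $(0,1)^N$. The feasible set is the polytope $\{p\in[0,1]^N:\sum_i p_i=Nm\}$, and $r$ lies in its relative interior with respect to the box constraints. The only active constraint is therefore the equality, so there is a multiplier $\lambda$ with
\[
\partial_{p_i}f(r)=q(i)\left(-\frac{p_i^\star}{r_i}+\frac{1-p_i^\star}{1-r_i}\right)=\lambda\quad\text{for all } i.
\]
Summing $r_i$ times these identities gives
\[
\sum_i r_i\,\partial_i f(r)=\lambda Nm .
\]

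Now take any $p\in\mathcal P_m$. A direct computation shows that
\[
\mathbb E_{Q_{p,q}}[E_t]
=\sum_i q(i)\left[\frac{1-p_i^\star}{1-r_i}+p_i\left(\frac{p_i^\star}{r_i}-\frac{1-p_i^\star}{1-r_i}\right)\right]
=\sum_i q(i)\frac{1-p_i^\star}{1-r_i}-\lambda\sum_i p_i .
\]
Because $\sum_i p_i=Nm$ for every $p\in\mathcal P_m$, this expectation does not depend on $p$. It therefore equals its value at $p=r$, which is
\[
\sum_i q(i)\left[r_i\frac{p_i^\star}{r_i}+(1-r_i)\frac{1-p_i^\star}{1-r_i}\right]=\sum_i q(i)=1 .
\]
So the conditional expectation is exactly $1$ on all of $\mathcal P_m$, which establishes the conditional e-value property.

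The main obstacle is the first step: justifying that $r$ is interior and that the multiplier exists. Once that is in place, the algebra is just the first-order optimality condition. A cleaner alternative, if needed, is the variational inequality $\langle\nabla f(r),p-r\rangle\ge 0$ for all feasible $p$, applied on the convex set $\mathcal P_m$. Since $\mathbb E_{Q_{p,q}}[E_t]$ is affine in $p$ with gradient $-\nabla f(r)$, this gives
\[
\mathbb E_{Q_{p,q}}[E_t]=1-\langle\nabla f(r),p-r\rangle\le 1
\]
directly. This route avoids any need for interiority of the multiplier argument; only $r_i\in(0,1)$ is required, so that the gradient exists.

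For $m\in\{0,1\}$, the class $\mathcal P_m$ is the single point $p\equiv 0$ or $p\equiv 1$, and the factor is handled separately, as the statement indicates.
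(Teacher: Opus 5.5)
Your proposal is correct and follows essentially the same route as the paper. Both arguments show that the RIPr point is interior, apply the stationarity condition of the KL minimisation with a single multiplier for the mean constraint, and use that the expectation is affine in $p$, so it equals $1$ on all of $\mathcal P_m$. Your interiority argument, which exhibits the feasible point $p\equiv m$ with finite objective, is slightly more explicit than the paper's, and the variational-inequality variant you mention is a valid shortcut to the same conclusion.
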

Then, by the fact of the RIPr e-value \cite[Chapter 6]{Ramdas_2025}, we know that the expected log-increment of the RIPr e-value satisfies,
\begin{equation}
G_t(q_t,E_t^{\mathrm{RIPr}}(m, p^\star, q_t),p^\star)
\geq
G_t(q_t,E_t(m),p^\star)
\qquad\forall E_t(m)\in\mathcal E_m(q_t)\qquad \text{a.s.},  \label{eq:ripr_oracle_optimality}  
\end{equation}
where for each $t\geq1$, $\mathcal E_m(q_t)$ denotes the class of conditional e-values for $\mathcal P_m$ with the predictable sampling distribution $\{q_t\}_{t\ge1}$.
This fact clarifies the role of the RIPr approach: under any fixed predictable querying rule (with a positive lower bound of the sampling probability), RIPr maximizes the oracle expected log-increment among valid conditional e-values.

However, constructing the RIPr e-value can be computationally expensive, since it requires solving a root-finding problem for each candidate value. We next introduce the testing-by-betting approach, which yields a much simpler conditional e-value construction in some cases. %

\subsection{Testing-by-betting-based approach}\label{sec:testing_by_betting_based_oracle_design}
The idea of the betting-based approach \cite[Section~4]{WaudbySmith2024BoundedBetting} is to accumulate evidence against the null through a betting game. At each round, for every candidate value $m\in[0,1]$, a predictable bet $\lambda_t(m)$ is placed on an estimator $\Phi_t$ of the target parameter $\theta^\star$, and these two components form a conditional e-value. Ideally, the resulting wealth process grows rapidly when $m\neq\theta^\star$ while remaining controlled when $m=\theta^\star$. %

We use the one-step estimator proposed by \cite{wuCandes2026efficientevaluationllmperformance}: $\Phi_t(p,q_t) := \frac1N\sum_{i=1}^N p_i+\frac1N\frac{Z_{I_t}-p_{I_t}}{q_t(I_t)} $, where $p\in\mathcal{P}_m$ for $m\in[0,1]$, and have the following proposition, which is proved in Appendix~\ref{proof:test_by_bet_validity}.
\begin{proposition}[Validity of testing-by-betting approach]\label{prop:test_by_bet_validity}
    Fix any candidate value $m\in[0,1]$ and any predictable querying rule $q_t(i)>0$ for all $t\ge1$ and $i\in[N]$. Let $h_t:=\min_{i\in[N]}q_t(i)$ for each $t\ge1$ and $\Lambda_{h_t}(m) :=\left[-\frac1{1-m+\frac1{Nh_t}}, \frac1{m+\frac1{Nh_t}}\right]$ for any $m\in[0,1]$. Then, for any $t\ge1$ and $m\in[0,1]$, if $\lambda_t(m)\in\Lambda_{h_t}(m)$, the testing-by-betting one-step factor $E_t^{\mathrm{bet}}(m,\lambda_t(m,q_t),p^\star,q_t) := 1+\lambda_t(m,q_t)\bigl(\Phi_t(p^\star,q_t)-m\bigr)$ is a conditional e-value for $\mathcal{P}_m$ for each $m\in[0,1]$.
\end{proposition}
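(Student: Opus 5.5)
The plan is to verify the two defining properties of a conditional e-value directly: nonnegativity of $E_t^{\mathrm{bet}}$ and a conditional expectation at most one under every null $p\in\mathcal P_m$. Throughout, $p^\star$ is a fixed $[0,1]^N$ vector and $q_t,\lambda_t(m,q_t)$ are $\mathcal F_{t-1}$-measurable, so conditioning on $\mathcal F_{t-1}$ treats them as constants. Set $c:=\frac1N\sum_{i}p^\star_i$.

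\emph{Step 1 (expectation).} Under $(I,Z_I)\sim Q_{p,q_t}$ with $p\in\mathcal P_m$, the importance weight cancels the sampling probability:
\[
\mathbb E\Bigl[\tfrac{Z_{I}-p^\star_{I}}{Nq_t(I)}\,\Big|\,\mathcal F_{t-1}\Bigr]=\sum_{i=1}^N q_t(i)\,\frac{p_i-p^\star_i}{Nq_t(i)}=m-c .
\]
Hence $\mathbb E[\Phi_t(p^\star,q_t)\mid\mathcal F_{t-1}]=m$. Since $\lambda_t$ is predictable, it follows that $\mathbb E[E^{\mathrm{bet}}_t\mid\mathcal F_{t-1}]=1$, which gives the required bound with equality. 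This holds for any $p^\star$, so validity does not depend on how good the plug-in vector is.

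\emph{Step 2 (nonnegativity).} This step requires the most care, because $\Phi_t$ is unbounded as $h_t\to0$. The aim is the deterministic two-sided bound
\[
-\tfrac1{Nh_t}\le\Phi_t(p^\star,q_t)\le 1+\tfrac1{Nh_t}.
\]
For the lower bound, take $Z=0$. Use $c\ge p^\star_I/N$ to get
\[
\Phi_t\ge \tfrac{p^\star_I}{N}\Bigl(1-\tfrac1{q_t(I)}\Bigr)\ge-\tfrac1{Nh_t},
\]
where the last inequality uses $p^\star_I\le1$ and $1/q_t(I)-1\le 1/h_t$. For the upper bound, take $Z=1$. Use $c\le 1-(1-p^\star_I)/N$ to get
\[
\Phi_t\le 1+(1-p^\star_I)\Bigl(\tfrac1{Nq_t(I)}-\tfrac1N\Bigr)\le 1+\tfrac1{Nh_t}.
\]

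\emph{Step 3 (cases on the sign of $\lambda$).} From Step 2,
\[
\Phi_t-m\in\Bigl[-\bigl(m+\tfrac1{Nh_t}\bigr),\;1-m+\tfrac1{Nh_t}\Bigr].
\]
If $0\le\lambda_t\le 1/(m+\frac1{Nh_t})$, then $\lambda_t(\Phi_t-m)\ge-\lambda_t\bigl(m+\frac1{Nh_t}\bigr)\ge-1$. If $-1/(1-m+\frac1{Nh_t})\le\lambda_t<0$, then $\lambda_t(\Phi_t-m)\ge\lambda_t\bigl(1-m+\frac1{Nh_t}\bigr)\ge-1$. In both cases $E^{\mathrm{bet}}_t\ge0$ almost surely, which completes the argument. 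The only delicate point is getting the sharp constants in Step 2, and these come from the elementary bounds on $c$ in terms of $p^\star_I$.
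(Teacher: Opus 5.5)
Your proposal is correct and follows the same route as the paper: nonnegativity comes from the betting range, and conditional unbiasedness of $\Phi_t$ under every $p\in\mathcal P_m$ gives conditional expectation exactly one. Your Step 2 also makes explicit the deterministic range $-\frac{1}{Nh_t}\le\Phi_t(p^\star,q_t)\le 1+\frac{1}{Nh_t}$, which the paper only asserts as following ``by the definition of $\Lambda_{h_t}(m)$''.
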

Then, assuming we already have a sampling rule, the next natural question is how to design the bet. Notably, the valid betting range depends explicitly on the querying distribution through the $h_t$. Thus, the choice of the querying distribution and the efficiency of evidence growth are tightly coupled. In words, if the querying rule assigns very small probabilities to some questions, then $h_t$ becomes small, making the valid betting range shrink. This forces the bet to be conservative, which can substantially reduce the expected log-increment and slow down the growth of the wealth process (we provide a theoretical understanding of this argument in Theorem~\ref{thm:mismatch_impact_for_bet}). Then, a general approach for determining the bet is: first choose the $q_t$ according to a rule. Then, given this $q_t$, choose the bet that maximizes the expected log-increment of each candidate value, over the valid betting range.

Although finding the optimal bet may still require a bisection procedure, the testing-by-betting construction has a simpler structure than the RIPr-based e-value. This simpler form allows us to derive a tractable approximation for choosing a surrogate betting parameter under a given querying rule $\{q_t\}_{t\ge1}$.

We proceed to this alternative based on the quadratic lower bound $\log(1+x)\ge x-x^2$ for $x\ge -1/2$. For each $m\in[0,1]$, we restrict the bet to the smaller range $\frac12\Lambda_{h_t}(m)$, obtained by multiplying both endpoints of $\Lambda_{h_t}(m)$ by $1/2$. This restriction ensures that
$\lambda_t(m)\bigl(\Phi_t(p^\star,q_t)-m\bigr)\ge -\frac12$
for all possible observations. Therefore, instead of maximizing the bet on the exact expected log-increment, we maximize its quadratic lower bound
\begin{equation}
    \mathbb E_{Q_{p^\star,q_t}}
\left[
\lambda_t(m)(\Phi_t(p^\star,q_t)-m)-(\lambda_t(m))^2(\Phi_t(p^\star,q_t)-m)^2
\,\middle|\,
\mathcal F_{t-1}
\right].\label{eq:exp_quadratic_lower_bound}
\end{equation}
Since the quadratic lower bound is concave in $\lambda_t(m)$, the optimized bet admits an explicit form for any fixed $q_t$. We provide the details in Proposition~\ref{thm:quadratic_sampling_rule_and_bet}. Interestingly, in some cases, substituting this optimized bet back into \eqref{eq:exp_quadratic_lower_bound} yields the sampling distribution $q_t(i)\propto \sqrt{p_i^\star(1-p_i^\star)}$ for each $i\in[N]$ for maximizing the bound, which coincides with the variance-reduction sampling rule proposed by \cite{wuCandes2026efficientevaluationllmperformance}.

The above constructions of conditional e-values require the $p^\star$, which is unavailable in practice. In the next section, we replace the role of $p^\star$ with predictions of question-level correctness obtained from a Bayesian factor model used in \cite{wuCandes2026efficientevaluationllmperformance} fitted using historical data.

\section{Practical setting}\label{sec:practical}
\subsection{Practical conditional e-values}\label{sec:prac_e-values}
We denote the question-level prediction at round $t$ by $\hat p_t:=\{\hat p_t(1),\ldots,\hat p_t(N)\}$, where $\hat p_t(i)$ represents the predicted probability that the new model answers question $i$ correctly. The prediction is initialized using the model and question embeddings fitted from historical correctness data. After each query $t$ and obtaining $(I_t, Z_{I_t})$, we update $\hat p_t$ using the online Laplace update by this drawn tuple, $\mathcal{F}_{t}$. We omit the detailed update equations and refer the reader to \cite{wuCandes2026efficientevaluationllmperformance}. Specifically, the $\hat p_t(j)$ here is equivalent to the $\hat p^{(t)}_j$ in their paper.

For establishing the practical conditional e-values, we simply replace $p^\star$ with $\hat p_{t-1}$ at each round $t\ge1$ in the constructions of $E_t^{\mathrm{RIPr}}$ and $E_t^{\mathrm{bet}}$ above. Since $\hat p_{t-1}$ is $\mathcal{F}_{t-1}$-measurable, one can easily verify that this plug-in procedure preserves the validity in Proposition~\ref{prop:ripr_validity} and Proposition~\ref{prop:test_by_bet_validity} (hence, other predictable predictions are also valid to be used). Moreover, we also apply this plug-in method in proposing the growth-oriented sampling in \eqref{eq:oracle_maxmin_sampling} by replacing $p^\star$ by $\hat p_{t-1}$ at each $t\ge1$ in computing the expectation and e-values. We omit the explicit expressions. %

However, this plug-in method introduces a mismatch between the prediction $\hat p_{t-1}$ and the true correctness vector $p^\star$ at each round $t$. In Section~\ref{sec:impact_of_mismatch_and_variance}, we discuss how this mismatch can degrade the shrinkage behavior of the resulting CS and identify other key factors that affect this shrinkage. This analysis shows that the plug-in maxmin selection rule may not be the only consideration in practical querying-rule design.

\subsection{Analysis of the shrinkage behavior of CSs}\label{sec:impact_of_mismatch_and_variance}
We apply Freedman's inequality \cite{freedman1975tail} to provide a time-uniform high-probability upper bound for an ``approximated" width of the CS. In the following, the considered conditional e-value $E^{\mathrm{RIPr}}_t(\cdot)$ and $E^{\mathrm{bet}}_t(\cdot)$ are the plug-in versions mentioned in Section~\ref{sec:prac_e-values}.
\begin{theorem}[(Informal) Approximated width of RIPr approach]\label{thm:mismatch_impact_for_RIPr}
For any $t\ge1$, suppose the $\log E_t^{\mathrm{RIPr}}(\cdot)$ is bounded and the querying rule satisfies $q_t(i)\geq h$ for some $h\in(0,1/N]$ for any $i\in[N]$. With probability at least $1-\delta$, simultaneously for all $t\le N$,
\[
|C_t^{\mathrm{RIPr}}\cap\mathcal{M}_K|\in O\left(\sqrt{
\frac{
\log(1/\alpha)
+
\sigma^{\mathrm{RIPr}}_t\sqrt{\log(K\log N/\delta)}
+
\mathcal \sum_{s=1}^t
D\!\left(
Q_{p^\star,q_s}\,\Vert\,Q_{\hat p_{s-1},q_s}
\right)
}{
tNh
}
}\right),
\]
where $\mathcal{M}_K$ is the set of $K$ grids and $\sigma^{\mathrm{RIPr}}_t:=\sqrt{\sup_{m\in\mathcal{M}_K}\sum_{s=1}^{t}\mathrm{Var}(\log E_s^{\mathrm{RIPr}}(\cdot)\,|\,\mathcal{F}_{t-1})}$.
\end{theorem}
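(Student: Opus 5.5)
We will show that every grid point $m\in\mathcal M_K$ sufficiently far from $\theta^\star$ has accumulated $\log W_t(m)\ge\log(1/\alpha)$, so it is excluded from $C_t^{\mathrm{RIPr}}$. Fix $m\in\mathcal M_K$ and write $\log W_t(m)=\sum_{s=1}^t \log E_s^{\mathrm{RIPr}}(m)$. Split this sum into its predictable compensator $A_t(m):=\sum_{s\le t}G_s(q_s,E_s^{\mathrm{RIPr}}(m),p^\star)$ and the martingale $M_t(m):=\log W_t(m)-A_t(m)$. Because $\log E_s^{\mathrm{RIPr}}$ is bounded and its conditional variances sum to at most $(\sigma^{\mathrm{RIPr}}_t)^2$, Freedman's inequality applies. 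We apply it on a geometric grid of variance levels and times, which costs a $\log N$ factor since $t\le N$, and then take a union bound over the $K$ grid points. This gives, with probability at least $1-\delta$ simultaneously for all $t\le N$ and $m\in\mathcal M_K$,
\[
M_t(m)\ \ge\ -c\,\sigma^{\mathrm{RIPr}}_t\sqrt{\log(K\log N/\delta)} - c'\log(K\log N/\delta).
\]
The linear term is absorbed into the $O(\cdot)$.

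Next we lower bound the drift $A_t(m)$. With plug-in prediction $\hat p:=\hat p_{s-1}$ and RIPr point $p^\dagger:=p^\dagger_{q_s}(m;\hat p)$, each term decomposes exactly as
\[
\mathbb E_{Q_{p^\star,q_s}}\Big[\log\tfrac{Q_{\hat p,q_s}}{Q_{p^\dagger,q_s}}\Big]
= D(Q_{p^\star,q_s}\Vert Q_{p^\dagger,q_s})-D(Q_{p^\star,q_s}\Vert Q_{\hat p,q_s}).
\]
Since $p^\dagger\in\mathcal P_m$, the first term is at least $\inf_{p\in\mathcal P_m}\sum_i q_s(i)\,d_{\mathrm{kl}}(p_i^\star,p_i)$. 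We then use Pinsker, $d_{\mathrm{kl}}(a,b)\ge 2(a-b)^2$, together with $q_s(i)\ge h$. This reduces the first term to $2h\inf_{p\in\mathcal P_m}\|p^\star-p\|_2^2$. The closest point of the hyperplane $\{\sum_i p_i=Nm\}$ to $p^\star$ is at squared distance $N(m-\theta^\star)^2$, and the box constraint only increases this distance. Hence the first term is at least $2hN(m-\theta^\star)^2$. Summing over $s$,
\[
A_t(m)\ \ge\ 2tNh\,(m-\theta^\star)^2-\sum_{s\le t}D(Q_{p^\star,q_s}\Vert Q_{\hat p_{s-1},q_s}).
\]

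Combining the two bounds, every $m\in\mathcal M_K$ with $2tNh(m-\theta^\star)^2$ exceeding $\log(1/\alpha)+c\,\sigma^{\mathrm{RIPr}}_t\sqrt{\log(K\log N/\delta)}+\sum_{s\le t}D(\cdot\Vert\cdot)$, up to the absorbed constant, satisfies $W_t(m)\ge1/\alpha$ and is excluded. The surviving grid points therefore lie within $|m-\theta^\star|$ of that order. Doubling this distance gives the stated width bound for $C_t^{\mathrm{RIPr}}\cap\mathcal M_K$.

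The main obstacle is the concentration step. In Freedman's inequality the variance proxy is random and depends on $m$, and the bound must hold uniformly in $t$. We handle this by peeling over dyadic levels of the variance and of $t$, then bounding by the supremum over $m$, which is exactly $\sigma^{\mathrm{RIPr}}_t$. A secondary point is that the Pinsker step needs no constraint on $p^\dagger$ beyond membership in $\mathcal P_m$, and in particular does not need it to be interior. The endpoint cases $m\in\{0,1\}$ follow from the convention in Appendix~\ref{proof:proof_of_exp_log-wealth_opt_fix_query_strategy}.
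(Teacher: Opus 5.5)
Your proposal is correct and is essentially the paper's proof. The time-uniform Freedman bound with a union bound over the $K$ grid points is Lemma~\ref{lem:bound_for_general_CS_width}. Your identity $\mathbb{E}_P[\log(d\hat P/d\hat P^\dagger_m)]=D(P\Vert \hat P^\dagger_m)-D(P\Vert \hat P)\ge D(P\Vert P^\dagger_m)-D(P\Vert \hat P)$ is the content of Lemma~\ref{lem:mismatch_in_ripr}. Your Pinsker-plus-hyperplane-distance bound $2Nh(\theta^\star-m)^2$ differs from the paper's Pinsker-plus-weighted-Cauchy--Schwarz bound $\frac{2N^2}{\sum_i 1/q_s(i)}(\theta^\star-m)^2$ (Lemma~\ref{lem:oracle_ripr_curvature}) only before $q_s(i)\ge h$ is applied, so you reach the same final bound $2\sqrt{(\cdot)/(2tNh)}$.

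Absorbing the $b\log(K\log N/\delta)$ term into the $O(\cdot)$ matches the paper's informal statement; its formal version keeps that term via $\max\{2\sigma_t, b\sqrt{\log(K\log N/\delta)}\}$. Your endpoint remark is unnecessary, since the boundedness hypothesis on $\log E^{\mathrm{RIPr}}$ already forces the grid to avoid $m\in\{0,1\}$.
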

The proof of Theorem~\ref{thm:mismatch_impact_for_RIPr} is proved in Appendix~\ref{proof:proof_of_mismatch_impact_for_RIPr}. Similarly, for the testing-by-betting approach,  we also have the following theorem, which is proved in Appendix~\ref{proof:proof_of_mismatch_impact_for_bet}.
\begin{theorem}[(Informal) Approximated width of testing-by-betting approach]\label{thm:mismatch_impact_for_bet}
    For any $t\ge1$, suppose the $\log E_t^{\mathrm{bet}}(\cdot)$ is bounded, the querying rule satisfies $q_t(i)\geq h$ for some $h\in(0,1/N]$ for any $i\in[N]$, and the bet $\lambda_t(m)$ satisfies some oracle regularized conditions for all $m\in\mathcal{M}_K$.
    With probability at least $1-\delta$, simultaneously for all $t\le N$, 
\[
    |C_t^{\mathrm{bet}}\cap \mathcal{M}_{K}|\in O\left(\sqrt{\frac{\log(1/\alpha)+\sigma^{\mathrm{bet}}_t\sqrt{\log(K\log N/\delta)}\left(2+\overline{\mathrm{MSE}}_t\right)}{tNh}}\right)
\]
where $\sigma^{\mathrm{bet}}_t:=\sqrt{\sup_{m\in\mathcal{M}_K}\sum_{s=1}^{t}\mathrm{Var}(\log E_s^{\mathrm{bet}}(\cdot)\,|\,\mathcal{F}_{t-1})}$, and $\overline{\mathrm{MSE}}_t:=\frac{1}{t}\sum_{s=1}^{t}\frac{1}{N}\sum_{i=1}^{N}(p^\star(i)-\hat p_{s-1}(i))^2$.
\end{theorem}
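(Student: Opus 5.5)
The argument mirrors the one for Theorem~\ref{thm:mismatch_impact_for_RIPr}: a time-uniform concentration bound on $\log W_t(m)$ over the grid, followed by a lower bound on the one-step growth of each false candidate, after which the two are compared. I will write $\log W_t^{\mathrm{bet}}(m)=\sum_{s\le t}\log E_s^{\mathrm{bet}}(m)$ and split it as
\[
\log W_t^{\mathrm{bet}}(m)=\sum_{s=1}^t G_s(m)+\sum_{s=1}^t \xi_s(m).
\]
Here $G_s(m)=\mathbb E[\log E_s^{\mathrm{bet}}(m)\mid\mathcal F_{s-1}]$ is taken under the true law $Q_{p^\star,q_s}$. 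The terms $\xi_s(m)$ are martingale differences, and they are bounded because $\log E_s^{\mathrm{bet}}$ is bounded.

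\textbf{Step 1 (concentration).} I will apply Freedman's inequality with a stitching/peeling argument over the epochs $t\in[2^k,2^{k+1})$ for $k\le\log_2 N$. A union bound over the $K$ grid points then gives, with probability at least $1-\delta$, simultaneously for all $t\le N$ and all $m\in\mathcal M_K$,
\[
\sum_s\xi_s(m)\ge -c\,\sigma_t^{\mathrm{bet}}\sqrt{\log(K\log N/\delta)}.
\]
A lower-order range term is absorbed into constants.

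\textbf{Step 2 (growth lower bound).} For $\lambda_s(m)\in\tfrac12\Lambda_{h_s}(m)$ I will use $\log(1+x)\ge x-x^2$, which gives
\[
G_s(m)\ge \lambda_s(m)\,\mathbb E[\Phi_s(\hat p_{s-1},q_s)-m]-\lambda_s(m)^2\,\mathbb E[(\Phi_s(\hat p_{s-1},q_s)-m)^2].
\]
The key computation is that the AIPW estimator is conditionally unbiased for any plug-in, so $\mathbb E[\Phi_s\mid\mathcal F_{s-1}]=\theta^\star$ and the first term is exactly $\lambda_s(m)(\theta^\star-m)$. The second moment equals
\[
(\theta^\star-m)^2+\frac1{N^2}\sum_i\frac{p_i^\star(1-p_i^\star)+(p_i^\star-\hat p_{s-1}(i))^2}{q_s(i)}-\Bigl(\tfrac1N\textstyle\sum_i(p_i^\star-\hat p_{s-1}(i))\Bigr)^2.
\]
Using $q_s(i)\ge h$, this is at most $(\theta^\star-m)^2+\frac{1}{Nh}\bigl(\tfrac14+\mathrm{MSE}_s\bigr)$, where $\mathrm{MSE}_s$ is the per-round squared error. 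This is where the prediction mismatch enters.

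I then invoke the ``oracle regularized conditions'' on the bet. I will interpret them as saying that $\lambda_s(m)$ has the sign of $\theta^\star-m$ and magnitude of order $|\theta^\star-m|\,Nh$, comparable to the quadratic-optimal bet of Proposition~\ref{thm:quadratic_sampling_rule_and_bet}, while staying inside $\tfrac12\Lambda_{h_s}(m)$. With such a bet,
\[
G_s(m)\gtrsim Nh(\theta^\star-m)^2-(\text{bet})^2\,\frac{\mathrm{MSE}_s}{Nh},
\]
and summing over $s$ gives $\sum_s G_s(m)\gtrsim tNh(\theta^\star-m)^2$ minus an $\overline{\mathrm{MSE}}_t$ penalty.

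\textbf{Step 3 (conclusion).} Suppose $m\in C_t\cap\mathcal M_K$. Then $\log W_t(m)<\log(1/\alpha)$, and combining this with Steps 1 and 2 yields
\[
tNh(\theta^\star-m)^2\lesssim \log(1/\alpha)+\sigma_t^{\mathrm{bet}}\sqrt{\log(K\log N/\delta)}\,(2+\overline{\mathrm{MSE}}_t).
\]
Solving for $|\theta^\star-m|$ and doubling to account for both endpoints gives the stated width.

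\textbf{Main obstacle.} The delicate step is the bookkeeping that makes the $\overline{\mathrm{MSE}}_t$ factor appear multiplied by the $\sigma_t^{\mathrm{bet}}\sqrt{\log(\cdot)}$ term rather than additively. The plan for this is to use the regularity conditions on the bet to bound the mismatch penalty, roughly $t\,\mathrm{MSE}$ times the squared bet scale, by the same variance proxy that appears in Freedman's inequality. Both quantities are controlled by the conditional second moment of $\Phi_s-m$, and I expect the claimed form to follow from this shared dependence after crude bounding with constants absorbed into $O(\cdot)$.
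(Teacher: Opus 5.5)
Your Steps 1 and 2 match the paper's argument: Freedman plus a union bound over $\mathcal M_K$, $\log(1+x)\ge x-x^2$ on $\tfrac12\Lambda_h(m)$, conditional unbiasedness of $\Phi_s$, and the second-moment bound (your version of which is slightly sharper). \textbf{The gap is exactly at the step you call the main obstacle, and your plan for it would not work.} You keep the mismatch contribution $\sum_{s\le t}\lambda_s(m)^2\,\mathrm{MSE}_s/(Nh)$ as an additive penalty and hope to dominate it by the Freedman term $\sigma_t^{\mathrm{bet}}\sqrt{\ell}$, where $\ell=\log(K\log N/\delta)$. With a bet of size $\asymp|\theta^\star-m|Nh$, however, this penalty is of order $tNh(\theta^\star-m)^2\,\overline{\mathrm{MSE}}_t$. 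For fixed $m$ it grows linearly in $t$, just like the main growth term, whereas $\sigma_t^{\mathrm{bet}}\sqrt{\ell}=O(\sqrt{t\ell})$ because each $\log E_s^{\mathrm{bet}}$ is bounded. Comparing it with the variance proxy also fails. You would need $\mathrm{Var}(\Phi_s\mid\mathcal F_{s-1})\gtrsim\mathrm{MSE}_s/(Nh)$, but $q_s(i)\ge h$ only gives the upper bound. For example, under uniform sampling a constant shift $\hat p_{s-1}(i)=p_i^\star+\epsilon$ contributes nothing to the variance. Even if such a bound held, you would obtain $(\sigma_t^{\mathrm{bet}})^2$ rather than $\sigma_t^{\mathrm{bet}}\sqrt{\ell}$. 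So the inequality you assert in Step 3 does not follow from Steps 1--2 as written.

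The missing idea is that the mismatch penalty is proportional to $(\theta^\star-m)^2$. It must therefore be absorbed into the left-hand side as a loss in the growth constant, not moved to the right. The paper's regularity condition is that $\lambda_s(m)$ is the maximizer of the quadratic lower bound and lies in $\tfrac12\Lambda_h(m)$. With $X_s=\Phi_s(\hat p_{s-1},q_s)-m$, plugging in this bet gives $G_s(m)\ge(\theta^\star-m)^2/(4\,\mathbb E[X_s^2\mid\mathcal F_{s-1}])\ge\tfrac14(\theta^\star-m)^2\,Nh/(Nh+1+\mathrm{MSE}_s)$, so $\mathrm{MSE}_s$ sits in a denominator. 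Jensen's inequality for the convex map $x\mapsto Nh/(Nh+1+x)$ then gives $\sum_{s\le t}G_s(m)\ge tNh(\theta^\star-m)^2/\bigl(4(2+\overline{\mathrm{MSE}}_t)\bigr)$. Inverting yields $(\theta^\star-m)^2\le 4(2+\overline{\mathrm{MSE}}_t)(\log(1/\alpha)+\mathrm{pen}_t)/(tNh)$, and $\overline{\mathrm{MSE}}_t\le 1$ removes the factor from the $\log(1/\alpha)$ term. Your own reading of the bet can be repaired the same way. Take $\lambda_s(m)=c\,\mathrm{sign}(\theta^\star-m)\,|\theta^\star-m|\,Nh$ with $c\le 1/4$, which keeps the bet in $\tfrac12\Lambda_h(m)$. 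Since $\mathrm{MSE}_s\le 1$, the penalty $c^2Nh(\theta^\star-m)^2\mathrm{MSE}_s$ is then dominated by $cNh(\theta^\star-m)^2$. This also shows that the factor $(2+\overline{\mathrm{MSE}}_t)\in[2,3]$ is harmless inside $O(\cdot)$. A smaller point on Step 1: the Freedman range term is not lower order when $\sigma_t^{\mathrm{bet}}\lesssim\sqrt{\ell}$, which is why the paper's formal version keeps $\max\{\sigma_t^{\mathrm{bet}},\sqrt{\ell}\}$.
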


We first observe that prediction mismatch is usually more pronounced for the RIPr-based CS than for the betting-based CS; see Appendix~\ref{sec:exp_verify_plug-in_impact} for an empirical verification. This difference comes from the structure of the two conditional e-values. In the RIPr-based CS, the plug-in prediction forms the likelihood-ratio-type log-increment, so an inaccurate prediction directly reduces the conditional expected log-increment and leads to an accumulated KL mismatch term. In contrast, in the betting-based CS, the plug-in prediction enters through an AIPW-type estimator $\Phi_t(\cdot)$ that remains conditionally unbiased for $\theta^\star$. Hence, prediction error mainly affects the betting-based CS through the variance or second-moment term, rather than directly biasing the growth direction. This makes the betting-based CS relatively more robust to prediction mismatch.

However, the betting-based CS is more sensitive to the lower bound $h$ of the querying distribution. This is because if $h$ is small, the valid betting range must be correspondingly restricted to keep the conditional betting-based e-value nonnegative; see Proposition~\ref{prop:test_by_bet_validity}. This dependence is reflected in Theorem~\ref{thm:mismatch_impact_for_bet} through the term $tNh$ in the denominator. Even when the prediction error is small, a small value of $h$ weakens the betting growth and leads to a wider CS width; see Section~\ref{sec:exp_verify_oracle} for an empirical verification. By contrast, the RIPr-based bound also contains $h$, but this dependence mainly comes from a loose lower bound on the accumulated RIPr log-increment rather than from an explicit importance-weighted correction; see the analysis in Appendix~\ref{proof:proof_of_mismatch_impact_for_RIPr} and Lemma~\ref{lem:oracle_ripr_curvature}. %

Therefore, in practice, there may not be a universally best approach, and the preferable choice can depend on the dataset and the quality of the prediction. For querying-rule design, one should pay attention to both the lower bound of the querying distribution and the mismatch behavior of the plug-in prediction. A natural approach is to use a mixture querying rule that balances growth-oriented sampling, prediction refinement, and a part of uniform sampling. We provide several querying rules in Section~\ref{sec:exp_mixture} and conclude this discussion with the following remarks.

\begin{remark} 
    These two theorems provide an initial theoretical understanding of how certain aspects of the sampling rule affect the CS width. However, they do not fully characterize the optimal design of the sampling rule. In particular, the bounds hold for any sampling rule satisfying $q_t(i)\ge h$ for all $i\in[N]$, and therefore mainly capture the effect of the minimum sampling probability. They do not explicitly reflect finer properties of the sampling rule, such as the maxmin growth criterion discussed above or the detailed structure of the querying distribution. Moreover, these bounds do not fully control the accumulated variance term. Consequently, they can be suboptimal in order and are too coarse to provide a sharp comparison between the RIPr-based and testing-by-betting-based methods. We therefore interpret them only as \emph{diagnostic bounds}: they highlight how prediction mismatch and the lower bound on the querying probabilities $q_t(i)$ may influence the CS shrinkage. The actual comparison between the two methods is primarily supported by the experiments, where we observe qualitative trends consistent with these factors. We still work on making these bounds tighter.
\end{remark}

\section{Experiments}\label{sec:experiment}
In this section, we first empirically examine the oracle setting for both the RIPr- and the betting-based CSs described in Section~\ref{sec:oracle}. To focus on the performance difference induced by their e-values, we consider only two querying rules in the oracle experiments: the growth-oriented sampling rule described in \eqref{eq:oracle_maxmin_sampling} and uniform sampling as a benchmark. We then move to the practical setting described in Section~\ref{sec:practical} to compare both methods, along with several sampling rules. Notably, for the betting-based CS, the implementation-light variant is not evaluated separately. It uses a smaller betting range than the original method, so its statistical performance is expected to be weaker; its main advantage is lower computational cost. Moreover, the anytime coverage is demonstrated in  Appendix~\ref{append:experiment_details_oracle} Appendix~\ref{append:experiment_details_practical}, which are evaluated with respect to the realized benchmark accuracy induced by each sampled correctness vector.

\subsection{Oracle RIPr-based CS vs. Oracle betting-based CS}\label{sec:exp_verify_oracle}
We compare these two oracle approaches in terms of the stopping time, the first time that the CS width falls below a redefined accuracy $\epsilon$, in two regimes of $p^\star$. The result is shown in Figure~\ref{fig:oracle_avg_stopping_time}. 
These empirical results lead to the following observations. First, the testing-by-betting-based CS is highly sensitive to the lower bound of the querying distribution. The testing-by-betting-based CS with growth-oriented sampling (green) performs much worse than the same CS with uniform sampling (red). This supports the sensitivity discussed in Sections~\ref{sec:testing_by_betting_based_oracle_design} and~\ref{sec:impact_of_mismatch_and_variance}, even in the oracle setting where $p^\star$ is known. Second, both the RIPr-based CS with growth-oriented sampling (purple) and the RIPr-based CS with uniform sampling (yellow) outperform the corresponding testing-by-betting-based CSs in most cases. This is consistent with the oracle growth optimality of the RIPr construction discussed in Section~\ref{sec:ripr_based_oracle_design}. Third, uniform sampling is a surprisingly competitive baseline for both CS approaches. This behavior is similar to \cite{sfyraki2026}, where they consider a prediction-powered active inference with relatively passive sampling. In the following experiments on practical CSs, we also observe that uniform sampling remains competitive across different synthetic benchmark problem settings.

\begin{figure}[t]
    \centering
    \includegraphics[width=0.9\textwidth]{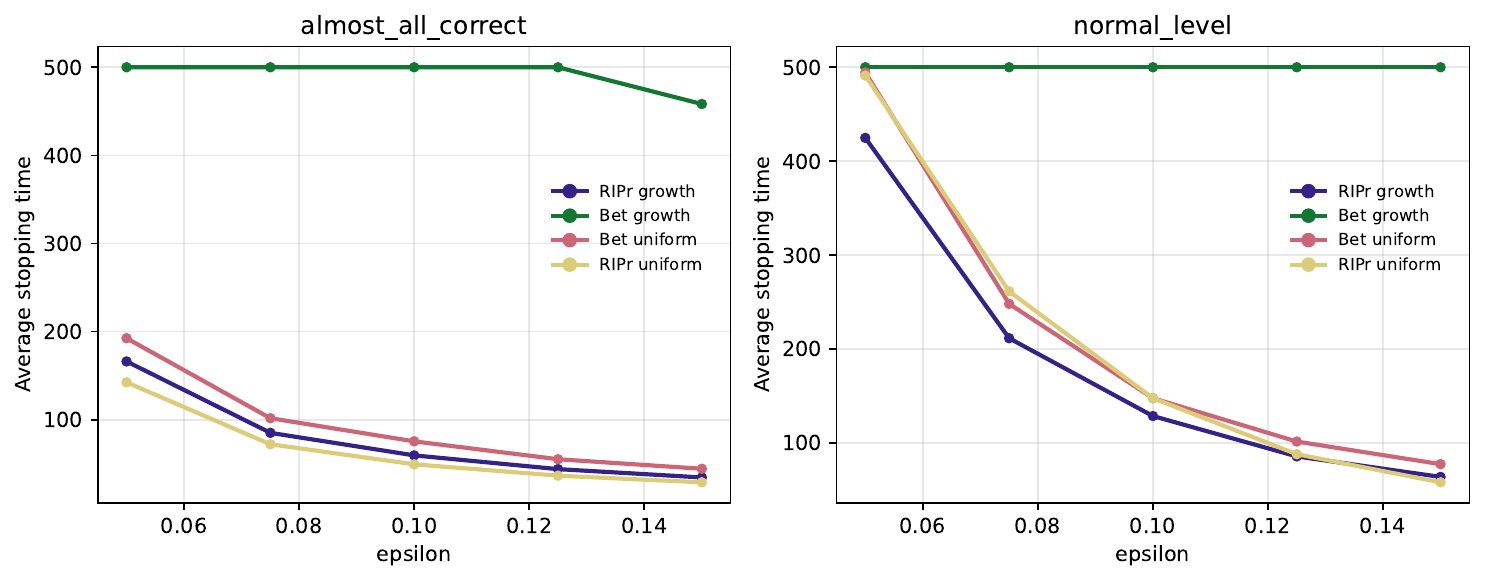}
    \caption{Average stopping time versus the prescribed accuracy levels $\epsilon\in\{0.05,0.075,0.1,0.125,0.15\}$, with a total of $N=500$ questions. Smaller values indicate higher sampling efficiency, and curves reaching $500$ correspond to runs that did not attain the target width within the query budget. \texttt{almost\_all\_correct} and \texttt{normal\_level} represent different settings of $p^\star$. Due to space limitations, the detailed setting of this experiment is provided in Appendix~\ref{append:experiment_details_oracle}.}
    \label{fig:oracle_avg_stopping_time}
\end{figure}

\subsection{Practical RIPr-based CS vs. Practical Betting-based CS}\label{sec:exp_mixture}

In this experiment, we use the plug-in e-values described in Section~\ref{sec:prac_e-values}. We compare the following selection rules with the two approaches: growth-oriented sampling, uniform sampling, a predesigned time-scheduled mixture, and a data-dependent mixture. 

The growth-oriented one and the uniform one follow the oracle experiments in Section~\ref{sec:exp_verify_oracle}, except that the oracle e-values are replaced by the practical plug-in e-values. The time-scheduled mixture takes the form $q_t=(1-\rho_t-\beta_t)q_t^{\mathrm{grow}}+\rho_t q_t^{\mathrm{refine}}+\beta_t\mathbf 1/N$, where $q_t^{\mathrm{grow}}$ is the growth-oriented rule in \eqref{eq:oracle_maxmin_sampling} but it is computed using the plug-in e-values, and $q_t^{\mathrm{refine}}$ is the refinement rule of \cite{wuCandes2026efficientevaluationllmperformance} for improving the prediction $\hat p_t$ ($q_t^{\mathrm{refine}}$ is equivalent to $h^{(t)}_a$ in their Equation (3)). At a high-level, we choose $\rho_t$ to decrease over time and $\beta_t$ to increase over time. The intuition is that early queries should focus more on refining the prediction, while later queries should place more weight on growth-oriented and uniform sampling. For the data-dependent mixture rule, instead of fixing the schedules of $\rho_t$ and $\beta_t$, we update them adaptively so that they are proportional to two data-dependent scores, $s_\rho(t)$ and $s_\beta(t)$, respectively. The score $s_\rho(t)$ measures the uncertainty of the current prediction and controls the weight on prediction refinement. The score $s_\beta(t)$ measures how much the question-level predictions have stabilized around similar values and controls the weight on the uniform sampling part. The intuition is that, once the predictions have been sufficiently refined and indicate that most questions have similarly high correctness probabilities, uniform sampling can already serve as an effective querying rule, as observed in the left panel of Figure~\ref{fig:oracle_avg_stopping_time}. Otherwise, we continue to prioritize queries that reduce prediction uncertainty. The remaining probability mass is allocated to $q_t^{\mathrm{grow}}$. 

Then, to test our proposed methods under different capabilities of new LLM models, we present the empirical comparison of these approaches on different capability regimes in Figure~\ref{fig:ripr_only_avg_stop_over_time} and Figure~\ref{fig:bet_only_avg_stop_over_time}. The main messages from this empirical result are as follows. First, there is \emph{no universally best querying rule} across all regimes; the most effective policy depends on the structure of the testing dataset. Second, the \emph{similarity between the new model and the historical models} plays a key role in sampling efficiency. In regimes such as \texttt{mean\_shift\_mismatch\_high\_accuracy}, \texttt{normal\_regime}, and \texttt{out\_of\_family\_model\_mismatch}, the new model differs more substantially from the historical model family, making prediction refinement harder and slowing the CS shrinkage. Again, the testing-by-betting-based approach appears more robust to this mismatch than the RIPr-based approach. Third, the \emph{testing-by-betting-based CS with uniform sampling} achieves competitive performance across all regimes, suggesting that a simple querying rule combined with a computationally simpler CS construction may already be sufficient in practice. The details of this experiment are deferred to Appendix~\ref{append:experiment_details_practical}. We end this section with the following remark.

\begin{figure}[t]
    \centering
    \includegraphics[width=0.95\textwidth]{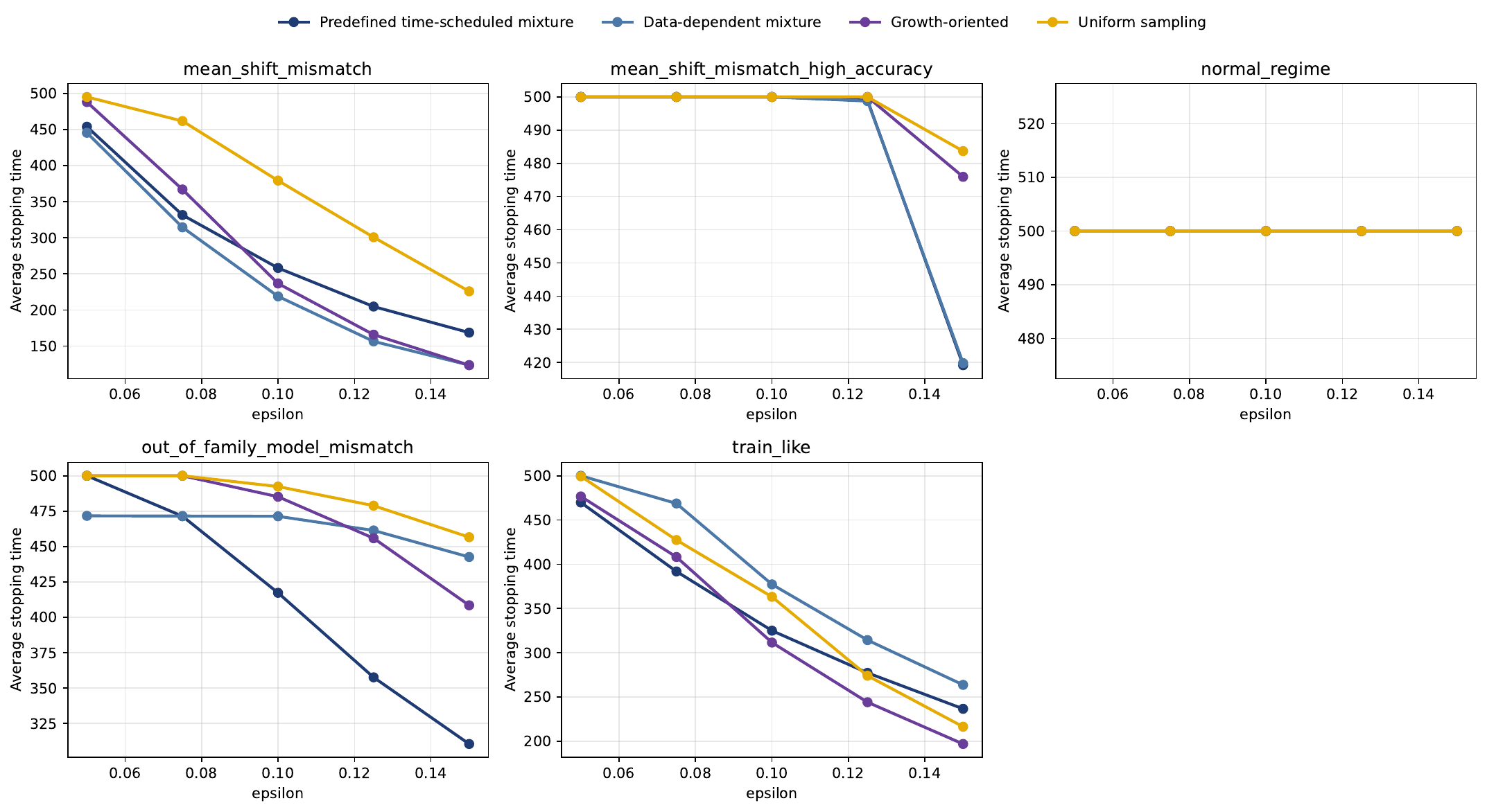}
    \caption{Average stopping time versus $\epsilon$ for RIPr-based methods across five synthetic regimes: \texttt{train\_like}, which is close to the training distribution; \texttt{mean\_shift\_mismatch} and \texttt{mean\_shift\_mismatch\_high\_accuracy}, which have shifted overall accuracy levels; \texttt{out\_of\_family\_model\_mismatch}, which exhibits structural mismatch from the training family; and \texttt{normal\_regime}, which is a handcrafted heterogeneous setting with a substantially different average accuracy from the historical models. We compare four sampling rules.} 
\label{fig:ripr_only_avg_stop_over_time}
\end{figure}
\begin{figure}[t]
    \centering
    \includegraphics[width=0.95\textwidth]{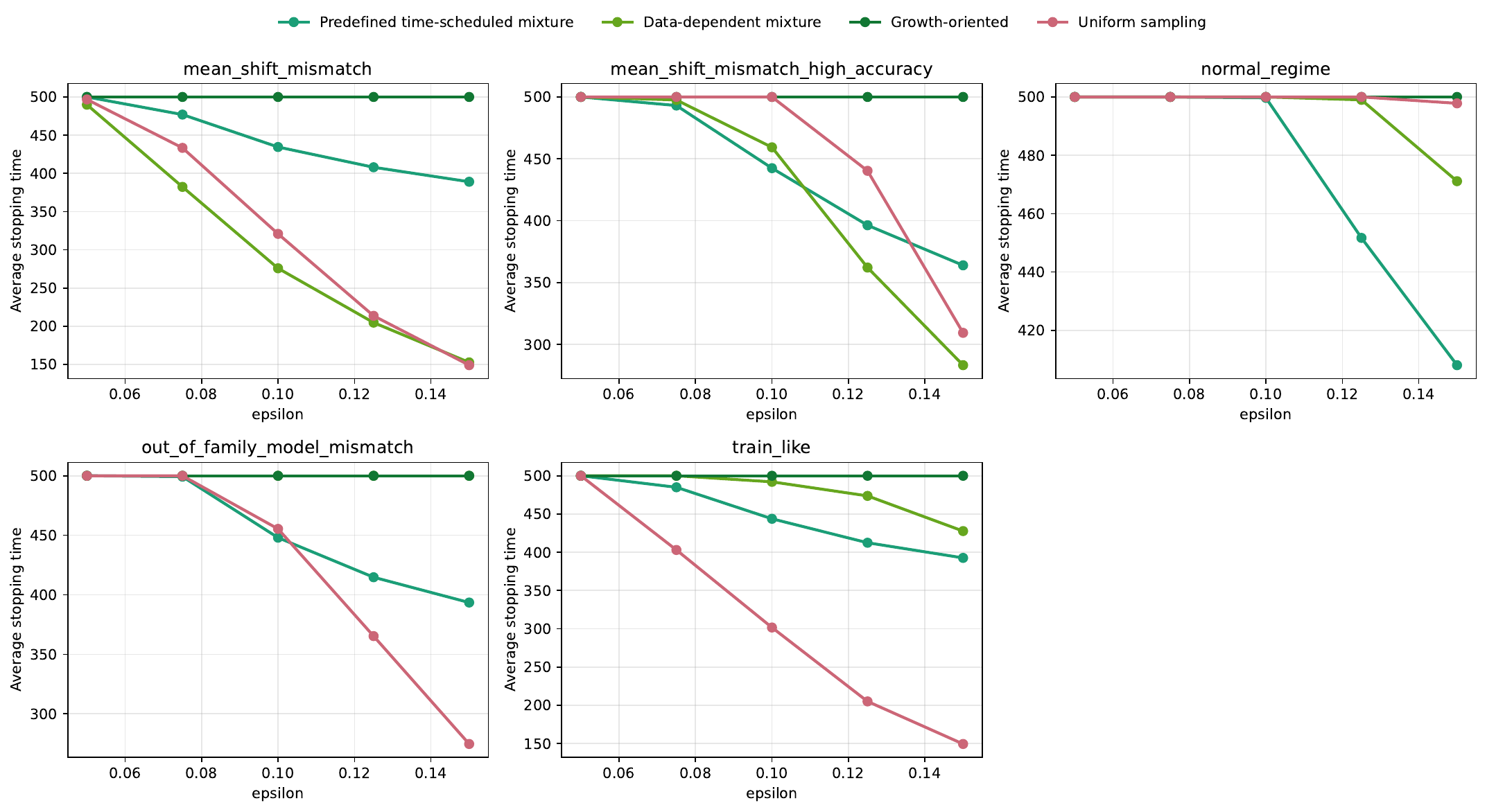}
    \caption{Average stopping time versus $\epsilon$ for testing-by-betting-based methods across five synthetic regimes described in Figure~\ref{fig:ripr_only_avg_stop_over_time}.} 
    \label{fig:bet_only_avg_stop_over_time}
\end{figure}

\begin{remark}[Time-uniform bound of \cite{wuCandes2026efficientevaluationllmperformance}]
By slightly modifying the arguments in Section~\ref{sec:practical}, one can also derive a time-uniform uncertainty bound for the estimator of \cite{wuCandes2026efficientevaluationllmperformance}. However, the resulting bound is relatively loose in our setting and typically requires more queries to reach the same target accuracy $\epsilon$ than our proposed sequential methods in the experiments. For this reason, we do not include it as a main baseline. %
\end{remark}

\section{Conclusion}\label{sec:conclusion}
We studied active querying for LLM evaluation with time-uniform uncertainty control. We constructed confidence sequences based on RIPr and testing-by-betting, and analyzed the shrinkage behavior of the resulting CSs. This analysis identified that prediction mismatch and the spikiness of the querying distribution affect the sampling efficiency, and motivated the querying rules considered in our experiments. Our empirical results suggest that there is no uniformly best querying rule across all testing datasets. Interestingly, we also observe that uniform sampling is a competitive baseline for both methods. Future work includes developing more effective querying rules, possibly using reinforcement learning. Moreover, it would be worthwhile to replace the Bayesian factor model with other prediction methods, such as the collaborative evaluation approach of \cite{fisch2026collabevalstatisticallyefficientcollaborative}.  %

\newpage
\bibliographystyle{IEEEtran}
\bibliography{ref}

\newpage
\appendix

\section{Experiment details and supplemental results}\label{append:experiment_details}

\subsection{A heuristic algorithm of the CS}
We present a \emph{heuristic} version of the general algorithm described in Section~\ref{sec:prelim}. This algorithm serves as a unified template: it can be instantiated with either the RIPr-based or the testing-by-betting-based approach by plugging in the corresponding conditional e-values and querying rules.
\begin{center} 
\begin{minipage}{0.95\linewidth} 
\textbf{Algorithm 1: A general construction of the confidence sequence} 
\begin{algorithmic}[1]
\State \textbf{Input:} Question bank $[N]$, historical data $H$, confidence level $\alpha$, stopping criterion $\mathcal T$, grid $\mathcal M_K\subset[0,1]$ with $K$ grid points.
\State \textbf{Output:} A stopped confidence sequence or the full sequence $\{C_t\}_{t=0}^{N}$.

\State Initialize $W_0(m)\gets 1$ for all $m\in\mathcal M_K$, $C_0\gets \mathcal M_K$, and the reference information $r_1$ from $H$.

\For{$t=1,2,\ldots,N$}
    \State Update the predictable sampling distribution
    \[
    q_t\leftarrow
    \texttt{UpdateQueryPmf}
    \left(H,r_t,C_{t-1},\{I_s,Z_{I_s}\}_{s=1}^{t-1}\right).
    \]
    \State Sample $I_t\sim q_t$ and observe $Z_{I_t}\in\{0,1\}$.
    \For{each $m\in\mathcal M_K$}
        \State Construct the one-step conditional e-value
        \[
        E_t(m)\leftarrow
        \texttt{UpdateEvalue}
        \left(m,H,r_t,q_t,\{I_s,Z_{I_s}\}_{s=1}^{t}\right).
        \]
        \State Update the e-process $W_t(m)\gets W_{t-1}(m)E_t(m)$.
    \EndFor

    \State Update the grid-based heuristic set $
    \left\{
    m\in\mathcal M_K:
    W_t(m)<\frac{1}{\alpha}
    \right\}$ and let $C_t=[L_t, U_t]$, where $U_t:=\sup\left\{
    m\in\mathcal M_K:
    W_t(m)<\frac{1}{\alpha}
    \right\}$ and $L_t:=\inf\left\{
    m\in\mathcal M_K:
    W_t(m)<\frac{1}{\alpha}
    \right\}$.
    \If{$\mathcal T(t,C_t)=1$}
        \State \textbf{return} $t$ and $C_t$
    \EndIf

    \State Update the side information $r_{t+1}\leftarrow
    \texttt{UpdateReferenceInfo}
    \left(H,\{I_s,Z_{I_s}\}_{s=1}^{t}\right)$.
\EndFor
\State \textbf{return} $\{C_t\}_{t=0}^{N}$.
\end{algorithmic} 
\end{minipage} 
\end{center}
The reference information $\{r_t\}_{t\ge1}$ is specified differently in the oracle and practical settings. In the oracle setting, we set $r_t=p^\star$ for all $t\ge1$. In the practical setting, we set $r_t=\hat p_{t-1}$, where $\hat p_{t-1}$ is $\mathcal{F}_{t-1}$-measurable and updated over time using the Bayesian factor model and the Laplace update proposed in \cite{wuCandes2026efficientevaluationllmperformance}. The construction of the CS is a heuristic method. It is easy to implement this method, but we emphasize that it is \emph{not} in general a valid level-$\alpha$ CS. Under this unified formulation, the RIPr-based and testing-by-betting-based approaches differ only in the two subroutines \texttt{UpdateQueryPmf} and \texttt{UpdateEvalue}, which specify how the querying distribution and the conditional e-value are updated.

\subsection{Oracle RIPr-based CS vs. Oracle Betting-based CS}\label{append:experiment_details_oracle}
We provide the implementation details of the simulation setting, CSs, and querying rules for the RIPr-based and testing-by-betting-based approaches used in Section~\ref{sec:exp_verify_oracle}.
\paragraph{Simulation setting.}
In this experiment, we first specify the new model's latent question-level correctness probabilities $p^\star=(p_1^\star,\ldots,p_N^\star)$. This vector is used to generate realized benchmark outcomes and, in the oracle setting, is also assumed to be available as the reference vector for constructing the oracle e-values and querying rules. To mimic a realistic benchmark evaluation, for each $b\in[B]$, we generate an independent binary correctness vector
\[
A^{(b)}=(A_1^{(b)},\ldots,A_N^{(b)}),
\qquad
A_i^{(b)}\sim \mathrm{Ber}(p_i^\star)
\]
independently across questions. Conditional on this realized benchmark vector, the target parameter is the realized benchmark accuracy
\[
\theta_A^{(b)}
=
\frac{1}{N}\sum_{i=1}^N A_i^{(b)}.
\]
This fixed-benchmark interpretation does not invalidate the CS guarantees in Proposition~\ref{prop:ripr_validity} and Proposition~\ref{prop:test_by_bet_validity}. Indeed, conditional on $A^{(b)}$, the observation distribution under a predictable querying rule $q_t$ is the degenerate Bernoulli model with parameter vector $A^{(b)}\in\{0,1\}^N$. Since our conditional e-value property holds uniformly over all $p\in\mathcal P_m$, it also holds for the boundary vector $p=A^{(b)}$. Therefore, when $m=\theta_A^{(b)}$, the wealth process remains a test supermartingale under the randomness of the adaptive querying procedure, and the resulting CS is valid for the realized target $\theta_A^{(b)}$.

We compare the RIPr-based CS and the testing-by-betting-based CS in terms of the average number of questions required for the CS width to fall below a prescribed threshold $\epsilon$. The first time that the width falls below $\epsilon$ is called the \emph{stopping time}. Equivalently, the stopping criterion satisfies $\mathcal T(t,C_t)=1$ if $|C_t|\le \epsilon$, and the corresponding $t$ is the stopping time. We consider the set of $\epsilon$ values described in Figure~\ref{fig:oracle_avg_stopping_time}. For each $\epsilon$ and each realized correctness vector $A^{(b)}$, we run the sequential querying procedure $R$ times with independent querying randomness. We set $B=5$ and $R=10$, and report the average stopping time over all $B\times R$ runs for each value of $\epsilon$. Moreover, we uniformly discretize the domain $[0,1]$ into a finite grid $\mathcal M_K$ with $K=200$ points and set the anytime-valid error level to $\alpha=0.05$.

We consider two regimes of capabilities (the $p^\star$) and $N=500$. In the \texttt{almost\_all\_correct} regime, $0.9N$ questions have correctness probabilities drawn in $[0.985,0.999]$, while the remaining $0.1N$ questions have correctness probabilities drawn in  $[0.9,0.97]$. In the \texttt{normal\_level} regime, $0.05N$ questions have correctness probabilities drawn in $[0.15,0.35]$, $0.2N$ questions have correctness probabilities drawn in  $[0.7,0.8]$, and the remaining $0.75N$ questions have correctness probabilities drawn in  $[0.985,0.995]$.

\paragraph{RIPr-based approach.}
We introduce the pseudo-code of the functions \texttt{UpdateQueryPmf} and \texttt{UpdateEvalue} for the RIPr-based approach.
\begin{center}
\begin{minipage}{0.95\linewidth}
\textbf{Subroutine 1: \texttt{RIPr-UpdateQueryPmf}}
\begin{algorithmic}[1]
\State \textbf{Input:} Reference vector $r_t=p^\star$, current CS $C_{t-1}$, smoothing weight $\zeta$, number of EG iterations $R_{\mathrm{EG}}=25$.
\State \textbf{Output:} Querying distribution $q_t$.
\If{$\zeta=1$}
\State \textbf{return} $q_t\gets \mathbf 1/N$.
\EndIf
\State Set $L_{t-1}\gets \inf C_{t-1}$ and $U_{t-1}\gets \sup C_{t-1}$.
\State Initialize $q^{(1)}\gets \mathbf 1/N$ and $q^{\mathrm{best}}\gets q^{(1)}$.
\State Define the growth objective
\[
J_t(q)
:=
\min_{m\in\{L_{t-1},U_{t-1}\}}
G_t\bigl(q,E_t^{\mathrm{RIPr}}(m,r_t, q),r_t\bigr).
\]

\For{$i=1,2,\ldots,R_{\mathrm{EG}}$}
\State Compute a subgradient $g^{(i)}$ of $J_t(q)$ at $q=q^{(i)}$.
\State Set the EG learning rate $\eta_i\gets 1/(2\sqrt{i})$.
\State Apply the exponentiated-gradient update:
\[
\widetilde q^{(i+1)}_j
\gets
q^{(i)}_j\exp(\eta_i g^{(i)}j),
\qquad j\in[N].
\]
\State Normalize:
\[
q^{(i+1)}\gets
\frac{\widetilde q^{(i+1)}}{\sum_{j=1}^N \widetilde q^{(i+1)}_j}.
\]
\If{$J_t(q^{(i+1)})>J_t(q^{\mathrm{best}})$}
\State Set $q^{\mathrm{best}}\gets q^{(i+1)}$.
\EndIf
\EndFor

\State \textbf{return} the smoothed querying distribution $q_t\gets (1-\zeta)q^{\mathrm{best}}+\zeta\frac{\mathbf 1}{N}$.
\end{algorithmic}
\end{minipage}
\end{center}

We utilize the exponentiated-gradient (EG) updates for approximating the maximizer of $J_t(q)$. Moreover, we return the smoothed version to satisfy the requirement in Proposition~\ref{prop:ripr_validity} for a valid conditional e-value.

\begin{center}
\begin{minipage}{0.95\linewidth}
\textbf{Subroutine 2: \texttt{RIPr-UpdateEvalue}} 
\begin{algorithmic}[1]\label{alg:oracl-ripr-e-value}
\State \textbf{Input:} Candidate $m\in\mathcal M_K$, reference vector $r_t=p^\star$, querying distribution $q_t$, queried question $I_t$, observed outcome $Z_{I_t}$.
\State \textbf{Output:} Conditional e-value $E_t^{\mathrm{RIPr}}(m,q_t)$.

\State Compute the RIPr point $p_{q_t}^\dagger(m)$ associated with the null candidate $m$, querying distribution $q_t$, and reference vector $r_t$.

\State To compute $p_{q_t}^\dagger(m; r_t)$, solve for the Lagrange multiplier $\lambda^\star(m)$ in Proposition~\ref{prop:solve_ripr_point}. Initialize $\lambda_{\mathrm{lo}}=-1$ and $\lambda_{\mathrm{hi}}=1$. We do the bracket extension followed by the bisection as follows.

\For{$j=1,2,\ldots,24$}
\If{$H(\lambda_{\mathrm{lo}})<m$}
\State Set $\lambda_{\mathrm{lo}}\gets 2\lambda_{\mathrm{lo}}$.
\EndIf
\If{$H(\lambda_{\mathrm{hi}})>m$}
\State Set $\lambda_{\mathrm{hi}}\gets 2\lambda_{\mathrm{hi}}$.
\EndIf
\If{$H(\lambda_{\mathrm{lo}})\ge m\ge H(\lambda_{\mathrm{hi}})$}
\State Break.
\EndIf
\EndFor

\For{$j=1,2,\ldots,28$}
\State Set $\lambda_{\mathrm{mid}}\gets(\lambda_{\mathrm{lo}}+\lambda_{\mathrm{hi}})/2$.
\If{$H(\lambda_{\mathrm{mid}})>m$}
\State Set $\lambda_{\mathrm{lo}}\gets \lambda_{\mathrm{mid}}$.
\Else
\State Set $\lambda_{\mathrm{hi}}\gets \lambda_{\mathrm{mid}}$.
\EndIf
\EndFor

\State Set $\lambda^\star(m)\gets(\lambda_{\mathrm{lo}}+\lambda_{\mathrm{hi}})/2$ and construct $p_{q_t}^\dagger(m;r_t)$ according to Proposition~\ref{prop:solve_ripr_point}. \State \textbf{return} $E_t^{\mathrm{RIPr}}(m,q_t)
=
\frac{
(r_t(I_t))^{Z_{I_t}}(1-r_t(I_t))^{1-Z_{I_t}}
}{
(p_{q_t}^\dagger(m;r_t)_{I_t})^{Z_{I_t}}(1-p_{q_t}^\dagger(m;r_t)_{I_t})^{1-Z_{I_t}}
}$.
\end{algorithmic}
\end{minipage}
\end{center}

\paragraph{Testing-by-betting-based approach.}
We introduce the pseudo-code of the functions \texttt{UpdateQueryPmf} and \texttt{UpdateEvalue} for the testing-by-betting-based approach.

\begin{center}
\begin{minipage}{0.95\linewidth}
\textbf{Subroutine 1: \texttt{Betting-UpdateQueryPmf}}
\begin{algorithmic}[1]
\State \textbf{Input:} Reference vector $r_t=p^\star$, current CS $C_{t-1}$, smoothing weight $\zeta$, number of EG iterations $R_{\mathrm{EG}}$, number of bisection iterations $B_{\mathrm{bet}}$. We set $B_{\mathrm{bet}}=40$ and $R_{\mathrm{EG}}=25$.
\State \textbf{Output:} Querying distribution $q_t$.

\If{$\zeta=1$}
\State \textbf{return} $q_t\gets \mathbf 1/N$.
\EndIf

\State Set $L_{t-1}\gets \inf C_{t-1}$ and $U_{t-1}\gets \sup C_{t-1}$.
\State Use the known lower bound $h=\zeta/N$ guaranteed by the smoothing step.
\State Define the smoothing operator
\[
\operatorname{Mix}_{\zeta}(q)
:=
(1-\zeta)q+\zeta\frac{\mathbf 1}{N}.
\]
\State Initialize
\[
q^{(1)}
\gets
\mathbf{1}/N\qquad
q^{\mathrm{best}}\gets q^{(1)}.
\]

\State Define the growth objective
\[
J_t(q)
:=
\min_{m\in\{L_{t-1},U_{t-1}\}}
G_t\bigl(q,E_t^{\mathrm{bet}}(m,\lambda_t^\star(m,q),r_t,q),r_t)
\]
where
\[
\lambda_t^\star(m,q)
\in
\arg\max_{\lambda\in\Lambda_h(m)}
G_t\bigl(q,E_t^{\mathrm{bet}}(m,\lambda,r_t,q),r_t\bigr).
\]

\For{$i=1,2,\ldots,R_{\mathrm{EG}}$}
\For{each endpoint $m\in\{L_{t-1},U_{t-1}\}$}
\State Compute
\[
\lambda_t^\star(m,q^{(i)})
\in
\arg\max_{\lambda\in\Lambda_h(m)}
G_t\bigl(q^{(i)},E_t^{\mathrm{bet}}(m,\lambda,r_t,q^{(i)}),r_t\bigr)
\]
by bisection over $\Lambda_h(m)$ for $B_{\mathrm{bet}}$ iterations, with projection onto the endpoints of $\Lambda_{h}(m)$ when the maximizer is attained at the boundary.
\EndFor

\State Evaluate $J_t(q^{(i)})$ and compute a subgradient $g^{(i)}$ at $q=q^{(i)}$.

\If{$J_t(q^{(i)})>J_t(q^{\mathrm{best}})$}
    \State Set $q^{\mathrm{best}}\gets q^{(i)}$.
\EndIf

\State Set the EG learning rate $\eta_i\gets 0.5/\sqrt{i}$.

\State Apply the exponentiated-gradient update:
\[
\widetilde q^{(i+1)}_j
\gets
q^{(i)}_j\exp\bigl(\eta_i g^{(i)}_j\bigr),
\qquad j\in[N].
\]

\State Normalize:
\[
\hat q^{(i+1)}
\gets
\frac{\widetilde q^{(i+1)}}{\sum_{j=1}^N\widetilde q^{(i+1)}_j}.
\]

\State Smooth:
\[
q^{(i+1)}
\gets
\operatorname{Mix}_{\zeta}\bigl(\hat q^{(i+1)}\bigr).
\]
\EndFor

\If{$J_t(q^{(R_{\mathrm{EG}}+1)})>J_t(q^{\mathrm{best}})$}
\State Set $q^{\mathrm{best}}\gets q^{(R_{\mathrm{EG}}+1)}$.
\EndIf

\State \textbf{return} $q_t\gets q^{\mathrm{best}}$.
\end{algorithmic}
\end{minipage}
\end{center}
We also use EG to approximate the maximizer of $J_t(q)$ for the betting-based method. The smoothing step at the end of each EG iteration helps prevent the optimized bet $\lambda_t^\star(\cdot)$ from becoming zero after the EG update. For computing the $\lambda_t^{\star}(\cdot)$, since the objective is one-dimensional and concave in $\lambda$, we compute $\lambda_t^\star(\cdot)$ by applying bisection to the first-order condition for $B_{\mathrm{bet}}$ iterations, with projection onto the endpoints of $\Lambda_{h}(m)$ when the maximizer is attained at the boundary.

\begin{center}
\begin{minipage}{0.95\linewidth}
\textbf{Subroutine 2: \texttt{Betting-UpdateEvalue}}
\begin{algorithmic}[1]\label{alg:oracle-betting-e-value}
\State \textbf{Input:} Candidate $m\in\mathcal M_K$, reference vector $r_t=p^\star$, querying distribution $q_t$, queried question $I_t$, observed outcome $Z_{I_t}$, number of bisection iterations $B_{\mathrm{bet}}=40$.
\State \textbf{Output:} Conditional e-value $E_t^{\mathrm{bet}}(m,q_t;r_t)$.

\State Use the known lower bound $h=\zeta/N$ guaranteed by the smoothing step.
\State Compute the optimal betting parameter
\[
\lambda_t^\star(m,q_t)
\in
\arg\max_{\lambda\in\Lambda_{h}(m)}
G_t\bigl(q_t,E_t^{\mathrm{bet}}(m,\lambda, r_t, q_t),m\bigr).
\]
following the bisection method stated in the \texttt{Betting-UpdateQueryPmf}.

\State \textbf{return} $E_t^{\mathrm{bet}}(m,\lambda_t^\star(m,q_t),r_t,q_t) = 1+\lambda_t^\star(m,q_t)\bigl(\Phi_t(r_t,q_t)-m\bigr)$, where $\Phi_t(r_t,q_t):=\frac{1}{N}\sum_{i=1}^N r_t(i)
+
\frac{Z_{I_t}-r_t(I_t)}{Nq_t(I_t)}$.
\end{algorithmic}
\end{minipage}
\end{center}

Finally, Figure~\ref{fig:oracle_coverage_over_time} reports the empirical coverage results, which are consistent with the nominal level $1-\alpha=0.95$, up to Monte Carlo variability due to the limited number of repetitions. Figure~\ref{fig:oracle_width_over_time} shows the corresponding CS width over time.

\FloatBarrier
\begin{figure}[t]
    \centering
    \includegraphics[width=0.95\textwidth]{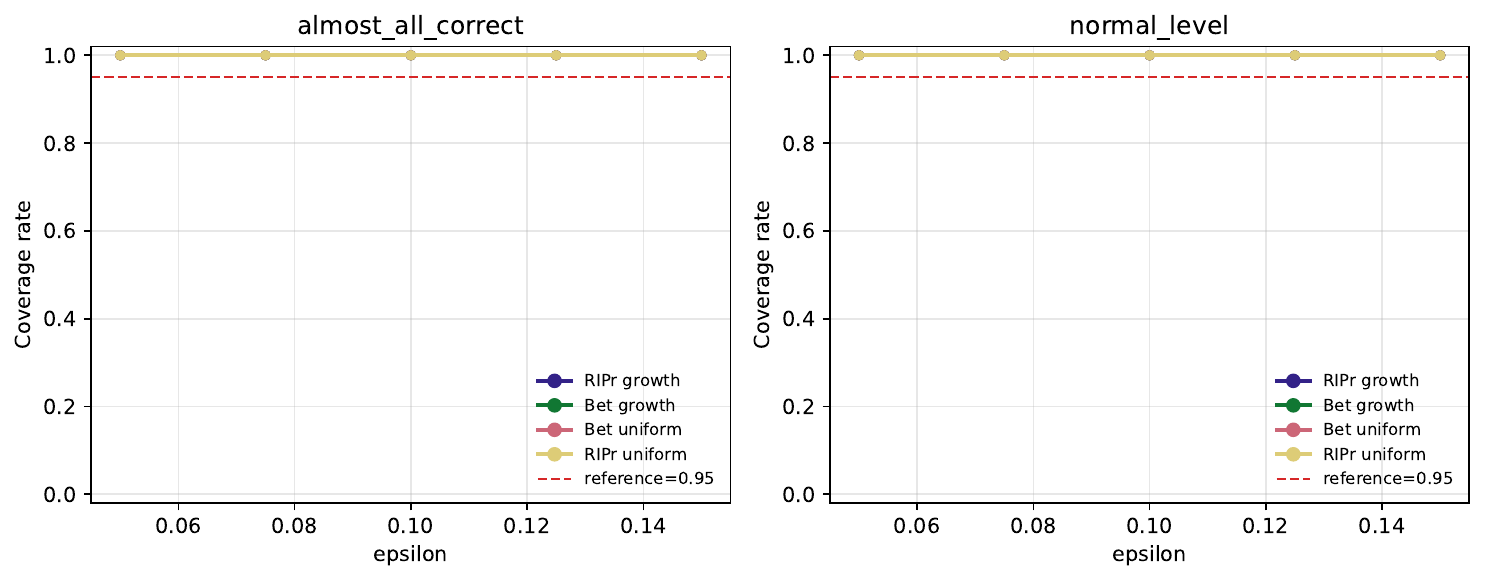}
    \caption{Oracle setting: anytime coverage rate for each target accuracy $\epsilon$. For each value of $\epsilon$, we check whether $\theta_A^{(b)}\in C_t$ for each $b\in[B]$ holds over all query rounds until reaching the stopping criterion for each $\epsilon$ value ($|C_t|\leq\epsilon$). The reported value is the rate of $\mathbf{1}\{\forall\,t\leq\tau_\epsilon,\,\theta_A^{(b)}\in [L_t, U_t]\}=1$, where $\tau_\epsilon:=\inf\{t\ge1:U_t-L_t\leq\epsilon\}$, over $R\times B$ repetitions. The dashed line indicates the nominal coverage level $1-\alpha=0.95$. Empirical coverage above this line is consistent with the level-$\alpha$ guarantee. The curves overlap in this figure.}
    \label{fig:oracle_coverage_over_time}
\end{figure}
\begin{figure}[t]
    \centering
    \includegraphics[width=0.8\textwidth]{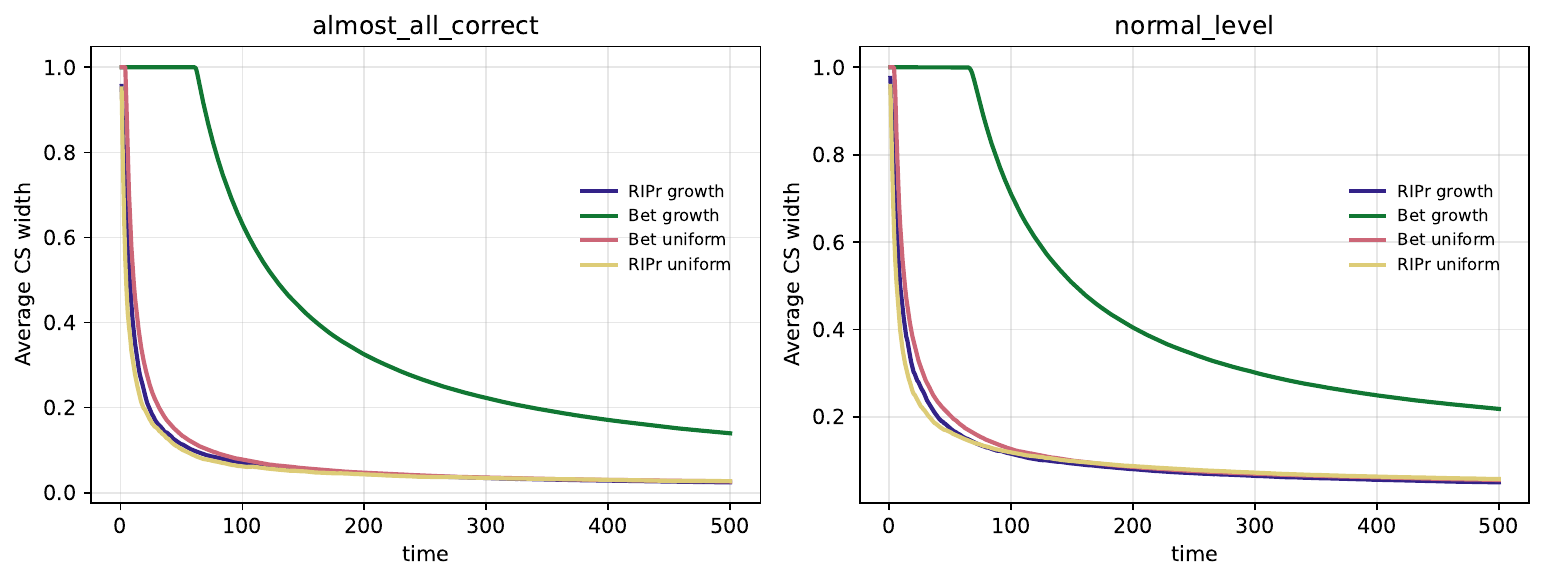}
    \caption{Average CS width over time. It is obtained in the same experiments conducted in Figure~\ref{fig:oracle_avg_stopping_time}.} 
    \label{fig:oracle_width_over_time}
\end{figure}

\subsection{Practical RIPr-based CS vs. Practical betting-based CS}\label{append:experiment_details_practical}
\paragraph{Simulation setting.}
This experiment follows the same setup as Appendix~\ref{append:experiment_details_oracle}, except that the ground-truth answer sets are generated under five different capability regimes, the grid size is set to $K=400$, and we use $R=2$ querying runs for each of $B=8$ independent ground-truth realizations. We next describe how these five capability regimes are constructed. For the training data, we apply the MMLU-PRO used in \cite{wuCandes2026efficientevaluationllmperformance}.

We start from the factor model fitted on the historical data. Following the factorization method in \cite{wuCandes2026efficientevaluationllmperformance}, the fitted factor model provides a matrix of historical model embeddings and a matrix of question embeddings, which together define a low-rank latent representation of question-level correctness probabilities. At a high level, the \texttt{train\_like} regime represents a new model whose question-level behavior is broadly similar to the historical models. The \texttt{mean\_shift\_mismatch} regime represents a model whose overall capability level is shifted downward while its question-level pattern remains broadly training-like. The \texttt{mean\_shift\_mismatch\_high\_accuracy} regime represents a model with substantially higher overall accuracy but still a largely training-like question pattern. The \texttt{out\_of\_family\_model\_mismatch} regime represents a model whose overall mean accuracy is not necessarily very different, but whose pattern of easy and hard questions is intentionally less similar to the historical model family. Finally, the \texttt{normal\_regime} is a hand-crafted heterogeneous benchmark consisting of a large easy subset, a moderate intermediate subset, and a small difficult subset.

The details of generating these regimes are as follows: Let $\{u_1,\dots,u_M\}\subset\mathbb R^d$ denote the historical model embeddings and let $\{v_1,\dots,v_N\}\subset\mathbb R^d$ denote the question embeddings learned from the historical data by \cite[Appendix A.1]{wuCandes2026efficientevaluationllmperformance}, after restricting attention to the $N=500$ questions used in the experiment. Let
\[
\mu_U := \frac{1}{M}\sum_{m=1}^M u_m,
\qquad
\Sigma_U := \mathrm{Cov}(u_1,\dots,u_M)
\]
be the empirical mean and covariance of the historical model embeddings. For each factor-model-based regime $r$, we generate a candidate pool of synthetic question-level probability vectors as follows. First, for each candidate index $c\in[M_{\mathrm{pool}}]$, we sample
\[
\widetilde u_r^{(c)} \sim \mathcal N(\mu_U,\Sigma_U),
\qquad
u_r^{(c)} := \mu_U + a_r\bigl(\widetilde u_r^{(c)}-\mu_U\bigr),
\]
where $a_r>0$ is a regime-specific latent scaling parameter. We then form question-level logits
\[
\ell_{r,j}^{(c)}
=
b_r\langle u_r^{(c)}, v_j\rangle
+\gamma_r
+\xi_{r,j}^{(c)},
\qquad j\in[N],
\]
where $b_r>0$ is a regime-specific logit scaling parameter, $\gamma_r$ is a global shift chosen to control the overall mean accuracy level, and $\xi_{r,j}^{(c)}$ is a regime-specific perturbation term. The resulting question-level correctness probabilities are
\[
p_{r,j}^{(c)} = \sigma\!\bigl(\ell_{r,j}^{(c)}\bigr),
\qquad
\sigma(x)=\frac{1}{1+e^{-x}}.
\]
Repeating this procedure yields a candidate pool $\{p_r^{(c)}\}_{c=1}^{M_{\mathrm{pool}}}\subset(0,1)^N$. We then choose one representative candidate from this pool as the regime-specific ground-truth vector $p_r^\star$, using a regime-dependent score that favors either similarity to or mismatch from the historical model family.

To describe these regime-dependent selection rules, let $\mathcal R$ denote a reference subset of historical probability rows, let
\[
\bar p(p):=\frac{1}{N}\sum_{j=1}^N p_j
\]
be the average correctness probability of a candidate vector $p\in[0,1]^N$, and let
\[
d_{\mathrm{RMSE}}(p,\mathcal R)
:=
\min_{q\in\mathcal R}
\Bigl(\frac{1}{N}\|p-q\|_2^2\Bigr)^{1/2}
\]
denote the nearest root-mean-square deviation from the historical reference set. We also define
\[
\rho_{\max}(p,\mathcal R)
:=
\max_{q\in\mathcal R}\mathrm{Corr}(p,q),
\qquad
\mathrm{pen}_{\mathrm{corr}}(p,\mathcal R)
:=
1-\rho_{\max}(p,\mathcal R),
\]
so that a larger correlation penalty indicates a less training-like question pattern. The \texttt{train\_like} regime selects a candidate with mean accuracy and question-level structure close to the historical family; \texttt{mean\_shift\_mismatch} and \texttt{mean\_shift\_mismatch\_high\_accuracy} select candidates whose question-level structure remains broadly training-like but whose overall mean accuracy is shifted downward or upward, respectively; and \texttt{out\_of\_family\_model\_mismatch} selects a candidate whose question-level structure is intentionally less similar to the historical family.

More concretely, the \texttt{train\_like} regime uses $(a_r,b_r)=(1,1)$ together with only mild residual perturbations and chooses a candidate with small $|\bar p(p)-\bar p_{\mathrm{hist}}|$, small $d_{\mathrm{RMSE}}(p,\mathcal R)$, and small $\mathrm{pen}_{\mathrm{corr}}(p,\mathcal R)$, where $\bar p_{\mathrm{hist}}$ denotes the average correctness level of the historical reference family. The \texttt{mean\_shift\_mismatch} regime uses the same latent geometry but chooses $\gamma_r$ so that $\bar p(p)$ is close to a lower target mean while still favoring structural similarity to $\mathcal R$. The \texttt{mean\_shift\_mismatch\_high\_accuracy} regime is defined analogously, except that the target mean is set to a substantially larger value (in our implementation, close to $0.97$). The \texttt{out\_of\_family\_model\_mismatch} regime increases the latent scaling parameter $a_r$ and then selects a candidate with larger $d_{\mathrm{RMSE}}(p,\mathcal R)$ and larger $\mathrm{pen}_{\mathrm{corr}}(p,\mathcal R)$, thereby inducing a different pattern of easy and hard questions while keeping the overall accuracy level similar.

The \texttt{normal\_regime} is constructed separately rather than selected from the factor-model candidate pool. In this case, the ground-truth vector $p^\star$ is formed directly by assigning $p_j^\star \in \{0.9,\,0.7,\,0.1\}$
to the questions so that $80\%$ of the coordinates equal $0.9$, $15\%$ equal $0.7$, and the remaining $5\%$ equal $0.1$, followed by a random permutation of the question order. This yields a heterogeneous benchmark with a large easy subset, a moderate intermediate subset, and a small difficult subset. The full implementation and parameter setting are provided in the accompanying GitHub code, \texttt{select\_regime\_models.py}.

A brief summary of each regime used in our experiments is in Table~\ref{tab:mmlu-pro-regime-setting}. Table~\ref{tab:mmlu-pro-regime-row-means} reports the average question-level correctness probability for each synthetic regime. The regimes \texttt{train\_like} and \texttt{mean\_shift\_mismatch} are structurally similar to the historical data, whereas the other three regimes exhibit larger discrepancies from the historical model family. Therefore, the prediction mismatch is larger in these three regimes, leading to worse performance on sampling efficiency.

\begin{table}[t]
\centering
\caption{Parameter settings for the synthetic capability regimes.}
\label{tab:mmlu-pro-regime-setting}
\small
\begin{tabular}{lcccc}
\toprule
Regime & $a_r$ & $b_r$ & Target mean & Selection preference \\
\midrule
\texttt{train\_like} 
& $1.0$ & $1.0$ & $\bar p_{\mathrm{hist}}$ 
& small RMSE, high corr \\

\texttt{mean\_shift\_mismatch} 
& $1.0$ & $1.0$ & lower target 
& small RMSE, high corr \\

\texttt{mean\_shift\_mismatch\_high\_accuracy} 
& $1.0$ & $1.0$ & $0.97$ 
& small RMSE, high corr \\

\texttt{out\_of\_family\_model\_mismatch} 
& $1.8$ & $1.0$ & training-like 
& large RMSE, low corr \\

\texttt{normal\_regime} 
& -- & -- & handcrafted 
& -- \\
\bottomrule
\end{tabular}
\end{table}

\begin{table}[t]
\centering
\caption{Average question-level correctness probability $\frac{1}{N}\sum_{i=1}^N p^\star_i$ for each synthetic regime constructed from \texttt{mmlu-pro} when $N=500$. The historical training-table average is $0.2598$.}
\label{tab:mmlu-pro-regime-row-means}
\begin{tabular}{lc}
\toprule
Regime & Row mean \\
\midrule
\texttt{train\_like} & 0.2540 \\
\texttt{mean\_shift\_mismatch} & 0.1619 \\
\texttt{mean\_shift\_mismatch\_high\_accuracy} & 0.9443 \\
\texttt{out\_of\_family\_model\_mismatch} & 0.4192 \\
\texttt{normal\_regime} & 0.8300 \\
\bottomrule
\end{tabular}
\end{table}

\paragraph{Sampling rules.}

The pseudo-code of the sampling rules for the predefined time-scheduled mixture and the data-dependent mixture is summarized below.
\begin{center}
\begin{minipage}{0.95\linewidth}
\textbf{Subroutine: \texttt{UpdateQueryPmf}}
\begin{algorithmic}[1]
\State \textbf{Input:} Current time $t$, previous prediction vector $\hat p_{t-1}\in[0,1]^N$, refinement scores $\{d_{t,j}\}_{j=1}^N$, mixture mode $\mathtt{mode}\in\{\mathtt{scheduled},\mathtt{data}\}$, mixture hyperparameters $\rho_{\mathrm{init}}$, scheduled parameter $a$, $l$, and $m_0$. $\beta_{\mathrm{target}}$.
\State \textbf{Output:} Querying distribution $q_t$.

\State Construct the refinement-oriented distribution by
\[
q_t^{\mathrm{refine}}(j)
\gets\mathrm{Norm}(\{d_{t,j}\}_{j=1}^{N})(j),\qquad j\in[N].
\]

\State Obtain $q_t^{\mathrm{grow}}$ by applying \texttt{RIPr-UpdateQueryPmf} or \texttt{Betting-UpdateQueryPmf} with $\zeta=0.05$ and reference vector $r_t=\hat p_{t-1}$. 

\If{$\mathtt{mode}=\mathtt{scheduled}$}
    \State Set $x_t\gets t/N$.
    \State Set
    \[
    \rho_t\gets \rho_{\mathrm{init}}(1-x_t)^a,
    \qquad
    \beta_t\gets \beta_{\mathrm{target}}\operatorname{NormLogistic}(x_t;l,m_0),
    \qquad
    w_t^{\mathrm{grow}}\gets 1-\rho_t-\beta_t.
    \]
\ElsIf{$\mathtt{mode}=\mathtt{data}$}
    \State Compute the refinement score
    \[
    s_\rho(t)\gets \operatorname{RefineSignal}\bigl(\{d_{t,j}\}_{j=1}^N\bigr).
    \]
    \State Compute the uniform-preference score
    \[
    s_\beta(t)\gets \operatorname{UniformPreference}(\hat p_{t-1}).
    \]
    \State Set
    \[
    \rho_t
    \gets
    \rho_{\mathrm{init}}s_\rho(t)(1-s_\beta(t)),
    \qquad
    \beta_t
    \gets
    \min\{\beta_{\mathrm{target}},1-\rho_t\}s_\beta(t),
    \qquad
    w_t^{\mathrm{grow}}
    \gets
    1-\rho_t-\beta_t.
    \]
\EndIf

\State Form the final querying distribution
\[
q_t
\gets
w_t^{\mathrm{grow}}q_t^{\mathrm{grow}}
+
\rho_t q_t^{\mathrm{refine}}
+
\beta_t\frac{\mathbf 1}{N}.
\]

\State Normalize for numerical stability:
\[
q_t
\gets
\frac{q_t}{\sum_{j=1}^N q_t(j)}.
\]

\State \textbf{return} $q_t$.
\end{algorithmic}
\end{minipage}
\end{center}

The refinement scores $d_{t,j}$ are equivalent to $d^{(t)}(j)$ in \cite[Appendix A.2.3]{wuCandes2026efficientevaluationllmperformance} for maximal posterior-variance-reducing. The norm logistic function is defined as $\operatorname{NormLogistic}(x_t;l,m_0)
=
\frac{\sigma\bigl(l(x_t-m_0)\bigr)-\sigma(-lm_0)}
{\sigma\bigl(l(1-m_0)\bigr)-\sigma(-lm_0)}$, where $\sigma(u)=1/(1+e^{-u})$. It is a smooth increasing function of the normalized time $x_t=t/N$, equal to $0$ at the beginning ($t=0$) and $1$ at the end of the horizon ($t=N$), with $l$ controlling the steepness and $m_0$ controlling the midpoint of the transition. Moreover, in the \texttt{scheduled} mode, the parameter $a$ additionally controls the decay rate of $\rho_t$, $\rho_{\mathrm{init}}$ sets its initial scale, and $\beta_{\mathrm{target}}$ controls the maximum target value of $\beta_t$. In the experiments for Figure~\ref{fig:ripr_only_avg_stop_over_time} and Figure~\ref{fig:bet_only_avg_stop_over_time}, we set $\rho_{\mathrm{init}} = 0.75, a = 1.5, \beta_{\mathrm{target}} = 0.7, l = 10$, and $m_0 = 0.7$. We set $\beta_{\mathrm{target}}<1$ because we expect the growth-oriented component to remain useful, although its benefit in this mixture design has not been theoretically analyzed and understood well. However, it is a natural surrogate for shrinking the CS width.

The function $\operatorname{RefineSignal}(\cdot)$ measures both the magnitude and the breadth of the refinement scores $\{d_{t,j}\}_{j=1}^N$, and returns the product of the two. The magnitude term captures how much larger the average top-$k$ refinement score is than the median score. A large magnitude indicates that some questions may be particularly useful for refining the prediction. However, this can also happen when only a small number of questions have very large scores. To avoid overemphasizing such highly concentrated cases, we introduce a breadth term based on the effective sample size of $q_t^{\mathrm{refine}}$: $\operatorname{ESS}_t
:=
\frac{1}{\sum_{j=1}^N (q_t^{\mathrm{refine}}(j))^2}$.
For example, if $q_t^{\mathrm{refine}}$ is uniform over $n$ questions, then $\operatorname{ESS}_t=n$. Thus, a larger ESS indicates that the refinement distribution is spread over more questions, while a smaller ESS indicates stronger concentration. Overall, $\operatorname{RefineSignal}(\cdot)$ becomes large only when the refinement scores have both a strong top-tail signal and sufficient breadth, suggesting that many questions are worth querying to improve the prediction.

The function $\operatorname{UniformPreference}(\cdot)$ measures how concentrated the entries of $\hat p_{t-1}$ are around a common level. We first compute the median of $\hat p_{t-1}$ and then calculate the fraction of coordinates that lie within a predefined tolerance $\delta$:
\[
\frac{1}{N}
\sum_{j=1}^{N}
\mathbf 1
\left\{
\left|
\hat p_{t-1}(j)
-\operatorname{median}\left(\{\hat p_{t-1}(\ell)\}_{\ell=1}^{N}\right)
\right|
\le \delta
\right\}.
\]
This fraction is then normalized to obtain a score representing how similar the predicted question-level correctness probabilities are. A larger value indicates that the predictions are more homogeneous, in which case uniform sampling becomes more favorable.

The implementations of $\operatorname{RefineSignal}(\cdot)$ and $\operatorname{UniformPreference}(\cdot)$ involve more predefined hyperparameters. We refer interested readers to the accompanying code, \texttt{mixture\_state\_dependent\_ripr\_cs\_final.py} and \texttt{mixture\_state\_dependent\_bet\_cs\_final.py}, for the full implementation details.

\paragraph{Practical e-value}
The practical conditional e-values for the RIPr-based and testing-by-betting-based approaches are updated using the subroutines \texttt{RIPr-UpdateEvalue} and \texttt{Betting-UpdateEvalue}, respectively, as described in Appendix~\ref{append:experiment_details_oracle}, with the reference vector set to $r_t=\hat p_{t-1}$, i.e., all the computations involving $p^\star$ are replaced by $\hat p_{t-1}$ at each round $t$.

Finally, we report the empirical coverage results in Figures~\ref{fig:ripr_only_coverage} and~\ref{fig:bet_only_coverage} to verify that the level-$\alpha$ guarantee is maintained. In both experiments, we set $\alpha=0.05$. We observe that most cases satisfy the target coverage level. A few cases fall slightly below the level, which we attribute to the Monte-Carlo error.

\begin{figure}[t]
    \centering
    \includegraphics[width=0.95\textwidth]{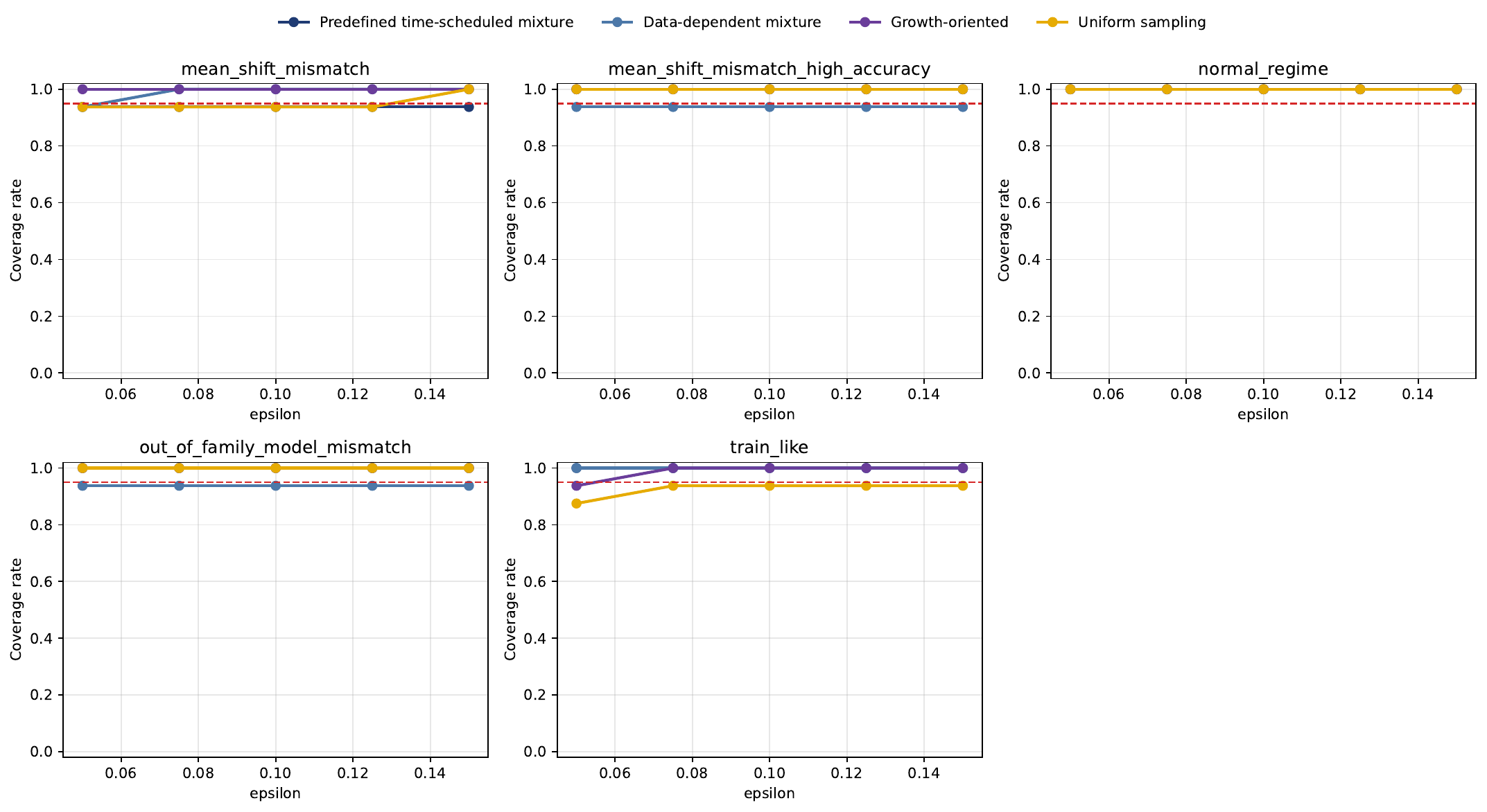}
    \caption{Practical setting: anytime coverage rate for each target accuracy $\epsilon$ for RIPr-based methods. For each value of $\epsilon$, we check whether $\theta_A^{(b)}\in C_t$ for each $b\in[B]$ holds over all query rounds until reaching the stopping criterion for each $\epsilon$ value ($|C_t|\leq\epsilon$). The reported value is the rate of $\mathbf{1}\{\forall\,t\leq\tau_\epsilon,\,\theta_A^{(b)}\in [L_t, U_t]\}=1$, where $\tau_\epsilon:=\inf\{t\ge1:U_t-L_t\leq\epsilon\}$, over $R\times B$ repetitions. If this rate is around $0.95$, then we say it is consistent with the level-$\alpha$ property, $\alpha=0.05$, considering Monte-Carlo error. The curves overlap in this figure.}
    \label{fig:ripr_only_coverage}
\end{figure}
\begin{figure}[t]
    \centering
    \includegraphics[width=0.95\textwidth]{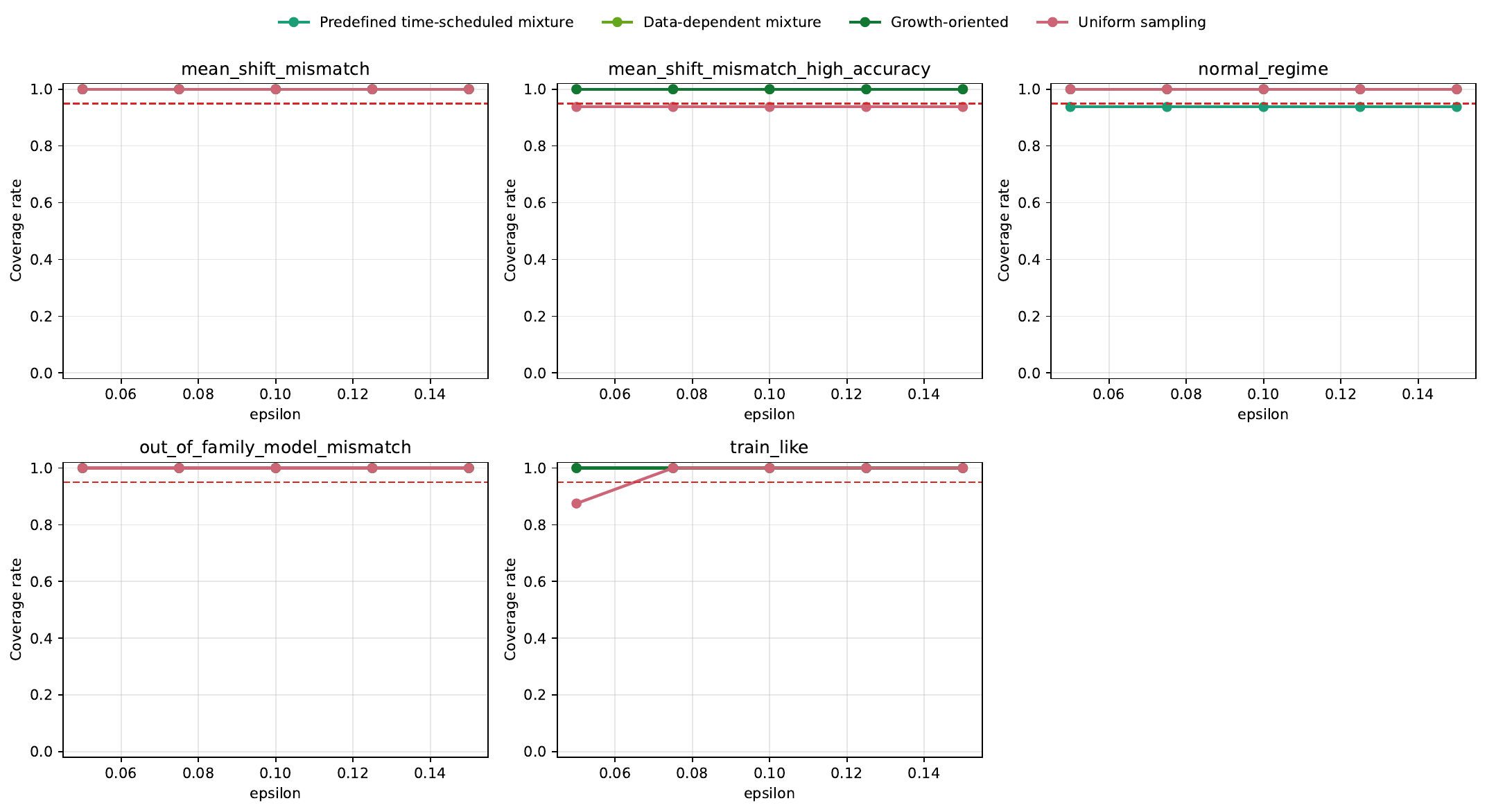}
    \caption{Practical setting: anytime coverage rate for each target accuracy $\epsilon$ for testing-by-betting-based CSs. For each value of $\epsilon$, we check whether $\theta_A^{(b)}\in C_t$ for each $b\in[B]$ holds over all query rounds until reaching the stopping criterion for each $\epsilon$ value ($|C_t|\leq\epsilon$). The reported value is the rate of $\mathbf{1}\{\forall\,t\leq\tau_\epsilon,\,\theta_A^{(b)}\in [L_t, U_t]\}=1$, where $\tau_\epsilon:=\inf\{t\ge1:U_t-L_t\leq\epsilon\}$, over $R\times B$ repetitions. If this rate is around $0.95$, then we say it is consistent with the level-$\alpha$ property, $\alpha=0.05$, considering Monte-Carlo error. The curves overlap in this figure.}
    \label{fig:bet_only_coverage}
\end{figure}

\begin{remark}
The proposed mixture querying rules are not unique, and other designs may perform better in certain capability regimes. Our goal is to provide two initial examples that show how growth, refinement, and uniform exploration can be combined, guided by the factors identified in our CS shrinkage analysis, thereby motivating future querying-rule designs.
\end{remark}

\subsection{Effect of the mismatch}\label{sec:exp_verify_plug-in_impact}
In this experiment, we isolate the effect of prediction mismatch and compare its impact on the RIPr-based and testing-by-betting-based CSs.

To control the mismatch, we use the same synthetic datasets as in Section~\ref{sec:exp_verify_oracle}, and, instead of using the Bayesian factor model from \cite{wuCandes2026efficientevaluationllmperformance}, we construct a noisy version of $p^\star$ as the plug-in prediction. Specifically, for each noise level $\sigma$, we set
\[
\hat p_t^{(\sigma)}(i)=\operatorname{clip}\left(p_i^\star+\xi_i^{(\sigma)},\epsilon,1-\epsilon\right),
\qquad
\xi_i^{(\sigma)}\sim \mathcal N(0,\sigma^2),
\qquad i\in[N],\ t\ge1.
\]
Therefore, we set the reference vector $r_t=\hat p_t^{(\sigma)}$ in Algorithm 1.
Here, the noise variables $\{\xi_i^{(\sigma)}\}_{i=1}^N$ are sampled once and then kept fixed over time to simplify the experiment. All other experimental settings are the same as those in Section~\ref{sec:exp_verify_oracle}. We repeat the experiment for $\sigma\in\{0, 0.1, 0.2, 0.3\}$. For each value of $\sigma$, we plot the average CS width over time, where the average is taken over all $B\times R$ independent runs. 

For the RIPr-based plug-in CS, the figure is shown in Figure~\ref{fig:plug-in_ripr_width_over_time}.
\begin{figure}[t]
    \centering
    \includegraphics[width=0.95\textwidth]{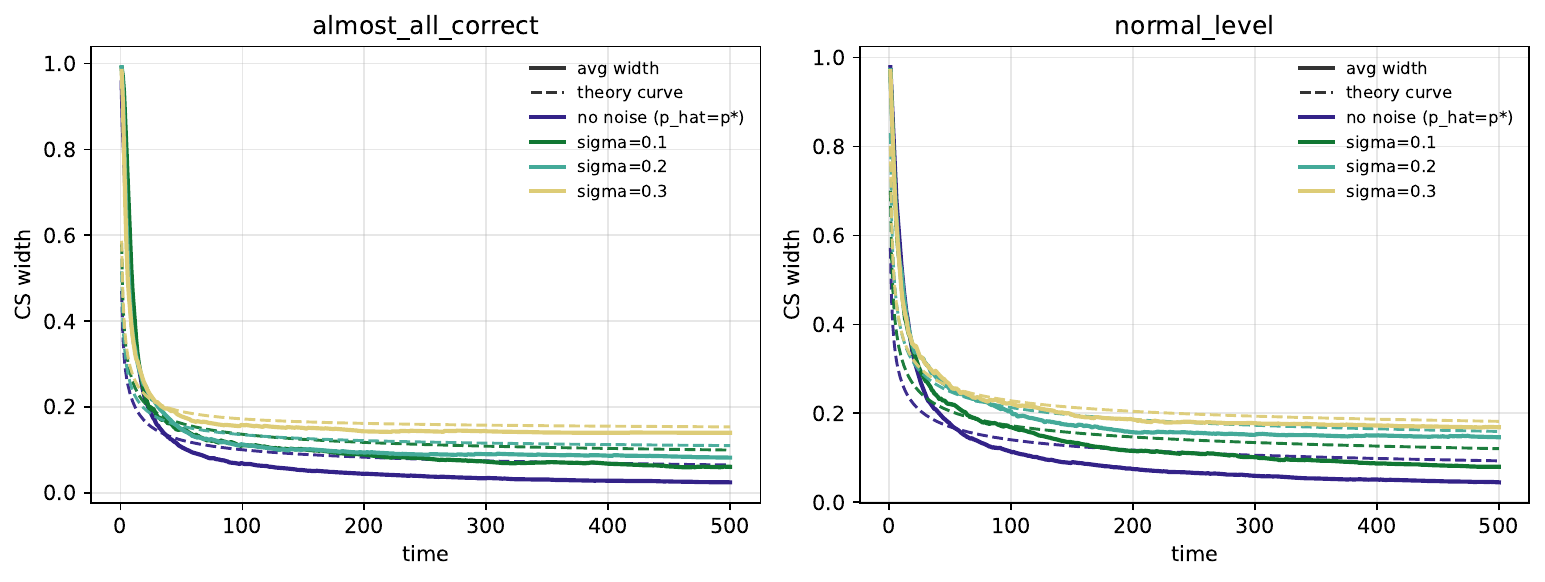}
    \caption{Plug-in RIPr-based CS: Average CS width over time for different values of $\sigma$. Larger values of $\sigma$ indicate larger prediction mismatch. In general, the CS width increases with $\sigma$, consistent with the behavior suggested by the theoretical curves.}
    \label{fig:plug-in_ripr_width_over_time}
\end{figure}
We scale the theoretical curve by finding a calibration constant to match the practical curve. In words, we find a constant $c$ for each theoretical curve such that $c = \arg\min_{c \ge 0} \sum_{t=1}^{N}(\texttt{avg\_width\_t} - c * \texttt{raw\_bound\_t})^2$.

For the betting-based plug-in CS, we observe a similar phenomenon. The difference is that, in this experiment, we set $q_t=0.4q_t^{\mathrm{method}}+0.6/N$. This is because, when the lower bound of $q_t$ is too small, i.e., when $h$ is too small, the valid betting range becomes narrow. As a result, the wealth process grows slowly, making it difficult to choose a constant that properly scales the theoretical bound to get an easily readable figure. The corresponding experimental results are shown in Figure~\ref{fig:plug-in_bet_width_over_time}.
\begin{figure}[t]
    \centering
    \includegraphics[width=0.95\textwidth]{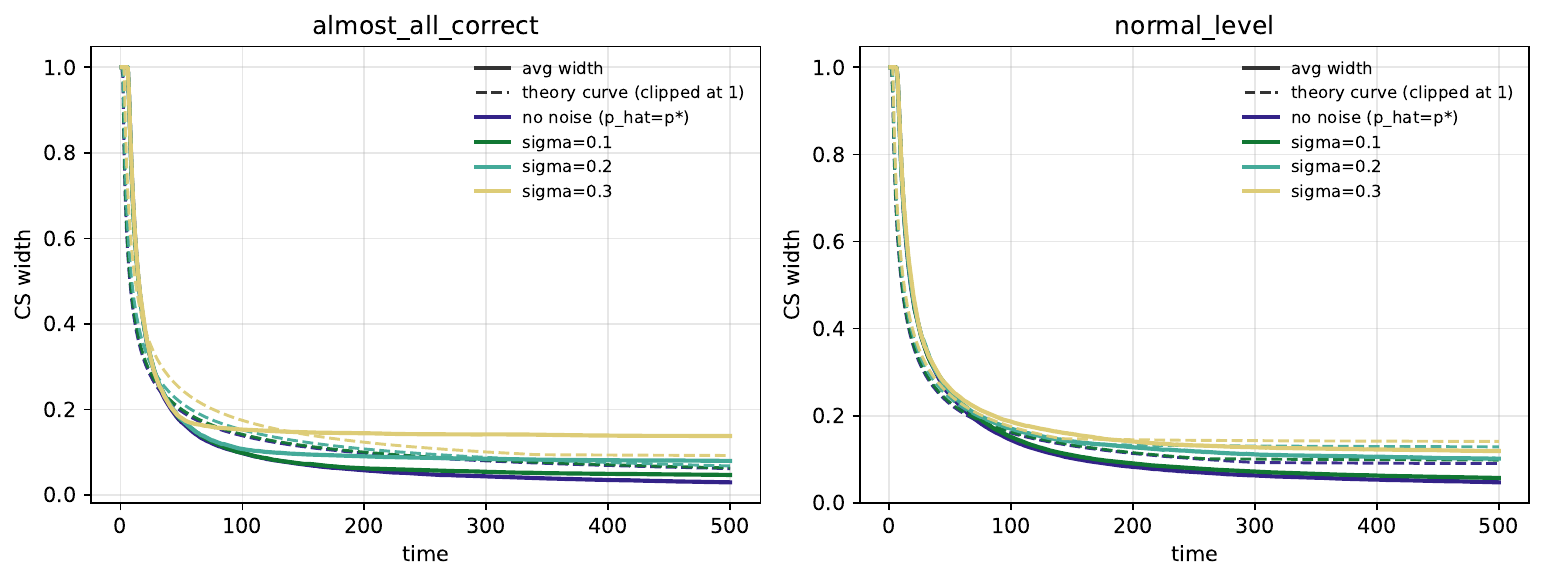}
    \caption{Plug-in testing-by-betting-based CS: Average CS width over time for different values of $\sigma$. Larger values of $\sigma$ indicate larger prediction mismatch. Compared with the RIPr-based CS in Figure~\ref{fig:plug-in_ripr_width_over_time}, the betting-based CS is less sensitive to increasing $\sigma$ in this experiment.}
    \label{fig:plug-in_bet_width_over_time}
\end{figure}

Then, we observe that, as $\sigma$ increases, the CS width deviates more substantially for the RIPr-based approach than for the testing-by-betting-based approach.

\section{Technical lemmas and theorems}

\begin{proposition}
\label{thm:quadratic_sampling_rule_and_bet}
Maximizing the quadratic lower bound \eqref{eq:exp_quadratic_lower_bound} over $\frac12\Lambda_h(m)$ yields the projected closed-form bet
\[
\lambda_t^{\mathrm{alt}}(m,q_t)
:=
\Pi_{\frac12\Lambda_h(m)}
\left\{
\frac{\theta^\star-m}{
2\left[
(\theta^\star-m)^2
+
\frac1{N^2}\sum_{i=1}^N
\frac{p_i^\star(1-p_i^\star)}{q_t(i)}
\right]}
\right\},
\]
where $\Pi_{\frac12\Lambda_h(m)}$ denotes the projection onto the domain $\frac12\Lambda_h(m)$. Moreover, if the unconstrained maximizer lies in $\frac12\Lambda_h(m)$, so that the projection is inactive, then after substituting $\lambda_t^{\mathrm{alt}}(m,q_t)$ into the quadratic lower bound in \eqref{eq:exp_quadratic_lower_bound}, the sampling distribution that maximizes this bound is $q_t^{\mathrm{alt}}(i)
\propto
\sqrt{p_i^\star(1-p_i^\star)}$ for all $i\in[N]$.
\end{proposition}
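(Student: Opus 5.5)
The plan is a direct moment computation followed by two one-dimensional optimizations. Write $X_t:=\Phi_t(p^\star,q_t)-m$. Conditionally on $\mathcal F_{t-1}$, with $I_t\sim q_t$ and $Z_{I_t}\sim\mathrm{Ber}(p^\star_{I_t})$, the AIPW correction is mean zero:
\[
\mathbb E\left[\tfrac{Z_{I_t}-p^\star_{I_t}}{Nq_t(I_t)}\,\middle|\,\mathcal F_{t-1}\right]=\sum_i q_t(i)\,\tfrac{\mathbb E[Z_i]-p_i^\star}{Nq_t(i)}=0 .
\]
Hence $\mathbb E[X_t\mid\mathcal F_{t-1}]=\theta^\star-m$. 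For the second moment, the conditional variance of $\Phi_t$ is
\[
\sum_i q_t(i)\,\frac{p_i^\star(1-p_i^\star)}{N^2q_t(i)^2}=\frac1{N^2}\sum_i\frac{p_i^\star(1-p_i^\star)}{q_t(i)}=:V(q_t).
\]
So $\mathbb E[X_t^2\mid\mathcal F_{t-1}]=(\theta^\star-m)^2+V(q_t)$, and \eqref{eq:exp_quadratic_lower_bound} becomes the scalar concave quadratic
\[
f(\lambda)=\lambda(\theta^\star-m)-\lambda^2\bigl[(\theta^\star-m)^2+V(q_t)\bigr].
\]

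\textbf{Step 1 (the bet).} Since $p_i^\star\in(0,1)$, we have $V(q_t)>0$, so the leading coefficient of $f$ is strictly negative and $f$ is strictly concave. Setting $f'(\lambda)=0$ gives the unconstrained maximizer $\lambda=(\theta^\star-m)/(2[(\theta^\star-m)^2+V(q_t)])$. The feasible set $\frac12\Lambda_h(m)$ is a closed interval, and a strictly concave one-dimensional function maximized over an interval attains its maximum at the Euclidean projection of the unconstrained maximizer onto that interval. This yields the stated formula for $\lambda_t^{\mathrm{alt}}$. I will also note that restricting to $\frac12\Lambda_h(m)$ is what makes the lower bound $\log(1+x)\ge x-x^2$ legitimate; this restriction is already justified in the text preceding \eqref{eq:exp_quadratic_lower_bound}.

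\textbf{Step 2 (optimizing over $q$).} When the projection is inactive, substituting the unconstrained maximizer gives $\max_\lambda f=(\theta^\star-m)^2/(4[(\theta^\star-m)^2+V(q_t)])$. This is strictly decreasing in $V(q_t)$ whenever $m\ne\theta^\star$; when $m=\theta^\star$ it is constant, so every $q$ maximizes it, and I will mention this degenerate case. Maximizing the bound over $q\in\Delta_N$ therefore reduces to minimizing $\sum_i a_i^2/q(i)$ subject to $\sum_i q(i)=1$, where $a_i:=\sqrt{p_i^\star(1-p_i^\star)}$. By Cauchy--Schwarz,
\[
\Bigl(\sum_i a_i\Bigr)^2=\Bigl(\sum_i \tfrac{a_i}{\sqrt{q(i)}}\sqrt{q(i)}\Bigr)^2\le \sum_i\frac{a_i^2}{q(i)}\cdot\sum_i q(i)=\sum_i\frac{a_i^2}{q(i)}.
\]
Equality holds if and only if $q(i)\propto a_i$, which gives $q_t^{\mathrm{alt}}$. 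A Lagrangian argument would give the same answer, and the strict convexity of $1/q$ yields uniqueness.

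\textbf{Main obstacle.} The subtle point is the interaction between the projection and the choice of $q$. The interval $\frac12\Lambda_h(m)$ depends on $h=\min_i q(i)$, so "inactive projection" is a condition on $q$ itself. I will state the conclusion as in the proposition: it concerns maximizing the substituted (unprojected) bound, and I read the hypothesis as inactivity at the relevant $q$, in particular at $q_t^{\mathrm{alt}}$. Under that hypothesis the reduced objective $(\theta^\star-m)^2/(4[(\theta^\star-m)^2+V(q)])$ is exactly the value of the bound, and the Cauchy--Schwarz step completes the argument. Beyond this interpretive point, the only care needed is in verifying the two conditional moments, which uses only predictability of $q_t$ and $p_i^\star\in(0,1)$.
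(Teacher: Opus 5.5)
Your proposal is correct and carries out exactly the ``simple concave optimization'' the paper has in mind (the paper omits the proof). You compute the conditional mean $\theta^\star-m$ and second moment $(\theta^\star-m)^2+\frac1{N^2}\sum_i p_i^\star(1-p_i^\star)/q_t(i)$, project the stationary point of the strictly concave quadratic, substitute it back, and minimize $\sum_i p_i^\star(1-p_i^\star)/q(i)$ over the simplex by Cauchy--Schwarz, the same device the paper uses in Lemma~\ref{lem:oracle_ripr_curvature}.

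Your worry about the $q$-dependence of $\frac12\Lambda_h(m)$ resolves cleanly. For every $q$, the projected maximum is at most the unprojected value $(\theta^\star-m)^2/(4[(\theta^\star-m)^2+V(q)])$. So inactivity at $q_t^{\mathrm{alt}}$ alone already makes $q_t^{\mathrm{alt}}$ optimal for the projected bound too.
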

We omit the proof since it is a simple concave optimization problem.

\begin{proposition}[Computing RIPr point]\label{prop:solve_ripr_point}
    Let a reference vector $r\in(0,1)^N$. For any $m\in(0,1)$, there exists a unique $\lambda^\star(m)$ such that
\[
H(\lambda^\star(m))=m,
\]
where 
\[
H(\lambda)
:=
\frac1N\sum_{i=1}^N p_i(\lambda)\quad\text{and}\quad
p_i(\lambda)
=
\begin{cases}
r_i,
&
\lambda=0,
\\[6pt]
\displaystyle
\frac{
2q_t(i)r_i
}{
q_t(i)+\lambda
+
\sqrt{(q_t(i)+\lambda)^2-4\lambda q_t(i)r_i}
},
&
\lambda\ne0.
\end{cases}
\]
Then, for any $m\in(0,1)$, the RIPr point $p_{q_t}^\dagger(m;r)_i
=
p_i(\lambda^\star(m))$ for any $i\in[N]$. In practice, $\lambda^\star(m)$ can be found by a 1-dimensional root-finding algorithm on solving $H(\lambda^\star(m))=m$, e.g., the bisection method.
\end{proposition}

\begin{proof}
We recall how to compute the RIPr point at each round $t$. Fix a round $t$, a querying distribution $q_t\in\Delta_N$ with $q_t(i)>0$ for all $i\in[N]$, and a reference vector $r\in(0,1)^N$. In the oracle case, we take $r=p^\star$; in the plug-in case, we take $r=\hat p_{t-1}$. For a candidate value $m\in(0,1)$, the RIPr point of $Q_{r,q_t}$ projected on $\mathcal{P}_m$ is defined as
\[
p_{q_t}^\dagger(m;r)
\in
\arg\min_{p\in\mathcal P_m}
D(Q_{r,q_t}\|Q_{p,q_t}),
\]
where
\[
\mathcal P_m
=
\left\{
p\in[0,1]^N:
\frac1N\sum_{i=1}^N p_i=m
\right\}.
\]
Since the querying distribution is the same in $Q_{r,q_t}$ and $Q_{p,q_t}$, the KL divergence decomposes as
\[
D(Q_{r,q_t}\|Q_{p,q_t})
=
\sum_{i=1}^N q_t(i)d_{\mathrm{kl}}(r_i,p_i),
\]
where $d_{\mathrm{kl}}(a,b)
=
a\log\frac{a}{b}
+
(1-a)\log\frac{1-a}{1-b}$.

Therefore, computing the RIPr point is equivalent to solving
\[
p_{q_t}^\dagger(m;r)
\in
\arg\min_{p\in\mathcal P_m}
\sum_{i=1}^N q_t(i)d_{\mathrm{kl}}(r_i,p_i).
\]

We first claim that the optimizer is unique and lies in the interior. Since $r_i\in(0,1)$, the term $d_{\mathrm{kl}}(r_i,p_i)$ diverges when $p_i$ approaches $0$ or $1$. Hence, for $m\in(0,1)$, the optimizer must satisfy $p_i\in(0,1)$ for all $i\in[N]$. Moreover, for each coordinate,
\[
\frac{\partial^2}{\partial p_i^2}
d_{\mathrm{kl}}(r_i,p_i)
=
\frac{r_i}{p_i^2}
+
\frac{1-r_i}{(1-p_i)^2}
>
0.
\]
Thus, the objective is strictly convex in $p$, resulting in a unique RIPr point.

We now derive the first-order condition. By the Lagrangian method, we have
\[
\mathcal L(p,\lambda)
=
\sum_{i=1}^N q_t(i)d_{\mathrm{kl}}(r_i,p_i)
+
\lambda
\left(
\sum_{i=1}^N p_i-Nm
\right),
\]
where $\lambda\in\mathbb R$ is the Lagrange multiplier for the equality constraint. Since the optimizer is interior, it satisfies
\[
\frac{\partial}{\partial p_i}\mathcal L(p,\lambda)=0,
\qquad i\in[N].
\]
Therefore, through a simple algebra, the first-order condition becomes
\[
q_t(i)
\frac{p_i-r_i}{p_i(1-p_i)}
+
\lambda
=
0,
\qquad i\in[N].
\]

For a fixed value of $\lambda$, this equation determines each coordinate $p_i$. Multiplying both sides by $p_i(1-p_i)$ gives
\[
q_t(i)(p_i-r_i)
+
\lambda p_i(1-p_i)
=
0.
\]
After rearranging terms, $p_i$ satisfies
\[
\lambda p_i^2
-
(q_t(i)+\lambda)p_i
+
q_t(i)r_i
=
0.
\]
When $\lambda=0$, the first-order condition gives $p_i=r_i$. When $\lambda\ne0$, the root that is continuous at $\lambda=0$ is
\[
p_i(\lambda)
=
\frac{
q_t(i)+\lambda
-
\sqrt{(q_t(i)+\lambda)^2-4\lambda q_t(i)r_i}
}{
2\lambda
}.
\]
For numerical stability, we use the equivalent rationalized form
\[
p_i(\lambda)
=
\frac{
2q_t(i)r_i
}{
q_t(i)+\lambda
+
\sqrt{(q_t(i)+\lambda)^2-4\lambda q_t(i)r_i}
}.
\]
Thus, we define
\[
p_i(\lambda)
=
\begin{cases}
r_i,
&
\lambda=0,
\\[6pt]
\displaystyle
\frac{
2q_t(i)r_i
}{
q_t(i)+\lambda
+
\sqrt{(q_t(i)+\lambda)^2-4\lambda q_t(i)r_i}
},
&
\lambda\ne0.
\end{cases}
\]

It remains to choose $\lambda$ so that the mean constraint is satisfied. Define
\[
H(\lambda)
=
\frac1N\sum_{i=1}^N p_i(\lambda).
\]
We need to solve $H(\lambda)=m$. Then, we first claim that $H(\lambda)$ is continuous and strictly decreasing. To see this, define
\[
F_i(p,\lambda)
=
q_t(i)\frac{p-r_i}{p(1-p)}
+
\lambda.
\]
Since $F_i(p_i(\lambda),\lambda)=0$, implicit differentiation gives
\[
\frac{d p_i(\lambda)}{d\lambda}
=
-
\frac{1}{
q_t(i)
\left(
\frac{r_i}{p_i(\lambda)^2}
+
\frac{1-r_i}{(1-p_i(\lambda))^2}
\right)
}.
\]
The denominator is positive, and hence
\[
\frac{d p_i(\lambda)}{d\lambda}<0.
\]
Thus, every $p_i(\lambda)$ is strictly decreasing in $\lambda$, and therefore $H(\lambda)$ is strictly decreasing. Moreover, using the explicit expression of $p_i(\lambda)$, we have
\[
\lim_{\lambda\to\infty}p_i(\lambda)=0,
\qquad
\lim_{\lambda\to-\infty}p_i(\lambda)=1,
\qquad i\in[N].
\]
Indeed, when $\lambda\to\infty$, the denominator in
\[
p_i(\lambda)
=
\frac{2q_t(i)r_i}
{q_t(i)+\lambda+\sqrt{(q_t(i)+\lambda)^2-4\lambda q_t(i)r_i}}
\]
diverges to infinity, while the numerator is fixed. Hence $p_i(\lambda)\to0$. On the other hand, when $\lambda\to-\infty$, the same expression yields $p_i(\lambda)\to1$. Therefore,
\[
\lim_{\lambda\to\infty}H(\lambda)=0,
\qquad
\lim_{\lambda\to-\infty}H(\lambda)=1.
\]

Therefore, there exists a unique $\lambda^\star(m)$ such that $H(\lambda^\star(m))=m$.
Therefore, the RIPr point is
\[
p_{q_t}^\dagger(m;r)_i
=
p_i(\lambda^\star(m)),
\qquad i\in[N].
\]

In practice, $\lambda^\star(m)$ can be found by bisection because $H$ is continuous and strictly decreasing. Let
\[
\bar r
=
\frac1N\sum_{i=1}^N r_i.
\]
If $m=\bar r$, then $\lambda^\star(m)=0$ and $p_{q_t}^\dagger(m;r)=r$. If $m<\bar r$, then $\lambda^\star(m)>0$; if $m>\bar r$, then $\lambda^\star(m)<0$. Thus, one can search for $\lambda^\star(m)$ on the corresponding half-line and return $p_{q_t}^\dagger(m;r)_i=p_i(\lambda^\star(m))$ for all $i\in[N]$.

\begin{remark}[Endpoint candidates]
This proposition is stated for $m\in(0,1)$. For $m=0$ or $m=1$, the optimizer over the closed null class is the all-zero or all-one vector, respectively. Since $r\in(0,1)^N$, the corresponding RIPr e-value may be infinite for observations incompatible with the endpoint null. This does not affect e-value validity when interpreted in the extended sense, but it is inconvenient for computation. Therefore, in practice, we restrict the numerical grid to the interior of $[0,1]$. 
\end{remark}

\end{proof}

\begin{lemma}[Product of conditional e-values is a test supermartingale]\label{lem:product_of_e-val_is_e-process}
    Let $E_t$ be a conditional e-value for a distribution class $\mathcal{P}$, i.e., $\mathbb{E}_p[E_t|\mathcal{F}_{t-1}]\leq1$ and $E_t\geq0$ for all $t\ge1$ almost surely for all $p\in\mathcal{P}$. Then, $\{W_t\}_{t\ge0}$ is a test supermaringale for $\mathcal{P}$ if $W_t:=\Pi_{s=1}^{t}E_s$ and $W_0\leq1$.
\end{lemma}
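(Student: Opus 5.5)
The plan is to verify the two defining properties of a test supermartingale directly from the definition $W_t:=\prod_{s=1}^t E_s$: nonnegativity and the one-step supermartingale inequality under every $p\in\mathcal P$. We also need $W_0\le 1$, which is assumed. With the empty-product convention, $W_0=1$ is the natural choice; if a general $W_0\le 1$ is intended, we would read $W_t:=W_0\prod_{s=1}^tE_s$, and the argument is unchanged.

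\textbf{Step 1: nonnegativity and adaptedness.} Each $E_s\ge0$ almost surely, so $W_t\ge0$ almost surely as a finite product of nonnegative factors. Each $E_s$ is $\mathcal F_s$-measurable, since it is a function of $(I_s,Z_{I_s})$ and of predictable quantities such as $q_s$ and the reference vector. Hence $W_{t-1}$ is $\mathcal F_{t-1}$-measurable and $W_t$ is $\mathcal F_t$-measurable.

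\textbf{Step 2: the supermartingale inequality.} Fix $p\in\mathcal P$ and $t\ge1$, and write $W_t=W_{t-1}E_t$. Since $W_{t-1}$ is nonnegative and $\mathcal F_{t-1}$-measurable, we pull it out of the conditional expectation, obtaining
\[
\mathbb E_p[W_t\mid\mathcal F_{t-1}]=W_{t-1}\,\mathbb E_p[E_t\mid\mathcal F_{t-1}]\le W_{t-1}
\quad\text{a.s.},
\]
where the inequality uses the conditional e-value property. Under the active-querying model, the conditional law of $(I_t,Z_{I_t})$ given $\mathcal F_{t-1}$ is $Q_{p,q_t}$, so this conditional expectation is exactly the quantity $\mathbb E_{(I,Z_I)\sim Q_{p,q_t}}[\cdot\mid\mathcal F_{t-1}]$ used in Section~\ref{sec:prelim}.

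\textbf{Main obstacle: integrability.} The only delicate point is that $W_{t-1}$ need not be known to be integrable a priori, so the ``pull out what is known'' rule needs justification. We would handle this with the version of conditional expectation for nonnegative, possibly extended-valued, variables, which needs no integrability assumption. An alternative is induction: $\mathbb E_p[W_t]=\mathbb E_p\bigl[W_{t-1}\mathbb E_p[E_t\mid\mathcal F_{t-1}]\bigr]\le\mathbb E_p[W_{t-1}]\le\cdots\le W_0\le1$. This uses the tower property and monotone convergence, and shows each $W_t$ is integrable, so the identity in Step 2 holds in the usual sense. Since $p\in\mathcal P$ was arbitrary, $\{W_t\}_{t\ge0}$ is a test supermartingale for $\mathcal P$.
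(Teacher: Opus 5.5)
Your proof is correct and follows essentially the same route as the paper: write $W_t=W_{t-1}E_t$, pull the $\mathcal F_{t-1}$-measurable factor $W_{t-1}$ out of the conditional expectation, apply the conditional e-value bound, and note nonnegativity. Your added remarks on adaptedness and integrability (via conditional expectation for nonnegative variables, or the induction giving $\mathbb E_p[W_t]\le 1$) supply details the paper leaves implicit.
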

\begin{proof}

    This proof is included for completeness. The proof is simple. Since $\mathbb{E}_p[W_t|\mathcal{F}_{t-1}]=\mathbb{E}_p[\Pi_{s=1}^{t}E_s|\mathcal{F}_{t-1}]=\Pi_{s=1}^{t-1} E_s\mathbb{E}_p[E_t|\mathcal{F}_{t-1}]\leq \Pi_{s=1}^{t-1}E_s=W_{t-1}$ for all $p\in\mathcal{P}$ and $W_t\ge0$ for any $t\ge0$, we get $\{W_t\}$ is a non-negative supermartingale for $\mathcal{P}$. In \cite{WaudbySmith2024BoundedBetting} and \cite{Ramdas_2025}, they also called such $\{W_t\}$ as ``test supermartingale". 
\end{proof}

\begin{lemma}[Freedman's inequality \cite{freedman1975tail,sfyraki2026}]\label{lem:freeman}
    Let $\zeta_1,\ldots,\zeta_T$ be a martingale difference sequence with a uniform upper bound
$|\zeta_t|\le b$ for all $t$. Denote $V_t$ as the sum of conditional variances of
$\zeta_s$'s, i.e., $V_t=\sum_{s=1}^t \mathrm{Var}(\zeta_s\mid \zeta_1,\ldots,\zeta_{s-1})$.
Also, denote $\sigma_t:=\sqrt{V_t}$. Then, for any $0<\delta<1/e$ and $T\ge 4$,
we have
\[
\mathbb{P}\left(
\exists t\le T:
\sum_{s=1}^t \zeta_s
\ge
2\max\left\{
2\sigma_t,\,
b\sqrt{\log(1/\delta)}
\right\}
\sqrt{\log(1/\delta)}
\right)
\le
\log(T)\delta.
\]
\end{lemma}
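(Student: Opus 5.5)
The plan is to combine an exponential supermartingale with Ville's inequality, and then peel over the size of the predictable variance $V_t$. Throughout, write $S_t:=\sum_{s=1}^t\zeta_s$ and $L:=\log(1/\delta)$. Note that $L>1$ because $\delta<1/e$.

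\textbf{Step 1 (exponential supermartingale).} Fix $\lambda\in(0,1/b]$. Then $\lambda\zeta_s\le 1$, so the elementary inequality $e^x\le 1+x+x^2$ for $x\le 1$ applies. Since $\zeta_s$ has conditional mean zero, $\mathbb{E}[\zeta_s^2\mid\zeta_1,\ldots,\zeta_{s-1}]=\mathrm{Var}(\zeta_s\mid\zeta_1,\ldots,\zeta_{s-1})$. Together these give
\[
\mathbb{E}\bigl[e^{\lambda\zeta_s}\mid \zeta_1,\ldots,\zeta_{s-1}\bigr]\le 1+\lambda^2\mathrm{Var}(\zeta_s\mid\zeta_1,\ldots,\zeta_{s-1})\le e^{\lambda^2\mathrm{Var}(\zeta_s\mid\zeta_1,\ldots,\zeta_{s-1})}.
\]
Hence $M_t(\lambda):=\exp(\lambda S_t-\lambda^2V_t)$ is a nonnegative supermartingale with $M_0=1$; the argument is the same as in Lemma~\ref{lem:product_of_e-val_is_e-process}. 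Ville's inequality then yields
\[
\mathbb{P}\bigl(\exists t\le T:\ S_t\ge L/\lambda+\lambda V_t\bigr)\le\delta .
\]

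\textbf{Step 2 (peeling over $V_t$).} The optimal $\lambda$ would be $\sqrt{L/V_t}$, but it is random, so I will use a deterministic grid of values. Define $v_0:=b^2L$ and $v_k:=4^k v_0$. Since $V_t\le Tb^2$, only $k$ with $v_{k-1}<Tb^2$ are relevant, and because $L\ge1$ there are at most about $\log_4 T+1$ such levels. The $\lambda$ assigned to each region of $V_t$ is as follows.
\begin{itemize}
\item If $V_t\le v_0$, use $\lambda_0=1/b$. Step 1 then gives $S_t\le bL+V_t/b\le 2bL$, which is at most $2b\sqrt{L}\cdot\sqrt L$.
\item If $v_{k-1}<V_t\le v_k$ with $k\ge1$, use $\lambda_k=\sqrt{L/v_k}$. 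This is admissible because $v_k\ge b^2L$ gives $\lambda_k\le 1/b$. Step 1 gives $S_t\le\sqrt{Lv_k}+V_t\sqrt{L/v_k}\le 2\sqrt{Lv_k}$. Since $v_k<4V_t$, this is at most $4\sigma_t\sqrt L$.
\end{itemize}
In both cases $S_t\le 2\max\{2\sigma_t,b\sqrt L\}\sqrt L$ on the good event for that level.

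\textbf{Step 3 (union bound).} Take a union bound over the levels. The failure probability is $\delta$ times the number of levels used, and this count must be shown to be at most $\log T$ when $T\ge4$.

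\textbf{Main obstacle.} I expect the difficulty to lie entirely in this last constant bookkeeping, not in the probabilistic argument. With ratio $4$ the count $1+\log_4 T$ can exceed $\ln T$ for small $T$; at $T=4$ it is $2>1.386$. My first attempt would be to merge the base level with level $1$ and use the slack in $L\ge1$, i.e.\ $v_0\ge b^2$, so that the first relevant level already covers $V_t\le 4b^2L$. If that is not enough, I would change the grid ratio to $e$ and recheck that the factor $v_k/V_t\le e$ still fits inside the constant $2\cdot 2$. Failing that, I would accept a slightly larger constant in front of $\log T$, which is harmless for the paper's use of the lemma in Theorems~\ref{thm:mismatch_impact_for_RIPr} and~\ref{thm:mismatch_impact_for_bet}. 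This matches the cited form in \cite{sfyraki2026}.
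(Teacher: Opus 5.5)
Your Steps 1 and 2 are correct. Because each Ville event is uniform in $t$, they even justify the ``$\exists t\le T$'' form with $\sigma_t$, which the paper only asserts by citation and does not prove.

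The gap is in Step 3. As written, your failure probability is $\delta$ times the number of levels, roughly $1+\lceil\log_4(T/L)\rceil$. None of your proposed repairs brings this down to $\log T$ uniformly over $T\ge 4$ and $\delta<1/e$:
\begin{itemize}
\item Shrinking the ratio to $e$ makes the count larger, about $1+\lceil\ln(T/L)\rceil$. This exceeds $\ln T$ when $\delta$ is near $1/e$, so that $L\approx 1$.
\item Merging the base level with level $1$ only rescues $T=4$. If you keep your bands $(v_{k-1},v_k]$ with ratio $4$, then already $T=5$ with $L\approx1$ needs two levels, and $2>\ln 5$.
\item If the merged level used $\lambda_1=1/(2b)$ instead of $\lambda_0=1/b$, it would fail outright, since $2bL+V_t/(2b)>2bL$ for small $V_t>0$.
\item Accepting a larger constant in front of $\log T$ proves a different statement.
\end{itemize}

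What is missing is that a single fixed $\lambda$ controls a much wider range of $V_t$ than the factor-$4$ band you assign to it. For $\lambda=\sqrt{L/w}$ with $w\ge b^2L$,
\[
\frac{L}{\lambda}+\lambda V_t=\sqrt{L}\Bigl(\sqrt{w}+\frac{V_t}{\sqrt{w}}\Bigr)\le 4\sigma_t\sqrt{L}
\quad\text{whenever}\quad
\sqrt{V_t/w}\in\bigl[2-\sqrt3,\,2+\sqrt3\bigr].
\]
In addition, $\lambda_0=1/b$ handles every $V_t\le b^2L$, since there $bL+V_t/b\le 2bL$. Hence $\lambda_0$ alone covers all $V_t\le(2+\sqrt3)^2b^2L\approx 13.9\,b^2L$.

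Now take $w_k=(2+\sqrt3)^{4k}b^2L$ for $k=0,1,\dots$. For $k\ge1$, level $k$ covers $[(2+\sqrt3)^{4k-2},(2+\sqrt3)^{4k+2}]\,b^2L$, which abuts the range of level $k-1$. Since $V_t\le Tb^2$ and $L>1$, you need only $K+1$ levels, where $K=\min\{k\ge0:(2+\sqrt3)^{4k+2}\ge T\}$. Concretely:
\begin{itemize}
\item one level suffices for $T\le 13.9$;
\item two levels suffice for $T\le 2702$;
\item in general, $K\ge1$ forces $\ln T>(4K-2)\ln(2+\sqrt3)\ge K+1$.
\end{itemize}
So the count is at most $\ln T$ for every $T\ge4$. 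This gives the stated $\log(T)\delta$ bound for the natural logarithm, and hence also for base $2$.
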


\begin{lemma}[Bound for a general CS width on a finite grid]
\label{lem:bound_for_general_CS_width}
Under the Bernoulli setting for the question correctness in
Section~\ref{sec:ripr_based_oracle_design}, let $E_s(m)>0$ be the
conditional e-value for $\mathcal P_m$ under a predictable selection rule.
Fix a finite grid
\[
\mathcal M_K=\{m_1,\ldots,m_K\}\subseteq[0,1],
\qquad m_1=0,\quad m_K=1.
\]
For each $m\in\mathcal M_K$, define
\[
\zeta_s(m)
:=
\log E_s(m)
-
\mathbb{E}_{Q_{p^\star,q_s}}
\left[
\log E_s(m)\mid\mathcal F_{s-1}
\right],
\qquad s\ge 1.
\]
Assume that $|\zeta_s(m)|\le b$ for all $m\in\mathcal M_K$ and $s\le N$.
Define
\[
V_t(m)
:=
\sum_{s=1}^t
\mathrm{Var}_{Q_{p^\star,q_s}}
\left(
\zeta_s(m)\mid \mathcal F_{s-1}
\right),
\qquad
V_t:=\sup_{m\in\mathcal M_K}V_t(m),
\qquad
\sigma_t:=\sqrt{V_t}.
\]
Then, for any $\delta\in(0,1)$ and $N\ge 4$, with probability at least
$1-\delta$, we have, for all $t\in[N]$,
\[
C_t\cap\mathcal M_K\subseteq C_t(\delta),
\]
where $C_t$ is the confidence sequence constructed by $E_s(m)$ through
Section~\ref{sec:prelim}, and
\[
C_t(\delta)
:=
\left\{
m\in\mathcal M_K:
\sum_{s=1}^{t}
\mathbb{E}_{Q_{p^\star,q_s}}
\left[
\log E_s(m)\mid\mathcal F_{s-1}
\right]
-
\mathrm{pen}_t(\delta)
\le
\log\frac{1}{\alpha}
\right\},
\]
with
\[
\mathrm{pen}_t(\delta)
:=
2\max\left\{
2\sigma_t,\,
b\sqrt{\log\left(\frac{K\log N}{\delta}\right)}
\right\}
\sqrt{\log\left(\frac{K\log N}{\delta}\right)}.
\]
\end{lemma}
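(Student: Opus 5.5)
The plan is to apply Freedman's inequality (Lemma~\ref{lem:freeman}) to the centered log-increments $\zeta_s(m)$ for each grid point $m$ separately. We then take a union bound over the $K$ grid points, and finally translate the resulting upper bound on the log-wealth into the stated inclusion. Concretely, for each fixed $m\in\mathcal M_K$ the sequence $\{\zeta_s(m)\}_{s\ge1}$ is a martingale difference sequence with respect to $\{\mathcal F_s\}$ under $Q_{p^\star,q_s}$. This holds because $q_s$ and the factor $E_s(m)$ are built predictably, so $\mathbb{E}[\zeta_s(m)\mid\mathcal F_{s-1}]=0$. By assumption $|\zeta_s(m)|\le b$.

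We apply Lemma~\ref{lem:freeman} to $-\zeta_s(m)$, since we need a \emph{lower} deviation bound on $\sum_s\zeta_s(m)$. We use horizon $T=N\ge4$ and confidence parameter $\delta' := \delta/(K\log N)$. This gives, for each $m$, that with probability at least $1-\log(N)\delta'=1-\delta/K$, simultaneously for all $t\le N$,
\[
\sum_{s=1}^t\zeta_s(m) > -2\max\left\{2\sigma_t(m),\,b\sqrt{\log(1/\delta')}\right\}\sqrt{\log(1/\delta')},
\]
where $\sigma_t(m)=\sqrt{V_t(m)}$. We have $\mathrm{Var}(-\zeta_s(m)\mid\mathcal F_{s-1})=\mathrm{Var}(\zeta_s(m)\mid\mathcal F_{s-1})$, and the filtration generated by the past $\zeta$'s is coarser than $\mathcal F_{s-1}$. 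So the version of Freedman's inequality conditioned on $\mathcal F_{s-1}$ is what we really use, and I would note this explicitly. Since $\sigma_t(m)\le\sigma_t$ and the right-hand side is monotone in $\sigma$, the threshold can be replaced by $-\mathrm{pen}_t(\delta)$. A union bound over the $K$ grid points then yields an event $\mathcal A$ with $\mathbb P(\mathcal A)\ge1-\delta$ on which
\[
\sum_{s=1}^t\log E_s(m)\ \ge\ \sum_{s=1}^t\mathbb{E}_{Q_{p^\star,q_s}}[\log E_s(m)\mid\mathcal F_{s-1}]-\mathrm{pen}_t(\delta)
\]
for all $m\in\mathcal M_K$ and all $t\le N$.

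On $\mathcal A$, take any $m\in C_t\cap\mathcal M_K$. By the construction in Section~\ref{sec:prelim}, $\log W_t(m)=\sum_{s\le t}\log E_s(m)<\log(1/\alpha)$. Combined with the display, this shows that the compensated sum minus $\mathrm{pen}_t(\delta)$ is at most $\log(1/\alpha)$, so $m\in C_t(\delta)$. This proves the inclusion simultaneously for all $t\in[N]$.

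The main obstacle is a technical subtlety in the Freedman step. The lemma requires $0<\delta'<1/e$, and the penalty uses $\sigma_t$, which is random and is defined as a supremum over $m$. I would handle the first point by noting that for $\delta'\ge1/e$ the statement is either trivial or follows by monotonicity after enlarging $\delta'$. For the second point, I would first apply Freedman with the per-$m$ variance $\sigma_t(m)$, which the lemma permits as a random predictable quantity, and only afterwards bound $\sigma_t(m)\le\sigma_t$. Bounding first would require a deterministic variance proxy, which is why the order matters.
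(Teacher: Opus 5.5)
Your proposal is correct and matches the paper's proof step for step. Both apply Freedman's inequality to $\{-\zeta_s(m)\}$ with parameter $\delta/(K\log N)$, pass from $V_t(m)$ to $V_t$ using monotonicity of the threshold, take a union bound over the $K$ grid points, and compare with $\sum_{s\le t}\log E_s(m)<\log(1/\alpha)$ for $m\in C_t\cap\mathcal M_K$. Your remark about needing the $\mathcal F_{s-1}$-conditional form of Freedman is a fair point that the paper glosses over.

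The side condition $\delta'<1/e$ that you flag needs no separate argument. Since $m_1=0\neq m_K=1$, we have $K\ge 2$, so $K\log N\ge 2\log 4>e$ and $\delta'<1/e$ holds automatically. This matters, because your fallback would fail: the threshold decreases as the Freedman parameter grows, so monotonicity runs the wrong way there.
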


\begin{proof}
For each $m\in\mathcal M_K$, by definition,
\[
\mathbb{E}_{Q_{p^\star,q_s}}
\left[
\zeta_s(m)\mid\mathcal F_{s-1}
\right]
=0.
\]
Hence $\{\zeta_s(m)\}_{s=1}^N$ is a martingale difference sequence under
the true distribution induced by $Q_{p^\star,q_s}$ and the predictable
selection rule.

Fix $m\in\mathcal M_K$. Applying Lemma~\ref{lem:freeman} to the martingale
difference sequence $\{-\zeta_s(m)\}_{s=1}^N$ with
\[
\eta=\frac{\delta}{K\log N},
\]
we obtain
\[
\mathbb P\left(
\exists t\le N:
-\sum_{s=1}^t\zeta_s(m)
\ge
2\max\left\{
2\sqrt{V_t(m)},\,
b\sqrt{\log\left(\frac{K\log N}{\delta}\right)}
\right\}
\sqrt{\log\left(\frac{K\log N}{\delta}\right)}
\right)
\le
\frac{\delta}{K}.
\]
Since $V_t(m)\le V_t$ and $\sigma_t=\sqrt{V_t}$, we further have
\[
\mathbb P\left(
\exists t\le N:
-\sum_{s=1}^t\zeta_s(m)
\ge
\mathrm{pen}_t(\delta)
\right)
\le
\frac{\delta}{K}.
\]
Taking a union bound over $m\in\mathcal M_K$, with probability at least
$1-\delta$, the following event holds:
\[
\mathcal B_\delta
:=
\left\{
\sum_{s=1}^t\zeta_s(m)
\ge
-\mathrm{pen}_t(\delta),
\quad
\forall t\le N,\ \forall m\in\mathcal M_K
\right\}.
\]

Now work on the event $\mathcal B_\delta$. For any $t\le N$ and any
$m\in C_t\cap\mathcal M_K$, by the definition of $C_t$,
\[
\prod_{s=1}^t E_s(m)\le \frac1\alpha.
\]
Equivalently,
\[
\sum_{s=1}^t \log E_s(m)
\le
\log\frac1\alpha.
\]
Using the definition of $\zeta_s(m)$, we have
\[
\sum_{s=1}^t \log E_s(m)
=
\sum_{s=1}^{t}
\mathbb{E}_{Q_{p^\star,q_s}}
\left[
\log E_s(m)\mid\mathcal F_{s-1}
\right]
+
\sum_{s=1}^t\zeta_s(m).
\]
On $\mathcal B_\delta$,
\[
\sum_{s=1}^t\zeta_s(m)
\ge
-\mathrm{pen}_t(\delta).
\]
Therefore,
\[
\sum_{s=1}^{t}
\mathbb{E}_{Q_{p^\star,q_s}}
\left[
\log E_s(m)\mid\mathcal F_{s-1}
\right]
-
\mathrm{pen}_t(\delta)
\le
\sum_{s=1}^t \log E_s(m)
\le
\log\frac1\alpha.
\]
Hence $m\in C_t(\delta)$. Since this argument holds for every
$m\in C_t\cap\mathcal M_K$ and every $t\le N$ on $\mathcal B_\delta$, we get
\[
C_t\cap\mathcal M_K\subseteq C_t(\delta),
\qquad \forall t\le N.
\]
Since $\mathbb P(\mathcal B_\delta)\ge 1-\delta$, the result follows.
\end{proof}

\begin{lemma}[Curvature lower bound for oracle RIPr growth]
\label{lem:oracle_ripr_curvature}
Under the Bernoulli setting for the question correctness, suppose that
\[
\mathcal P_m
=
\left\{
p\in[0,1]^N:
\frac1N\sum_{i=1}^N p_i=m
\right\},
\qquad
\theta^\star=\frac1N\sum_{i=1}^N p_i^\star.
\]
For any predictable querying distribution $q_t$ with $q_t(i)>0$ for all
$i\in[N]$, the oracle RIPr one-step expected log-growth satisfies $G_t(q_t, E_t^{\mathrm{RIPr}}(m, p^\star,q_t),p^\star)
\ge
c_t(q_t)(\theta^\star-m)^2$,
where $c_t(q_t)
:=
\frac{2N^2}{\sum_{i=1}^N 1/q_t(i)}$.
\end{lemma}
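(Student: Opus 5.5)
The plan is to identify the oracle RIPr growth with a KL divergence, and then lower bound that divergence by a quadratic using Pinsker's inequality followed by Cauchy--Schwarz.

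\textbf{Step 1: growth equals the projected KL.} By construction, $E_t^{\mathrm{RIPr}}(m,p^\star,q_t)$ is the likelihood ratio between $Q_{p^\star,q_t}$ and $Q_{p^\dagger,q_t}$, where $p^\dagger:=p^\dagger_{q_t}(m;p^\star)$. The querying factor $q_t(I_t)$ cancels in this ratio. Taking the conditional expectation of its logarithm under $Q_{p^\star,q_t}$ therefore gives
\[
G_t\bigl(q_t,E_t^{\mathrm{RIPr}},p^\star\bigr)=D_{\mathrm{KL}}\bigl(Q_{p^\star,q_t}\,\Vert\,Q_{p^\dagger,q_t}\bigr)=\sum_{i=1}^N q_t(i)\,d_{\mathrm{kl}}(p_i^\star,p_i^\dagger).
\]
Since $p^\dagger$ is the minimizer over $\mathcal P_m$, this equals $\min_{p\in\mathcal P_m}\sum_i q_t(i)d_{\mathrm{kl}}(p_i^\star,p_i)$. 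For $m\in(0,1)$ the minimizer is interior and well defined by Proposition~\ref{prop:solve_ripr_point}. For $m\in\{0,1\}$ with $m\neq\theta^\star$, the growth is $+\infty$ (the extended-sense convention of the endpoint remark), so the bound holds trivially there.

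\textbf{Step 2: Pinsker's inequality.} The binary form of Pinsker's inequality is $d_{\mathrm{kl}}(a,b)\ge 2(a-b)^2$. Applying it coordinatewise gives
\[
G_t\ge 2\min_{p\in\mathcal P_m}\sum_i q_t(i)(p_i^\star-p_i)^2\ge 2\min_{\delta\in\mathbb R^N:\ \sum_i\delta_i=N(\theta^\star-m)}\sum_i q_t(i)\delta_i^2 .
\]
The second inequality relaxes the box constraint $p\in[0,1]^N$ and keeps only the linear constraint. Writing $\delta_i:=p_i^\star-p_i$, that constraint reads $\sum_i\delta_i=N\theta^\star-Nm$.

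\textbf{Step 3: Cauchy--Schwarz.} For any feasible $\delta$, writing $\delta_i=\sqrt{q_t(i)}\,\delta_i\cdot q_t(i)^{-1/2}$ gives
\[
N^2(\theta^\star-m)^2=\Bigl(\sum_i\delta_i\Bigr)^2\le\Bigl(\sum_i q_t(i)\delta_i^2\Bigr)\Bigl(\sum_i 1/q_t(i)\Bigr).
\]
Hence $\sum_i q_t(i)\delta_i^2\ge N^2(\theta^\star-m)^2/\sum_i q_t(i)^{-1}$. Combining this with Step 2 yields $G_t\ge c_t(q_t)(\theta^\star-m)^2$ with $c_t(q_t)=2N^2/\sum_i 1/q_t(i)$. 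Equality in Cauchy--Schwarz holds at $\delta_i\propto 1/q_t(i)$, so the relaxed quadratic bound is tight, though the full inequality need not be.

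\textbf{Main obstacle.} The only delicate point is Step 1: justifying that the expected log-increment is exactly the KL divergence to the RIPr point, with the $q_t$ factors cancelling in the ratio. This needs $p^\dagger\in(0,1)^N$, so that the log-ratio is finite and integrable. That interiority follows from the argument in the proof of Proposition~\ref{prop:solve_ripr_point}, since $d_{\mathrm{kl}}(p_i^\star,\cdot)$ diverges at $0$ and $1$. The remaining steps are routine inequalities.
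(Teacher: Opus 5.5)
Your proof is correct and follows essentially the same route as the paper: write the growth as $\sum_i q_t(i)\,d_{\mathrm{kl}}(p_i^\star,p_i^\dagger)$, apply binary Pinsker, and then apply Cauchy--Schwarz using $\sum_i (p_i^\star-p_i^\dagger)=N(\theta^\star-m)$. Your relaxation to a minimum over $\delta$ in Step 2 is harmless but unnecessary, since the paper applies Cauchy--Schwarz directly to $a_i=p_i^\star-p_i^\dagger$; your separate treatment of $m\in\{0,1\}$ is a small extra detail the paper's proof leaves out.
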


\begin{proof}
By the definition of the oracle RIPr factor,
\[
G_t(q_t, E_t^{\mathrm{RIPr}}(m, p^\star,q_t),p^\star)
=
\sum_{i=1}^N
q_t(i)
D_{\mathrm{KL}}
\left(
\mathrm{Bern}(p_i^\star)
\middle\|
\mathrm{Bern}(p_{q_t}^\dagger(m;p^\star)_i)
\right).
\]
By Pinsker's inequality for Bernoulli distributions, $D_{\mathrm{KL}}
\left(
\mathrm{Bern}(p)
\middle\|
\mathrm{Bern}(q)
\right)
\ge
2(p-q)^2$.
Hence
\[
G_t(q_t, E_t^{\mathrm{RIPr}}(m, p^\star,q_t),p^\star)
\ge
2\sum_{i=1}^N
q_t(i)
\left(
p_i^\star-p_{q_t}^\dagger(m)_i
\right)^2.
\]
Let $a_i=p_i^\star-p_{q_t}^\dagger(m;p^\star)_i$.
Since $p_{q_t}^\dagger(m;p^\star)\in\mathcal P_m$, we have
\[
\sum_{i=1}^N a_i
=
N(\theta^\star-m).
\]
By Cauchy-Schwarz and $q_t(i)>0$,
\[
\left(\sum_{i=1}^N a_i\right)^2
=\left(\sum_{i=1}^{N}a_i\frac{1}{\sqrt{q_t(i)}}\sqrt{q_t(i)}\right)^2\le
\left(
\sum_{i=1}^N q_t(i)a_i^2
\right)
\left(
\sum_{i=1}^N \frac1{q_t(i)}
\right).
\]
Therefore,
\[
\sum_{i=1}^N q_t(i)a_i^2
\ge
\frac{N^2(\theta^\star-m)^2}
{\sum_{i=1}^N 1/q_t(i)}.
\]
Combining the above inequalities yields
\[
G_t(q_t, E_t^{\mathrm{RIPr}}(m, p^\star,q_t),p^\star)
\ge
\frac{2N^2}{\sum_{i=1}^N 1/q_t(i)}
(\theta^\star-m)^2.
\]
This proves the claim.
\end{proof}

\begin{lemma}[Mismatch in RIPr]\label{lem:mismatch_in_ripr}
    Under the Bernoulli setting for the question correctness in Section~\ref{sec:ripr_based_oracle_design} and assuming $q_t(i)>0$ and $p_i^\star,\hat p_{t-1}(i)\in(0,1)$ for all $i$, we have, for any $m\in[0,1]$, $t\ge1$,
    \begin{align}
         0\leq G_t(q_t, E_t^{\mathrm{RIPr}}(m, p^\star,q_t),p^\star) - G_t(q_t, E_t^{\mathrm{RIPr}}(m,\hat p_{t-1},q_t),p^\star)\leq D(Q_{p^\star,q_t}\|Q_{\hat p_{t-1},q_t}),
    \end{align}
    where $E_t^{\mathrm{RIPr}}(\cdot)$ is defined in Proposition~\ref{prop:ripr_validity}.
\end{lemma}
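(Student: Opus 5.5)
The plan is to write both expected log-increments explicitly as (differences of) weighted binary KL divergences under $Q_{p^\star,q_t}$. The upper bound then follows from the variational definition of the RIPr point, and the lower bound from the oracle optimality \eqref{eq:ripr_oracle_optimality}. Throughout, fix $t$ and condition on $\mathcal F_{t-1}$, so that $q_t$, $\hat p_{t-1}$, $p^\dagger:=p^\dagger_{q_t}(m;p^\star)$ and $\hat p^\dagger:=p^\dagger_{q_t}(m;\hat p_{t-1})$ are fixed vectors. I will treat $m\in(0,1)$, where by Proposition~\ref{prop:solve_ripr_point} both RIPr points lie in $(0,1)^N$ and every logarithm is finite. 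For $m\in\{0,1\}$ I would appeal to the endpoint remark after Proposition~\ref{prop:solve_ripr_point}: both RIPr points are the all-zero (resp.\ all-one) vector, so both $G$'s are $+\infty$ and the difference is read with the extended-value convention.

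\textbf{Lower bound.} The plug-in factor $E_t^{\mathrm{RIPr}}(m,\hat p_{t-1},q_t)$ is a conditional e-value for $\mathcal P_m$. This holds by Proposition~\ref{prop:ripr_validity} applied with reference vector $\hat p_{t-1}$ in place of $p^\star$, which is legitimate because $\hat p_{t-1}$ is $\mathcal F_{t-1}$-measurable, as noted in Section~\ref{sec:prac_e-values}. Hence it belongs to $\mathcal E_m(q_t)$, and \eqref{eq:ripr_oracle_optimality} gives $G_t(q_t,E^{\mathrm{RIPr}}_t(m,p^\star,q_t),p^\star)\ge G_t(q_t,E^{\mathrm{RIPr}}_t(m,\hat p_{t-1},q_t),p^\star)$.

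\textbf{Upper bound.} First, as in the proof of Lemma~\ref{lem:oracle_ripr_curvature}, the oracle increment is
\[
G_t(q_t,E_t^{\mathrm{RIPr}}(m,p^\star,q_t),p^\star)=\sum_i q_t(i)\,d_{\mathrm{kl}}(p_i^\star,p^\dagger_i)=\min_{p\in\mathcal P_m}D(Q_{p^\star,q_t}\|Q_{p,q_t}).
\]
Second, for the plug-in factor I insert $p^\star_{I_t}$ into the likelihood ratio. Writing $f_r(i,z):=r_i^z(1-r_i)^{1-z}$, we have
\[
\log\frac{f_{\hat p_{t-1}}}{f_{\hat p^\dagger}}=\log\frac{f_{p^\star}}{f_{\hat p^\dagger}}-\log\frac{f_{p^\star}}{f_{\hat p_{t-1}}}.
\]
Taking expectation under $Q_{p^\star,q_t}$ gives
\[
G_t(q_t,E_t^{\mathrm{RIPr}}(m,\hat p_{t-1},q_t),p^\star)=D(Q_{p^\star,q_t}\|Q_{\hat p^\dagger,q_t})-D(Q_{p^\star,q_t}\|Q_{\hat p_{t-1},q_t}).
\]
Subtracting this from the oracle expression, the difference equals
\[
\min_{p\in\mathcal P_m}D(Q_{p^\star,q_t}\|Q_{p,q_t})-D(Q_{p^\star,q_t}\|Q_{\hat p^\dagger,q_t})+D(Q_{p^\star,q_t}\|Q_{\hat p_{t-1},q_t}).
\]
Since $\hat p^\dagger\in\mathcal P_m$, the first two terms together are $\le 0$, which yields the claimed bound $D(Q_{p^\star,q_t}\|Q_{\hat p_{t-1},q_t})$.

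\textbf{Main obstacle.} The step needing the most care is the bookkeeping behind the decomposition, rather than any real difficulty. I must confirm that all terms are finite, so that splitting the expectation is legitimate. This uses $p^\star,\hat p_{t-1}\in(0,1)^N$, interiority of $\hat p^\dagger$ from Proposition~\ref{prop:solve_ripr_point}, and $q_t(i)>0$. I must also confirm that the KL terms decompose coordinatewise as $\sum_i q_t(i)d_{\mathrm{kl}}(\cdot,\cdot)$, since the querying marginal $q_t$ is common to both measures. Otherwise the argument is a two-line comparison.
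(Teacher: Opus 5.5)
Your proposal is correct and takes essentially the same route as the paper. The lower bound is the oracle optimality \eqref{eq:ripr_oracle_optimality} applied to the (valid, predictable) plug-in factor. For the upper bound, your identity $G_t(q_t,E_t^{\mathrm{RIPr}}(m,\hat p_{t-1},q_t),p^\star)=D(Q_{p^\star,q_t}\|Q_{\hat p^\dagger,q_t})-D(Q_{p^\star,q_t}\|Q_{\hat p_{t-1},q_t})$ is a rearrangement of the paper's split of the difference into $D(P\|\hat P)+\mathbb E_P[\log(d\hat P_m^\dagger/dP_m^\dagger)]$, where $P=Q_{p^\star,q_t}$ and $\hat P=Q_{\hat p_{t-1},q_t}$. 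Both then use the minimizing property of the RIPr point over $\mathcal P_m$ to drop the extra term.

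One minor quibble concerns $m\in\{0,1\}$. There both growths are $+\infty$, and no extended-value convention defines $\infty-\infty$. However, $\mathcal P_m$ is then a singleton, so the two RIPr points coincide. Cancelling the common denominator pointwise makes the difference exactly $D(Q_{p^\star,q_t}\|Q_{\hat p_{t-1},q_t})$, so both inequalities hold. The paper's proof skips this case too.
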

\begin{proof}
We only prove the upper bound since the lower bound is from the oracle optimality of RIPr e-value in \eqref{eq:ripr_oracle_optimality}. Fix $t\ge1$, $m\in[0,1]$, and condition on $\mathcal F_{t-1}$. Then $q_t$ and $\hat p_{t-1}$ are fixed. For notational simplicity, write
\[
P:=Q_{p^\star,q_t},
\qquad
\hat P:=Q_{\hat p_{t-1},q_t}.
\]
Let $P_m^\dagger:=Q_{p_{q_t}^\dagger(m;p^\star),q_t}$ be the RIPr point of $P$ onto the null class $\mathcal P_m$, and let $\hat P_m^\dagger:=Q_{\hat p_{q_t}^\dagger(m;\hat p_{t-1}),q_t}$ be the RIPr point of $\hat P$ onto the same null class. For simplicity, we suppress the conditional expectation expression. By definition,
\[
 G_t(q_t, E_t^{\mathrm{RIPr}}(m, p^\star,q_t),p^\star)
=
\mathbb E_P\left[\log\frac{dP}{dP_m^\dagger}\right],
\]
and the plug-in RIPr e-value has expected log-growth
\[
G_t(q_t,E_t^{\mathrm{RIPr}}(m,\hat p_{t-1},q_t),p^\star)
=
\mathbb E_P\left[\log\frac{d\hat P}{d\hat P_m^\dagger}\right].
\]
Therefore,
\begin{align}
&G_t(q_t, E_t^{\mathrm{RIPr}}(m, p^\star,q_t),p^\star)
-
G_t(q_t,E_t^{\mathrm{RIPr}}(m,\hat p_{t-1},q_t),p^\star)\\
&=
\mathbb E_P\left[
\log\frac{dP}{dP_m^\dagger}
-
\log\frac{d\hat P}{d\hat P_m^\dagger}
\right] \\
&=
\mathbb E_P\left[\log\frac{dP}{d\hat P}\right]
+
\mathbb E_P\left[\log\frac{d\hat P_m^\dagger}{dP_m^\dagger}\right] \\
&=
D(P\|\hat P)
+
\mathbb E_P\left[\log\frac{d\hat P_m^\dagger}{dP_m^\dagger}\right].
\end{align}

It remains to show that the second term is smaller than or equal to zero. Since $P_m^\dagger$ is the reverse information projection of $P$ onto the null class, it minimizes $D(P\|Q)$ over all $Q\in\mathcal P_m$. Since $\hat P_m^\dagger\in\mathcal P_m$, the optimality of $P_m^\dagger$ implies
\[
D(P\|\hat P_m^\dagger)
\ge
D(P\|P_m^\dagger).
\]
Equivalently, we have
\[
\mathbb E_P\left[\log\frac{dP}{d\hat P_m^\dagger}\right]
\ge
\mathbb E_P\left[\log\frac{dP}{dP_m^\dagger}\right],
\]
which gives $\mathbb E_P\left[\log\frac{d\hat P_m^\dagger}{dP_m^\dagger}\right]\le 0$.
Hence,
\[
G_t(q_t, E_t^{\mathrm{RIPr}}(m, p^\star,q_t),p^\star)
-
G_t(q_t,E_t^{\mathrm{RIPr}}(m,\hat p_{t-1},q_t),p^\star)
\le
D(P\|\hat P)=D(Q_{p^\star,q_t}\|Q_{\hat p_{t-1},q_t}).
\]
This proves the desired upper bound.
\end{proof}

\section{Proof of main theorems}

\subsection{Proof of Proposition~\ref{prop:ripr_validity}}\label{proof:proof_of_exp_log-wealth_opt_fix_query_strategy}
\begin{proof}
This proof is included for completeness. Let us first prove the validity of the RIPr one-factor value is a conditional e-value.
The positivity of $E^{\mathrm{RIPr}}_t(m)$ is trivial. We only show the expectation condition. For notational simplicity, we denote $E_t^{\mathrm{RIPr}}(m, p^\star, q_t)$ as $E_t^{\mathrm{RIPr}}(m)$ and $\mathbb{E}_{(I,Z_{I})\sim Q_{p,q}}[\cdot\,|\,\mathcal{F}_{t-1}]$ as $\mathbb{E}_{Q_{p,q}}[\cdot]$.

\paragraph{For $m\in(0,1)$.}
Under a fixed $\mathcal{F}_{t-1}$, $q_t$ is fixed. Hence, for notational simplicity, we suppress the conditioning on $\mathcal{F}_{t-1}$, let $q\equiv q_t$, and $p^\dagger_{q_t}(m;p^\star)\equiv p^\dagger$. By simple algebra, we have
\begin{align}
\mathbb E_{Q_{p,q}}\!\left[E_t^{\mathrm{RIPr}}(m)\right]-1
&=
\sum_{i=1}^N q(i)
\sum_{z\in\{0,1\}}
p_i^z(1-p_i)^{1-z}
\frac{
(p_i^\star)^z(1-p_i^\star)^{1-z}
}{
(p_i^\dagger)^z(1-p_i^\dagger)^{1-z}
} -1 \\
&=
\sum_{i=1}^N q(i)
\left[
p_i\frac{p_i^\star}{p_i^\dagger}
+
(1-p_i)\frac{1-p_i^\star}{1-p_i^\dagger}-1
\right]\\
&=\sum_{i=1}^N q(i)
\frac{(p_i-p_i^\dagger)(p_i^\star-p_i^\dagger)}
{p_i^\dagger(1-p_i^\dagger)}.
\end{align}
Then, we want to show $\sum_{i=1}^N q(i)
\frac{(p_i-p_i^\dagger)(p_i^\star-p_i^\dagger)}
{p_i^\dagger(1-p_i^\dagger)}\leq 0$.

By the RIPr point $p^\dagger
\in\arg\min_{p\in\mathcal P_m}
\sum_{i=1}^{N} q_t(i)d_{\mathrm{kl}}(p_i^\star,p_i)$, we know it is well-defined under the condition stated in Proposition~\ref{prop:ripr_validity} and $m\in(0,1)$ in this case. Hence, by standard finite-dimensional convex analytic techniques, we can get a clean characterization of the oracle e-value in terms of the RIPr.

The Lagrangian can be written as, for some $\mu\in\mathbb{R}$ and $a_i, b_i\in[0,\infty)$,
\[
\mathcal L(p,\mu,a,b)
=
\sum_{i=1}^N q(i) d_{\mathrm{kl}}(p_i^\star,p_i)
+
\mu\left(\sum_{i=1}^N p_i-Nm\right)
-
\sum_{i=1}^N a_ip_i
+
\sum_{i=1}^N b_i(p_i-1).
\]
On taking the derivative w.r.t $p_i$, the RIPr point $p^\dagger$ must satisfy the stationarity condition:
\[
q(i)
\frac{p_i^\dagger-p_i^\star} {p_i^\dagger(1-p_i^\dagger)} +\mu - a_i + b_i = 0, \qquad \forall i\in[N].
\]
Moreover, since $\min_{i\in[N]}p^\star_i>0$ and $\max_{i\in[N]}p^\star_i<1$ implies that $\min_{i\in[N]}p^\dagger_i>0$ and $\max_{i\in[N]}p^\dagger_i<1$, respectively (or the minimization is infinite). Therefore, by complementary slackness and the fact that $\min_{i\in[N]}p^\dagger_i>0$ and $\max_{i\in[N]}p^\dagger_i<1$, 
\[
a_i p_i^\dagger=0,
\qquad
b_i(p_i^\dagger-1)=0,
\qquad \forall i\in[N],
\]
we know $a_i=b_i=0$. 

Consequently, $p^\dagger$ must satisfy $q(i)
\frac{p_i^\dagger-p_i^\star} {p_i^\dagger(1-p_i^\dagger)} +\mu=0$,  $\forall i\in[N]$.
By this property, for any $p\in\mathcal{P}_m$, we get
\begin{align}
    \mathbb E_{Q_{p,q}}\!\left[E_t^{\mathrm{RIPr}}(m)\right]-1 &= \sum_{i=1}^N q(i)
\frac{(p_i-p_i^\dagger)(p_i^\star-p_i^\dagger)}
{p_i^\dagger(1-p_i^\dagger)}\\
&=-\mu\sum_{i=1}^{N}(p_i-p_i^\dagger)=0,
\end{align}
thereby, $\mathbb E_{Q_{p,q}}\!\left[E_t^{\mathrm{RIPr}}(m)\right]=1$, completing the proof.

Note that the proof only requires each $p^\star_i$ to lie in the open interval $(0,1)$ and to be $\mathcal{F}_{t-1}$-measurable at round $t$. Therefore, the same argument continues to hold if $p^\star$ is replaced by any predictable vector $r_t\in(0,1)^N$. In particular, the plug-in construction in Section~\ref{sec:prac_e-values}, which replaces $p^\star$ by the prediction $\hat p_{t-1}$, still yields valid conditional e-values as long as $\hat p_{t-1}(i)\in(0,1)$ for all $i\in[N]$, where this is true in our experimental implementations.

\paragraph{For $m=0$ or $m=1$.}
For the endpoint candidates, the RIPr point is defined by a limiting argument. When $m=0$, the null class reduces to the singleton $(0,\ldots,0)$, and we set $p_q^\dagger(0;p^\star)=(0,\ldots,0)$. The corresponding RIPr factor is interpreted as an extended nonnegative random variable:
\[
E_t^{\mathrm{RIPr}}(0,p^\star,q_t)
=
\begin{cases}
1-p^\star_{I_t}, & Z_{I_t}=0,\\
+\infty, & Z_{I_t}=1.
\end{cases}
\]
Similarly, when $m=1$, we set $p_q^\dagger(1;p^\star)=(1,\ldots,1)$ and define
\[
E_t^{\mathrm{RIPr}}(1,p^\star,q_t)
=
\begin{cases}
p^\star_{I_t}, & Z_{I_t}=1,\\
+\infty, & Z_{I_t}=0.
\end{cases}
\]
These extended-valued factors remain valid conditional e-values, because the infinite values occur only on events that have probability zero under the corresponding endpoint null.

Then, we also show the oracle optimality results for completeness. Fix $t\geq1$ and condition on $\mathcal F_{t-1}$. Since $q_t$ is predictable, it is fixed conditional on $\mathcal F_{t-1}$. For notational simplicity, write
\[
P:=Q_{p^\star,q_t},
\qquad
Q^\dagger:=Q_{p^\dagger_{q_t}(m;p^\star),q_t}.
\]
By the RIPr supporting condition, the likelihood ratio
\[
E_t^{\mathrm{RIPr}}(m)
=
\frac{dP}{dQ^\dagger}(I_t,Z_{I_t})
\]
is a valid conditional e-value for the composite null $\mathcal P_m$.

Let $E_t(m)\in\mathcal E_m(q_t)$ be any other conditional e-value. Since $Q^\dagger$ belongs to the null class induced by $\mathcal P_m$, the conditional e-value property gives
\[
\mathbb E_{Q^\dagger}
\left[
E_t(m)\mid \mathcal F_{t-1}
\right]
\leq 1.
\]
Then,
\[
\begin{aligned}
\mathbb E_{P}
\left[
\log E_t(m)\mid \mathcal F_{t-1}
\right]
&=
\mathbb E_{P}
\left[
\log\left(
E_t(m)\frac{dQ^\dagger}{dP}
\right)
\mid \mathcal F_{t-1}
\right]
+
\mathbb E_P
\left[
\log \frac{dP}{dQ^\dagger}
\mid \mathcal F_{t-1}
\right] \\
&\leq
\log
\mathbb E_{P}
\left[
E_t(m)\frac{dQ^\dagger}{dP}
\mid \mathcal F_{t-1}
\right]
+
D_{\mathrm{KL}}(P\|Q^\dagger) \\
&=
\log
\mathbb E_{Q^\dagger}
\left[
E_t(m)
\mid \mathcal F_{t-1}
\right]
+
D_{\mathrm{KL}}(P\|Q^\dagger) \\
&\leq
D_{\mathrm{KL}}(P\|Q^\dagger).
\end{aligned}
\]
The first inequality follows from Jensen's inequality. On the other hand, for the RIPr e-value,
\[
\begin{aligned}
\mathbb E_P
\left[
\log E_t^{\mathrm{RIPr}}(m)
\mid \mathcal F_{t-1}
\right]
&=
\mathbb E_P
\left[
\log \frac{dP}{dQ^\dagger}
\mid \mathcal F_{t-1}
\right] \\
&=
D_{\mathrm{KL}}(P\|Q^\dagger).
\end{aligned}
\]
Therefore,
\[
G(q_t,E_t^{\mathrm{RIPr}}(m),m,p^\star)
\geq
G(q_t,E_t(m),m,p^\star)
\qquad \text{a.s.}
\]

\end{proof}

\subsection{Proof of Proposition~\ref{prop:test_by_bet_validity}}\label{proof:test_by_bet_validity}
\begin{proof}
Fix $t\geq1$ and condition on $\mathcal F_{t-1}$. Since $q_t$ and $\lambda_t(m)$ are predictable, they are fixed conditional on $\mathcal F_{t-1}$. The condition $\min_i q_t(i)>0$ ensures that $\Phi_t\equiv\Phi_t(p^\star, q_t)$ is well-defined and finite for every possible observation.

First, by the definition of $\Lambda_{h_t}(m)$, for every possible realization $(I_t,Z_{I_t})=(i,z)$,
\[
1+\lambda_t(m)\left(
\frac{1}{N}\sum_{j=1}^{N}p_j^\star
+
\frac{z-p_i^\star}{Nq_t(i)}
-m
\right)
\geq 0.
\]
Therefore, $E_t^{\mathrm{bet}}(m)\geq0$ almost surely.

Then, we compute its conditional expectation under any $p\in\mathcal P_m$. Under $Q_{p,q_t}$, we have
\begin{align}
\mathbb{E}_{Q_{p,q_t}}
\left[
\Phi_t
\,\middle|\,
\mathcal F_{t-1}
\right]
&=
\frac{1}{N}\sum_{i=1}^{N}p_i^\star
+
\frac{1}{N}
\sum_{i=1}^{N}
q_t(i)
\frac{
\mathbb E[Z_{I_t}\mid I_t=i]-p_i^\star
}{
q_t(i)
} \\
&=
\frac{1}{N}\sum_{i=1}^{N}p_i^\star
+
\frac{1}{N}
\sum_{i=1}^{N}
(p_i-p_i^\star) \\
&=
\frac{1}{N}\sum_{i=1}^{N}p_i.
\end{align}
Since $p\in\mathcal P_m$, we have
\[
\frac{1}{N}\sum_{i=1}^{N}p_i=m.
\]
Therefore, $\mathbb{E}_{Q_{p,q_t}}
\left[
\Phi_t-m
\,\middle|\,
\mathcal F_{t-1}
\right]
=
0$, i.e. $\Phi_t$ is unbiased for $p$ with any predictable $\{q_t\}$.
Using that $\lambda_t(m)$ is $\mathcal F_{t-1}$-measurable, we get
\begin{align}
\mathbb{E}_{Q_{p,q_t}}
\left[
E_t^{\mathrm{bet}}(m)
\,\middle|\,
\mathcal F_{t-1}
\right]
&=
\mathbb{E}_{Q_{p,q_t}}
\left[
1+\lambda_t(m)(\Phi_t-m)
\,\middle|\,
\mathcal F_{t-1}
\right] \\
&=
1+\lambda_t(m)
\mathbb{E}_{Q_{p,q_t}}
\left[
\Phi_t-m
\,\middle|\,
\mathcal F_{t-1}
\right] \\
&=
1.
\end{align}
Thus, $\mathbb{E}_{(I,Z_I)\sim Q_{p,q_t}}
\left[
E_t^{\mathrm{bet}}(m)
\,\middle|\,
\mathcal F_{t-1}
\right]
\leq1$,
for every $p\in\mathcal P_m$. This completes the proof.

Note that the proof only requires each $p^\star_i$ to be $\mathcal{F}_{t-1}$-measurable at round $t$. Again, we can readily verify that the plug-in construction in Section~\ref{sec:prac_e-values}, which replaces $p^\star$ by the prediction $\hat p_{t-1}$, still yields valid conditional e-values since it is predictable.
\end{proof}

\subsection{Proof of Theorem~\ref{thm:mismatch_impact_for_RIPr}}\label{proof:proof_of_mismatch_impact_for_RIPr}
We start by providing the formal statement of the theorem.
\begin{theorem}[(Formal) Approximated width of RIPr approach]
For any $t\ge1$, suppose each element of the prediction $\hat p_{t-1}$ and the RIPr point of $Q_{\hat p_{t-1},q_t}$ projected on the null $\mathcal{P}_m$, $p^\dagger_{q_t}(m;\hat p_{t-1})$, is in the range $(\vartheta,1-\vartheta)$ for any $t\ge1$ for some $\vartheta>0$, the querying rule satisfies $q_t(i)\geq h$ for some $h\in(0,1/N]$ for any $i\in[N]$. With probability at least probability $1-\delta$, for any $t\in[N]$,  
\begin{align}
    |C_t^{\mathrm{RIPr}}\cap\mathcal{M}_K|&\leq2\sqrt{\frac{2\max\left\{2\sigma^{\mathrm{RIPr}}_t,2L_\vartheta\sqrt{\ell_{K,N,\delta}}\right\}\sqrt{\ell_{K,N,\delta}}+\sum_{s=1}^{t}D(Q_{p^\star,q_s}\|Q_{\hat p_{s-1}, q_s})+\log(1/\alpha)}{t2Nh}}.
\end{align}
where $\mathcal{M}_K$ is the set of $K$ grids, $\sigma^{\mathrm{RIPr}}_t:=\sqrt{\sup_{m\in\mathcal{M}_K}\sum_{s=1}^{t}\mathrm{Var}(\log E_s^{\mathrm{RIPr}}(\cdot)\,|\,\mathcal{F}_{t-1})}$, $L_\vartheta:=\log\frac{1-\vartheta}{\vartheta}$ and $\ell_{K,N,\delta}:=\log\left(\frac{K\log N}{\delta}\right)$.
\end{theorem}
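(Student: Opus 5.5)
We combine three earlier results: Lemma~\ref{lem:bound_for_general_CS_width} (the Freedman-based inclusion $C_t\cap\mathcal M_K\subseteq C_t(\delta)$), Lemma~\ref{lem:mismatch_in_ripr} (the plug-in loss in expected log-growth is at most the KL mismatch), and Lemma~\ref{lem:oracle_ripr_curvature} (quadratic growth of the oracle RIPr). The idea is to sandwich the accumulated expected log-growth of the plug-in RIPr e-value from below by a quadratic in $(\theta^\star-m)$ minus the accumulated mismatch. Any surviving grid point $m$ must then lie close to $\theta^\star$.

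\textbf{Step 1 (boundedness and Freedman).} By assumption, both $\hat p_{t-1}(i)$ and $p^\dagger_{q_t}(m;\hat p_{t-1})_i$ lie in $(\vartheta,1-\vartheta)$. Hence each plug-in log-factor
\[
\log E_t^{\mathrm{RIPr}}(m,\hat p_{t-1},q_t)=Z\log\frac{\hat p_{t-1}(I)}{p^\dagger_I}+(1-Z)\log\frac{1-\hat p_{t-1}(I)}{1-p^\dagger_I}
\]
has absolute value at most $L_\vartheta=\log\frac{1-\vartheta}{\vartheta}$. Its centered version $\zeta_s(m)$ then satisfies $|\zeta_s(m)|\le 2L_\vartheta=:b$. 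Apply Lemma~\ref{lem:bound_for_general_CS_width} with this $b$ and $\sigma_t=\sigma_t^{\mathrm{RIPr}}$. With probability at least $1-\delta$, every $m\in C_t^{\mathrm{RIPr}}\cap\mathcal M_K$ with $t\le N$ then satisfies
\[
\sum_{s=1}^t G_s\bigl(q_s,E_s^{\mathrm{RIPr}}(m,\hat p_{s-1},q_s),p^\star\bigr)\le \log(1/\alpha)+\mathrm{pen}_t(\delta),
\]
where $\mathrm{pen}_t(\delta)=2\max\{2\sigma_t^{\mathrm{RIPr}},2L_\vartheta\sqrt{\ell_{K,N,\delta}}\}\sqrt{\ell_{K,N,\delta}}$. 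This matches the numerator term in the statement.

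\textbf{Step 2 (lower bound on growth).} For each $s$, Lemma~\ref{lem:mismatch_in_ripr} gives
\[
G_s(\text{plug-in})\ge G_s(\text{oracle})-D(Q_{p^\star,q_s}\|Q_{\hat p_{s-1},q_s}).
\]
Lemma~\ref{lem:oracle_ripr_curvature} gives $G_s(\text{oracle})\ge c_s(q_s)(\theta^\star-m)^2$. Since $q_s(i)\ge h$, we have $\sum_i 1/q_s(i)\le N/h$, so $c_s\ge 2N^2/(N/h)=2Nh$. Summing over $s$ and combining with Step 1, every surviving grid point satisfies
\[
2Nh\,t\,(\theta^\star-m)^2\le \log(1/\alpha)+\mathrm{pen}_t(\delta)+\sum_{s\le t}D(Q_{p^\star,q_s}\|Q_{\hat p_{s-1},q_s}).
\]

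\textbf{Step 3 (width).} Solving the last inequality gives $|m-\theta^\star|\le\sqrt{(\cdots)/(2Nht)}$ for every surviving grid point. Any two surviving points are therefore within twice this radius of each other, which yields the factor $2$ in the statement.

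\textbf{Main obstacle.} I expect Step 1 to be the delicate part. The conditional variance and the bound $b$ must be handled for the plug-in e-value. Lemma~\ref{lem:bound_for_general_CS_width} is stated with $V_t$ computed at fixed $m$ over a grid that includes the endpoints $0$ and $1$, where the RIPr factor is infinite. I would restrict $\mathcal M_K$ to interior points, or note that the $\vartheta$-assumption implicitly excludes the endpoints, so that $b=2L_\vartheta$ is legitimate. Care is also needed that the conditioning in $\sigma_t^{\mathrm{RIPr}}$ matches $\mathcal F_{s-1}$, as in the lemma.
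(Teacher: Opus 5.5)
Your proposal is correct and follows the paper's proof essentially step for step. You bound $|\zeta_s(m)|\le 2L_\vartheta$ from the $\vartheta$-assumption and invoke Lemma~\ref{lem:bound_for_general_CS_width}. You then lower-bound the accumulated plug-in growth via Lemma~\ref{lem:mismatch_in_ripr} and Lemma~\ref{lem:oracle_ripr_curvature}, using $\sum_i 1/q_s(i)\le N/h$ to get $c_s(q_s)\ge 2Nh$, and take twice the resulting radius. Your two caveats are both consistent with how the paper's argument actually works: excluding the endpoints $m\in\{0,1\}$ from the grid is implicit in the $\vartheta$-assumption, and the conditioning in $\sigma_t^{\mathrm{RIPr}}$ should be read as $\mathcal F_{s-1}$.
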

The proof is as follows.

We apply Lemma~\ref{lem:bound_for_general_CS_width} to derive this result. We define 
\[
\zeta_t(m):=\log E^\mathrm{RIPr}_t(m, \hat p_{t-1}, q_t)-\mathbb{E}_{Q_{p^\star,q_t}}[\log E^\mathrm{RIPr}_t(m, \hat p_{t-1}, q_t)\,|\,\mathcal{F}_{t-1}] \qquad\forall\,t\ge1,
\]
where $E^\mathrm{RIPr}_t(m, \hat p_{t-1}, q_t):=\frac{
(\hat p_{t-1}(I_t))^{Z_{I_t}}(1-\hat p_{t-1}(I_t))^{1-Z_{I_t}}
}{
(p_{q_t}^\dagger(m;\hat p_{t-1})_{I_t})^{Z_{I_t}}(1-p_{q_t}^\dagger(m;\hat p_{t-1})_{I_t})^{1-Z_{I_t}}}$ and 
\[
p_{q_t}^\dagger(m;\hat p_{t-1})\in
\arg\min_{p\in\mathcal P_m}
D_{\mathrm{KL}}\left(Q_{\hat p_{t-1},q_t}\,\Vert\,Q_{p,q_t}\right)=
\arg\min_{p\in\mathcal P_m}
\sum_{i=1}^N q_t(i)d_{\mathrm{kl}}(\hat p_{t-1}(i),p_i).
\]
Then, we proceed to show the corresponding $\sigma_t$ and $b$.

Define
\[
Y_t(m):=\log E_t^{\mathrm{RIPr}}(m,\hat p_{t-1},q_t).
\]
For each $i\in[N]$, let
\[
A_{t,i}(m):=
\log\frac{\hat p_{t-1}(i)}
{\hat p_{q_t}^{\dagger}(m;\hat p_{t-1})_i},
\qquad
B_{t,i}(m):=
\log\frac{1-\hat p_{t-1}(i)}
{1-\hat p_{q_t}^{\dagger}(m;\hat p_{t-1})_i}.
\]
Then
\[
\mathbb E_{Q_{p^\star,q_t}}
[
Y_t(m)\mid\mathcal F_{t-1}
]
=
\sum_{i=1}^N q_t(i)
\left[
p_i^\star A_{t,i}(m)
+
(1-p_i^\star)B_{t,i}(m)
\right].
\]
Moreover,
\[
\mathrm{Var}
(
\zeta_t(m)\mid\mathcal F_{t-1}
)
=
\sum_{i=1}^N q_t(i)
\left[
p_i^\star A_{t,i}(m)^2
+
(1-p_i^\star)B_{t,i}(m)^2
\right]
-
\left(
\mathbb E_{Q_{p^\star,q_t}}
[
Y_t(m)\mid\mathcal F_{t-1}
]
\right)^2=\mathrm{Var}(Y_t(m)\,|\,\mathcal{F}_{t-1}).
\]
Thus, we have
\[
V_t
=
\sup_{m\in\mathcal M_K}
\sum_{s=1}^t
\mathrm{Var}
(
\zeta_s(m)\mid\mathcal F_{s-1}
)=\sup_{m\in\mathcal{M}_K}\sum_{s=1}^{t}\mathrm{Var}(Y_s(m)\,|\,\mathcal{F}_{t-1}),
\qquad
\sigma_t:=\sqrt{V_t}.
\]
By assumption, there exists $\vartheta\in(0,1/2)$ such that $\hat p_t(i),\ \hat p_{q_t}^{\dagger}(m; \hat p_{t-1})_i
\in[\vartheta,1-\vartheta]$
for all $t,i,m$, then with $L_\vartheta:=\log\frac{1-\vartheta}{\vartheta}$,
we have $|\zeta_t(m)|\le 2L_\vartheta$.
Therefore, one can take $b=2L_\vartheta$.

Then, by Lemma~\ref{lem:bound_for_general_CS_width}, with probability at least probability 1-$\delta$, for any $t\in[N]$,
\[
C_t^{\mathrm{RIPr}}\cap\mathcal{M}_k\subseteq C_t^{\mathrm{RIPr}}(\delta):=\left\{
m\in\mathcal M_K:
\sum_{s=1}^{t}
\mathbb{E}_{Q_{p^\star,q_s}}
\left[
\log E^{\mathrm{RIPr}}_s(m)\mid\mathcal F_{s-1}
\right]
-
\mathrm{pen}_t(\delta)
\le
\log\frac{1}{\alpha}
\right\},
\]
with $E^{\mathrm{RIPr}}_s(m)\equiv E^\mathrm{RIPr}_t(m, \hat p_{t-1}, q_t)$ and 
\[
\mathrm{pen}_t(\delta)
:=
2\max\left\{
2\sigma_t,\,
b\sqrt{\log\left(\frac{K\log N}{\delta}\right)}
\right\}
\sqrt{\log\left(\frac{K\log N}{\delta}\right)}=2\max\left\{2\sigma_t,2L_\vartheta\sqrt{\ell_{K,N,\delta}}\right\}\sqrt{\ell_{K,N,\delta}},
\]
where $\ell_{K,N,\delta}:=\log\left(\frac{K\log N}{\delta}\right)$.
Moreover, by Lemma~\ref{lem:oracle_ripr_curvature} and Lemma~\ref{lem:mismatch_in_ripr}, we have 
\begin{align}
\sum_{s=1}^{t}
\mathbb{E}_{Q_{p^\star,q_s}}
\left[
\log E^{\mathrm{RIPr}}_s(m)\mid\mathcal F_{s-1}
\right]&\geq\sum_{s=1}^{t}G_s(q_s, E_s^{\mathrm{RIPr}}(m, p^\star,q_s),p^\star)-D(Q_{p^\star,q_s}\|Q_{\hat p_{s-1}, q_s})\\
&\geq \sum_{s=1}^{t}\frac{2N^2}{\sum_{i=1}^{N}1/q_s(i)}(\theta^\star-m)^2-D(Q_{p^\star,q_s}\|Q_{\hat p_{s-1}, q_s}).
\end{align}
Therefore, by using $q_t(i)\geq h$ for some $h\in(0,1/N)$, we have at least probability $1-\delta$, for any $t\in[N]$,
\begin{align}
    |C_t^{\mathrm{RIPr}}(\delta)|&\leq2\sqrt{\frac{2\max\left\{2\sigma_t,2L_\vartheta\sqrt{\ell_{K,N,\delta}}\right\}\sqrt{\ell_{K,N,\delta}}+\sum_{s=1}^{t}D(Q_{p^\star,q_s}\|Q_{\hat p_{s-1}, q_s})+\log(1/\alpha)}{t2Nh}}.
\end{align}
Note that $hN\in(0,1)$ is a constant. Proof completes. 
\begin{remark}
    This bound may be too coarse since we have not controlled the $\sigma_t^{\mathrm{RIPr}}$ and the mismatch term. We will revise this issue in our updated version.
\end{remark}

\subsection{Proof of Theorem~\ref{thm:mismatch_impact_for_bet}}\label{proof:proof_of_mismatch_impact_for_bet}
We start by providing the formal statement of the theorem.
\begin{theorem}[(Formal) Approximated width of testing-by-betting approach]
For any $t\ge1$, suppose the querying rule satisfies $q_t(i)\geq h$ for some $h\in(0,1/N]$ for any $i\in[N]$, and the bet $\lambda_t(m)$ is chosen in $\frac{1}{2}\Lambda_h(m)$ for all $m\in\mathcal{M}_K$, and the optimized bet for \eqref{eq:exp_quadratic_lower_bound} is obtained and still in $\frac{1}{2}\Lambda_h(m)$. With probability at least probability $1-\delta$, for any $t\in[N]$,  
\begin{align}
    |C_t^{\mathrm{bet}}\cap \mathcal{M}_{K}|\leq8\sqrt{\frac{\log(1/\alpha)+\max\{\sigma^\mathrm{bet}_t,\sqrt{\ell_{K,N,\delta}}\}\sqrt{\ell_{K,N,\delta}}\left(2+\overline{\mathrm{MSE}}_t\right)}{tNh}}.
\end{align}
where $\mathcal{M}_K$ is the set of $K$ grids, $\sigma^\mathrm{bet}_t:=\sqrt{\sup_{m\in\mathcal{M}_K}\sum_{s=1}^{t}\mathrm{Var}(\log E_t^{\mathrm{bet}}(\cdot)\,|\,\mathcal{F}_{t-1})}$, $\ell_{K,N,\delta}:=\log\left(\frac{K\log N}{\delta}\right)$, and $\overline{\mathrm{MSE}}_t:=\frac{1}{t}\sum_{s=1}^{t}\frac{1}{N}\sum_{i=1}^{N}(p^\star(i)-\hat p_{s-1}(i))^2$.
\end{theorem}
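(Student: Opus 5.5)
The proof parallels the RIPr case. We invoke Lemma~\ref{lem:bound_for_general_CS_width} with $E_s(m)=E_s^{\mathrm{bet}}(m,\lambda_s(m),\hat p_{s-1},q_s)$. The restriction $\lambda_s(m)\in\frac12\Lambda_h(m)$ gives $\lambda_s(m)(\Phi_s-m)\ge-\frac12$. Because $q_s(i)\ge h$, we also have $|\Phi_s-m|\lesssim 1/(Nh)$, so $\lambda_s(m)(\Phi_s-m)\le$ a constant. Hence $\log E_s^{\mathrm{bet}}$ is bounded by an absolute constant, and $b$ can be taken as an absolute constant. This yields a penalty of the form $\mathrm{pen}_t(\delta)\lesssim\max\{\sigma_t^{\mathrm{bet}},\sqrt{\ell_{K,N,\delta}}\}\sqrt{\ell_{K,N,\delta}}$. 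Consequently, on an event of probability at least $1-\delta$, every $m\in C_t^{\mathrm{bet}}\cap\mathcal M_K$ satisfies
\[
\sum_{s=1}^t \mathbb E_{Q_{p^\star,q_s}}[\log E_s^{\mathrm{bet}}(m)\mid\mathcal F_{s-1}]\le \log(1/\alpha)+\mathrm{pen}_t(\delta).
\]

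Next we lower bound the expected log-increment. We use $\log(1+x)\ge x-x^2$ for $x\ge-\frac12$. By Proposition~\ref{prop:test_by_bet_validity}, $\Phi_s(\hat p_{s-1},q_s)$ is conditionally unbiased for $\theta^\star$ under $Q_{p^\star,q_s}$, so
\[
\mathbb E[\log E_s]\ge\lambda_s(\theta^\star-m)-\lambda_s^2\bigl[(\theta^\star-m)^2+v_s\bigr],
\qquad
v_s:=\mathrm{Var}(\Phi_s\mid\mathcal F_{s-1}).
\]
The variance satisfies
\[
v_s\le\frac1{N^2}\sum_i\frac{\mathbb E(Z_i-\hat p_{s-1}(i))^2}{q_s(i)}\le\frac{1}{N^2h}\sum_i\bigl[p_i^\star(1-p_i^\star)+(p_i^\star-\hat p_{s-1}(i))^2\bigr]\le\frac{1}{Nh}\Bigl(\tfrac14+\mathrm{MSE}_s\Bigr).
\]
Under the standing assumption that the bet is the unconstrained maximizer of the quadratic bound (Proposition~\ref{thm:quadratic_sampling_rule_and_bet}, with $\hat p_{s-1}$ in place of $p^\star$, interpreted through the oracle regularity condition), the per-step growth is at least
\[
\frac{(\theta^\star-m)^2}{4[(\theta^\star-m)^2+v_s]}\ge\frac{(\theta^\star-m)^2}{4[1+v_s]}.
\]
Since $Nh\le1$, we have $1+v_s\le\frac{1}{Nh}(2+\mathrm{MSE}_s)$. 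This gives growth at least $\frac{Nh(\theta^\star-m)^2}{4(2+\mathrm{MSE}_s)}$.

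Finally we sum over $s$. The denominator varies with $s$, so we apply Jensen's inequality (convexity of $x\mapsto 1/x$), or the crude bound $\sum_s 1/(2+\mathrm{MSE}_s)\ge t^2/\sum_s(2+\mathrm{MSE}_s)=t/(2+\overline{\mathrm{MSE}}_t)$. This yields
\[
\frac{tNh(\theta^\star-m)^2}{4(2+\overline{\mathrm{MSE}}_t)}\le\log(1/\alpha)+\mathrm{pen}_t(\delta).
\]
Rearranging bounds $|\theta^\star-m|$. The factor $2$ from the two sides of $\theta^\star$ and the absolute constants combine into the $8$ of the statement. We also use $\log(1/\alpha)\le\log(1/\alpha)(2+\overline{\mathrm{MSE}}_t)$ to put the numerator into the stated form.

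The main obstacle is the middle step. The paper's bet is computed with $\hat p_{s-1}$, not $p^\star$, so it is not exactly the maximizer under the true distribution. I would handle this by noting that the $\hat p$-based optimal bet has the form $\frac{\theta^\star-m}{2[(\theta^\star-m)^2+\hat v_s]}$ with a predicted variance $\hat v_s$. The hypothesis ``optimized bet is obtained'' is read as the oracle regularity condition: the bet's sign and scale match the true one up to using $\hat v_s$ in place of $v_s$. Plugging this bet into the true quadratic bound and bounding both $\hat v_s$ and $v_s$ by $\frac{1}{Nh}(\frac14+\mathrm{MSE}_s)$-type quantities should still give the same order. If this cannot be made rigorous, I would fall back to stating the theorem under the oracle-bet assumption, as the informal version suggests.
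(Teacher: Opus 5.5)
Your route is the paper's. You invoke Lemma~\ref{lem:bound_for_general_CS_width} with a constant $b$ obtained from $\lambda_s(m)\in\frac12\Lambda_h(m)$. You then use $\log(1+x)\ge x-x^2$ together with conditional unbiasedness of $\Phi_s(\hat p_{s-1},q_s)$, the second-moment bound $(\theta^\star-m)^2+\frac{1}{Nh}(c+\mathrm{MSE}_s)$, the optimized-bet value $\frac{(\theta^\star-m)^2}{4[\cdots]}$, and a Jensen/harmonic-mean step to pass to $\overline{\mathrm{MSE}}_t$. You apply $Nh\le1$ before averaging and the paper after; this is immaterial.

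Your fallback is exactly the paper's hypothesis. The ``optimized bet for \eqref{eq:exp_quadratic_lower_bound}'' is the oracle bet, built from $\theta^\star$ and $p^\star$, so nothing further needs proving there. The alternative you sketch would not go through as stated. A bet optimized under $Q_{\hat p_{s-1},q_s}$ is centered at $\frac1N\sum_i\hat p_{s-1}(i)-m$, not at $\theta^\star-m$. Even with the right center, take $a=\theta^\star-m$ and $\lambda=\frac{a}{2(a^2+\hat v_s)}$; then $\lambda a-\lambda^2(a^2+v_s)=\frac{a^2}{4(a^2+\hat v_s)}\bigl(2-\frac{a^2+v_s}{a^2+\hat v_s}\bigr)$. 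This is negative once $v_s>a^2+2\hat v_s$, and bounding $v_s$ and $\hat v_s$ above by the same quantity does not prevent it.

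The one step that is actually wrong is the last one. Your chain gives
\[
|C_t^{\mathrm{bet}}\cap\mathcal M_K|\le 4\sqrt{\frac{(2+\overline{\mathrm{MSE}}_t)\bigl(\log(1/\alpha)+\mathrm{pen}_t(\delta)\bigr)}{tNh}}.
\]
To reach the stated numerator you need the \emph{upper} bound $(2+\overline{\mathrm{MSE}}_t)\log(1/\alpha)\le4\log(1/\alpha)$. The inequality $\log(1/\alpha)\le\log(1/\alpha)(2+\overline{\mathrm{MSE}}_t)$ that you cite points the other way. The fix is that $p^\star,\hat p_{s-1}\in[0,1]^N$ gives $\overline{\mathrm{MSE}}_t\le1$.

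Likewise, ``absolute constants combine into $8$'' only works if $\mathrm{pen}_t(\delta)\le4\max\{\sigma_t^{\mathrm{bet}},\sqrt{\ell_{K,N,\delta}}\}\sqrt{\ell_{K,N,\delta}}$, i.e.\ $b\le2$. Make the constant explicit: $\lambda_s(m)(\Phi_s-m)\in[-\tfrac12,\tfrac12(1+Nh)]\subseteq[-\tfrac12,1]$, so $|\log E_s^{\mathrm{bet}}(m)|\le\log2$ and $|\zeta_s(m)|\le2\log2\le2$. With these two facts, the $4\sqrt{\cdot}$ bound above is at most $8\sqrt{\bigl(\log(1/\alpha)+\max\{\sigma_t^{\mathrm{bet}},\sqrt{\ell_{K,N,\delta}}\}\sqrt{\ell_{K,N,\delta}}(2+\overline{\mathrm{MSE}}_t)\bigr)/(tNh)}$, as claimed.
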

Then, the proof is as follows.

Similar to the proof of Theorem~\ref{thm:mismatch_impact_for_RIPr}, we focus on deriving the $\sigma_t$ and $b$ under the betting-based CS.

We also define 
\[
\zeta_t(m):=Y_t(m)-\mathbb{E}_{Q_{p^\star,q_t}}[Y_t\,|\,\mathcal{F}_{t-1}] \qquad\forall\,t\ge1,
\]
where $Y_t(m)
:=
\log E_t^{\mathrm{bet}}(m)
=
\log\!\left(
1+\lambda_t(m)\bigl(\Phi_t(\hat p_{t-1},q_t)-m\bigr)
\right)$,
with predictable bets $\lambda_t(m)$ and a predictable querying rule $\{q_t\}_{t\ge1}$. 

Recall that $\Phi_t(\hat p_{t-1},q_t)
=
\frac{1}{N}\sum_{i=1}^{N}\hat p_{t-1}(i)
+
\frac{1}{N}\frac{Z_{I_t}-\hat p_{t-1}(I_t)}{q_t(I_t)}$. By $\hat p_{t-1}(i)\in[0,1]$ and $q_t(i)>h$ for all $i\in[N]$ and $t\ge1$, we know $-m-\frac{1}{Nh}\leq\Phi_t(\hat p_{t-1},q_t)-m\leq 1-m+\frac{1}{Nh}$ for all $t\ge1$.

Then, for all
$\lambda_t(m)\in \frac12\Lambda_h(m):=\left[\frac{-1/2}{1-m+1/(Nh)}, \frac{1/2}{m+1/(Nh)}\right]$, we have
\[
1+\lambda_t(m)\bigl(\Phi_t(\hat p_t)-m\bigr)\ge 1-\frac12=\frac12.
\]
Moreover, we have
\[
\left|
\lambda_t(m)\bigl(\Phi_t(\hat p_t)-m\bigr)
\right|
\le
\frac12(1+Nh)\leq1,
\]
where the last inequality is from $h\in(0,1/N)$.
Hence, $|Y_t(m)|\le\log2$. Then,
\begin{align}
|\zeta_t(m)|
&=
\left|
Y_t(m)
-
\mathbb E\!\left[Y_t(m)\mid\mathcal F_{t-1}\right]
\right| \\
&\le
|Y_t(m)|
+
\mathbb E\!\left[|Y_t(m)|\mid\mathcal F_{t-1}\right] \\
&\le
2\log 2.
\end{align}
Therefore, in Freedman's inequality, we take $b=2$.
Moreover, for each $m\in\mathcal M_K$,
\[
V_t(m)
:=
\sum_{s=1}^t
\mathrm{Var}(\zeta_s(m)\mid\mathcal F_{s-1})
=\sum_{s=1}^{t}\mathrm{Var}(\log E_t^{\mathrm{bet}}(\cdot)\,|\,\mathcal F_{t-1})\qquad V_t:=\sup_{m\in\mathcal M_{K}}V_t(m)
\]
We define $\sigma^{\mathrm{bet}}_t:=\sqrt{V_t}.$

Applying Lemma~\ref{lem:bound_for_general_CS_width} on the finite grid
$\mathcal M_K$, with probability at least $1-\delta$, for all
$t\le N$,
\[
C_t^{\mathrm{bet}}\cap\mathcal M_K
\subseteq
\left\{
m\in\mathcal M_K:
\sum_{s=1}^t \mathbb{E}_{Q_{p^\star,q_s}}[\log E^{\mathrm{bet}}_s(m)\,|\,\mathcal{F}_{s-1}]
-
\mathrm{pen}_t(\delta)
\le
\log\frac1\alpha
\right\},
\]
where, using $b=2$,
\[
\mathrm{pen}_t(\delta)
=
2\max\left\{
2\sigma_t,\,
2\sqrt{\ell_{K,N,\delta}}
\right\}
\sqrt{\ell_{K,N,\delta}}=4\max\left\{
\sigma_t,\,
\sqrt{\ell_{K,N,\delta}}
\right\}
\sqrt{\ell_{K,N,\delta}},
\]
where $\ell_{K,N,\delta}=\log\left(\frac{K\log N}{\delta}\right)$.

It remains to lower-bound the expected log-increment. Let
\[
X_t(m):=\Phi_t(\hat p_t)-m .
\]
We have known that, for all $m\in\mathcal M_K$,
\[
\lambda_t(m)X_t(m)\ge -\frac12 .
\]
Therefore, using the inequality $\log(1+x)\ge x-x^2$ for all $x\ge -1/2$, we obtain
\begin{align}
\mathbb{E}_{Q_{p^\star,q_t}}
\!\left[
\log E_t^{\mathrm{bet}}(\cdot)
\,\middle|\,
\mathcal F_{t-1}
\right] \ge
\lambda_t(m)
\mathbb{E}_{Q_{p^\star,q_t}}
\!\left[
X_t(m)\mid\mathcal F_{t-1}
\right]
-
\lambda_t^2(m)
\mathbb{E}_{Q_{p^\star,q_t}}
\!\left[
X_t^2(m)\mid\mathcal F_{t-1}
\right].
\end{align}
For the first term, since $\hat p_{t-1}$ and $q_t$ are $\mathcal{F}_{t-1}$-measurable, and
by the conditional unbiasedness, $\mathbb{E}_{Q_{p^\star,q_t}}
\!\left[
X_t(m)\mid\mathcal F_{t-1}
\right]
=
\theta^\star-m $. Then, we have 
\[
\mathbb{E}_{Q_{p^\star,q_t}}
\!\left[
\log E_t^{\mathrm{bet}}(\cdot)
\,\middle|\,
\mathcal F_{t-1}
\right] \geq \lambda_t(m)(\theta^\star-m)-\lambda_t^2(m)\mathbb{E}_{Q_{p^\star,q_t}}
\!\left[
X_t^2(m)\mid\mathcal F_{t-1}
\right].
\]

For the second moment, we have
\begin{align}
\mathbb{E}_{Q_{p^\star,q_t}}
\!\left[
X_t^2(m)\mid\mathcal F_{t-1}
\right] &=
(\theta^\star-m)^2
+
\mathrm{Var}_{Q_{p^\star,q_t}}
\left(
\Phi_t(\hat p_{t-1},q_t)\mid\mathcal F_{t-1}
\right)\\
&\le
(\theta^\star-m)^2
+
\frac1{N^2}
\sum_{i=1}^N
\frac{
p_i^\star(1-p_i^\star)
+
(p_i^\star-\hat p_{t-1}(i))^2
}{q_t(i)} .
\end{align}
By the assumption $q_t(i)>h$ for all $i\in[N]$ and $p_i^\star(1-p_i^\star)\le 1$,
\[
\begin{aligned}
\mathbb{E}_{Q_{p^\star,q_t}}
\!\left[
X_t^2(m)\mid\mathcal F_{t-1}
\right]
&\le
(\theta^\star-m)^2
+
\frac{1}{Nh}
+
\frac{1}{N^2h}
\sum_{i=1}^N
(p_i^\star-\hat p_{t-1}(i))^2 .
\end{aligned}
\]
Define $\mathrm{MSE}_t
:=
\frac1N\sum_{i=1}^N
(p_i^\star-\hat p_{t-1}(i))^2$.
Combining the above inequalities gives
\begin{align}
&\mathbb{E}_{Q_{p^\star,q_t}}
\left[
\log E_t^{\mathrm{bet}}(\cdot)
\,\middle|\,
\mathcal F_{t-1}
\right]\ge
\lambda_t(m)(\theta^\star-m)
-
\left(\lambda_t(m)\right)^2
\left[
(\theta^\star-m)^2
+
\frac{1}{Nh}
+
\frac{\mathrm{MSE}_t}{Nh}
\right].
\end{align}

This inequality holds for all $\lambda_t(m)\in\frac{1}{2}\Lambda_h(m)$. Then, since we consider using the optimized bet for \eqref{eq:exp_quadratic_lower_bound} and assume this bet is still in $\frac{1}{2}\Lambda_h(m)$ for each $m\in\mathcal M_{K}$, we have
\[
\mathbb{E}_{Q_{p^\star,q_t}}
\left[
\log E_t^{\mathrm{bet}}(\cdot)
\,\middle|\,
\mathcal F_{t-1}
\right]
\ge
\frac{
(\theta^\star-m)^2
}{
4\left[
(\theta^\star-m)^2
+
\frac{1}{Nh}
+
\frac{\mathrm{MSE}_t}{Nh}
\right]
}\geq \frac{
(\theta^\star-m)^2
}{
4\left[
1
+
\frac{1}{Nh}
+
\frac{\mathrm{MSE}_t}{Nh}
\right]
}.
\]
Let us further simplify this lower bound. By the fact that $f(x):=\left(1+\frac{1+x}{Nh}\right)^{-1}=\frac{Nh}{Nh+1+x}$ is a convex function for $x\ge0$, we have $\frac{1}{t}\sum_{s=1}^{t}f(x_s)\geq f(\frac{1}{t}\sum_{s=1}^{t}x_s)$ by Jensen's inequality. Therefore, 
\[
\frac{1}{t}\sum_{s=1}^{t}\mathbb{E}_{Q_{p^\star,q_s}}
\left[
\log E_s^{\mathrm{bet}}(\cdot)
\,\middle|\,
\mathcal F_{s-1}
\right]\ge\frac{1}{4}(\theta^\star-m)^2\frac{Nh}{Nh+1+\overline{\mathrm{MSE}}_t}\geq\frac{1}{4}(\theta^\star-m)^2\frac{Nh}{2+\overline{\mathrm{MSE}}_t},
\]
where $\overline{\mathrm{MSE}}_t:=\frac{1}{t}\sum_{s=1}^{t}\mathrm{MSE}_s$ and the last inequality is from $Nh\leq1$.

Therefore, by Lemma~\ref{lem:bound_for_general_CS_width} and $\sigma_t$ and $b$ defined in this proof, we have
\[
|C_t^{\mathrm{bet}}\cap \mathcal{M}_{K}|\leq8\sqrt{\frac{\log(1/\alpha)+\max\{\sigma^{\mathrm{bet}}_t,\sqrt{\ell_{K,N,\delta}}\}\sqrt{\ell_{K,N,\delta}}\left(2+\overline{\mathrm{MSE}}_t\right)}{tNh}}.
\]
Proof completes.
\begin{remark}
    Again, this bound may be too coarse since we have not controlled the $\sigma_t^{\mathrm{bet}}$. We will revise this issue in our updated version.
\end{remark}

\end{document}